\def\eqref#1{equation~\ref{#1}}
\def\1{\bm{1}}
\DeclareMathAlphabet{\mathsfit}{\encodingdefault}{\sfdefault}{m}{sl}
\SetMathAlphabet{\mathsfit}{bold}{\encodingdefault}{\sfdefault}{bx}{n}
\DeclareMathOperator*{\argmax}{arg\,max}
\DeclareMathOperator{\sign}{sign}
\crefname{equation}{}{}
\newcommand{\lambdamax}[1]{\ensuremath{\lambda_{max}}}
\newcommand{\real}{\ensuremath{\mathbb{R}}}
\newlength{\widebarargwidth}
\newlength{\widebarargheight}
\newlength{\widebarargdepth}
\DeclareMathOperator{\trace}{trace}
\newtheorem{theo}{Theorem}[section]
\newtheorem{lem}{Lemma}[section]
\newtheorem{prop}{Proposition}[section]
\newtheorem{cor}{Corollary}[section]
\newtheorem{remark}{Remark}[section]
\theoremstyle{definition} 
\newtheorem{nota}{Notation}[section]
\newtheorem{de}{Definition}[section]
\newtheorem{exa}{Example}[section]
\newtheorem{as}{Assumption}[section]
\newtheorem{alg}{Algorithm}[section]
\newcommand{\btheo}{\begin{theo}}
\newcommand{\bde}{\begin{de}}
\newcommand{\ble}{\begin{lem}}
\newcommand{\bpr}{\begin{prop}}
\newcommand{\bno}{\begin{nota}}
\newcommand{\bex}{\begin{exa}}
\newcommand{\bcor}{\begin{cor}}
\newcommand{\spro}{\begin{proof}}
\newcommand{\bas}{\begin{as}}
\newcommand{\balg}{\begin{alg}}
\newcommand{\bremark}{\begin{remark}}
\newcommand{\etheo}{\end{theo}}
\newcommand{\ede}{\end{de}}
\newcommand{\ele}{\end{lem}}
\newcommand{\epr}{\end{prop}}
\newcommand{\eno}{\end{nota}}
\newcommand{\eex}{\end{exa}}
\newcommand{\ecor}{\end{cor}}
\newcommand{\fpro}{\end{proof}}
\newcommand{\eas}{\end{as}}
\newcommand{\ealg}{\end{alg}}
\newcommand{\eremark}{\end{remark}}
\theoremstyle{plain}
\newtheorem{theos}{Theorem}
\newtheorem{props}{Proposition}
\newtheorem{lems}{Lemma}
\newtheorem{cors}{Corollary}
\newtheorem{rems}{Remark}
\theoremstyle{definition}
\newtheorem{exas}{Example}
\newtheorem{algs}{Algorithm}
\newtheorem{asss}{Assumption}
\newtheorem{defns}{Definition}
\newcommand{\btheos}{\begin{theos}}
\newcommand{\etheos}{\end{theos}}
\newcommand{\brems}{\begin{rems}}
\newcommand{\erems}{\end{rems}}
\newcommand{\bprops}{\begin{props}}
\newcommand{\eprops}{\end{props}}
\newcommand{\bdes}{\begin{defns}}
\newcommand{\edes}{\end{defns}}
\newcommand{\blems}{\begin{lems}}
\newcommand{\elems}{\end{lems}}
\newcommand{\bcors}{\begin{cors}}
\newcommand{\ecors}{\end{cors}}
\newcommand{\bexs}{\begin{exas}}
\newcommand{\eexs}{\end{exas}}
\newcommand{\balgs}{\begin{algs}}
\newcommand{\ealgs}{\end{algs}}
\newcommand{\bass}{\begin{asss}}
\newcommand{\eass}{\end{asss}}
\newcommand{\bit}{\begin{itemize}}
\newcommand{\eit}{\end{itemize}}
\newcommand{\data}{{\bf X}}
\newcommand{\sample}{\vec{x}}
\newcommand{\weightscalar}{w}
\newcommand{\weight}{\vec{w}}
\newcommand{\cvx}{\vec{c}}
\newcommand{\cvxmat}{\vec{C}}
\newcommand{\labelvec}{\vec{y}}
\newcommand{\labelmat}{\vec{Y}}
\newcommand{\firstw}{\vec{u}}
\newcommand{\firstwmat}{\vec{U}}
\newcommand{\dual}{\vec{v}}
\newcommand{\dualmat}{\vec{V}}
\newcommand{\relu}[1]{\left( #1 \right)_+}
\newcommand{\ball}{\mathcal{B}}
\newcommand{\diag}{\vec{D}}
\let\vec\mathbf
\title{Implicit Convex Regularizers of CNN Architectures: Convex Optimization of Two- and Three-Layer Networks in Polynomial Time}
\author{Tolga Ergen \& Mert Pilanci %\thanks{ Use footnote for providing further information
%about author (webpage, alternative address)---\emph{not} for acknowledging
%funding agencies.  Funding acknowledgements go at the end of the paper.} 
\\
Department of Electrical Engineering\\
Stanford University\\
Stanford, CA 94305, USA \\
\texttt{\{ergen,pilanci\}@stanford.edu} \\

}
\date{}
\begin{document}
\doparttoc % Tell to minitoc to generate a toc for the parts
\faketableofcontents % Run a fake tableofcontents command for the partocs

%\part{} % Start the document part
%\parttoc % Insert the document TOC

\maketitle
\begin{abstract}
We study training of Convolutional Neural Networks (CNNs) with ReLU activations and introduce exact convex optimization formulations with a polynomial complexity with respect to the number of data samples, the number of neurons, and data dimension. More specifically, we develop a convex analytic framework utilizing semi-infinite duality to obtain equivalent convex optimization problems for several two- and three-layer CNN architectures. We first prove that two-layer CNNs can be globally optimized via an $\ell_2$ norm regularized convex program. We then show that multi-layer circular CNN training problems with a single ReLU layer are equivalent to an $\ell_1$ regularized convex program that encourages sparsity in the spectral domain. We also extend these results to three-layer CNNs with two ReLU layers. Furthermore, we present extensions of our approach to different pooling methods, which elucidates the implicit architectural bias as convex regularizers.

\end{abstract}

\section{Introduction}\vskip -0.1in
Convolutional Neural Networks (CNNs) have shown a remarkable success across various machine learning problems \citep{lecun2015deep}. However, our theoretical understanding of CNNs still remains restricted, where the main challenge arises from the highly non-convex and nonlinear structure of CNNs with nonlinear activations such as ReLU. Hence, we study the training problem for various CNN architectures with ReLU activations and introduce equivalent finite dimensional convex formulations that can be used to \emph{globally} optimize these architectures. Our results characterize the role of network architecture in terms of \emph{equivalent convex regularizers}. Remarkably, we prove that the proposed methods are \emph{polynomial time} with respect to all problem parameters.

%%%%%%%%%%%%%%%%%%%%%%%% new subsection %%%%%%%%%%%%%%%%%%%%%%%%%%%%%%%
%\subsection{Related work}
Convex neural network training was previously considered in \citet{bengio2006convex,bach2017breaking}. However, these studies are restricted to two-layer fully connected networks with infinite width, thus, the optimization problem involves infinite dimensional variables. Moreover, it has been shown that even adding a single neuron to a neural network leads to a non-convex optimization problem which cannot be solved efficiently \citep{bach2017breaking}. Another line of research in \citet{parhi_minimum,ergen2019cutting, ergen2020cnn, ergen2021deepduality, ergen_convex,ergen_convex2,pilanci2020neural, infinite_width,gunasaker_cnn,quantized_hartmut,implicit_reg_blanc,understanding_zhang} focuses on the effect of implicit and explicit regularization in neural network training and aims to explain why the resulting network generalizes well. Among these studies, \citet{parhi_minimum,ergen2021deepduality, ergen_convex,ergen_convex2,infinite_width} proved that the minimum $\ell_2$ norm two-layer network that perfectly fits a one dimensional dataset outputs the linear spline interpolation. Moreover, \citet{gunasaker_cnn}  studied certain linear convolutional networks and revealed an implicit non-convex quasi-norm regularization. However, as the number of layers increases, the regularization approaches to $\ell_0$ quasi-norm, which is not computationally tractable. Recently, \citet{pilanci2020neural} showed that two-layer CNNs with linear activations can be equivalently optimized as nuclear and $\ell_1$ norm regularized convex problems. 
Although all the norm characterizations provided by these studies are insightful for future research, existing results are quite restricted due to linear activations, simple settings or intractable problems.

%textbf{MP: Table'a 3 layer ReLU'yu ekleyebiliriz belki?}
{\bf Shallow CNNs and their representational power: }As opposed to their relatively simple and shallow architecture, CNNs with two/three layers are very powerful and efficient models. \citet{belilovsky2019greedy} show that greedy training of two/three layer CNNs can achieve comparable performance to deeper models, e.g., VGG-11\citep{simonyan2014very}. However, a full theoretical understanding and interpretable description of CNNs even with a single hidden layer is lacking in the literature.

%%%%%%%%%%%%%%%%%%%%%%%% new subsection %%%%%%%%%%%%%%%%%%%%%%%%%%%%%%%
%}Unlike previous studies, we introduce exact and finite dimensional convex programs to globally optimize various CNN architectures through our convex analytic framework utilizing semi-infinite duality.
\textbf{Our contributions:}  Our contributions can be summarized as follows: 
\begin{itemize}[leftmargin=*]
\item We develop convex programs that are \emph{polynomial time} with respect to all input parameters: the number of samples, data dimension, and the number of neurons to globally train CNNs.
To the best of our knowledge, this is the first work characterizing polynomial time trainability of non-convex CNN models. More importantly, we achieve this complexity with explicit and interpretable convex optimization problems. Consequently, training CNNs, especially in practice, can be further accelerated by leveraging extensive tools available from convex optimization theory. 
\item Our work reveals a hidden regularization mechanism behind CNNs and characterizes how the architecture and pooling strategies, e.g., max-pooling, average pooling, and flattening, dramatically alter the regularizer. As we show, ranging from $\ell_1$ and $\ell_2$ norm to nuclear norm (see Table \ref{tab:regularization} for details), ReLU CNNs exhibit an extremely rich and elegant regularization structure which is implicitly enforced by architectural choices. In convex optimization and signal processing, $\ell_1$, $\ell_2$ and nuclear norm regularizations are well studied, where these structures have been applied in compressed sensing, inverse problems, and matrix completion. Our results bring light to unexplored and promising connections of ReLU CNNs with these established disciplines. 
\end{itemize}

\begin{table}
  \caption{CNN architectures and the corresponding norm regularization in our convex programs}\vskip -0.1in
  \label{tab:regularization}
  \centering
  \resizebox{\columnwidth}{!}{%
  \begin{tabular}{c|c|c|c|c|c|c}
    \toprule
    & 2-layer \eqref{eq:twolayer_final_theorem} & 2-layer  \eqref{eq:twolayer_max_final_theorem}& 2-layer \eqref{eq:twolayer_linear_final}  \footnote{\label{footnote1}The results on two-layer CNNs are presented in Appendix \ref{sec:twolayer_linear}.}& 2-layer \eqref{eq:twolayer_linear_circular_final}  \cref{footnote1} & 3-layer \eqref{eq:threelayer_final_theorem}     & $L$-layer \eqref{eq:circular_final_theorem}  \footnote{This refers to an $L$-layer network with only one ReLU layer and circular convolutions.}          \\
    \cmidrule(r){1-7}
    Architecture  &$\sum_{j,k} \relu{\data_k \firstw_j } \weightscalar_{j}$ & $\sum_{j} \text{maxpool}\left(\{\relu{\data_k \firstw_j }\} \right) \weightscalar_{j}$ & $\sum_{j,k} \data_k \firstw_j \weightscalar_{jk}$ & $\sum_{j} \relu{\sum_k\relu{\data_k \firstw_j} \weight_{1jk}} \weightscalar_{2j}$ & $\sum_{j}\data \firstwmat_j \weight_{j}  $ & $\sum_{j} \relu{\data \prod_{l}\firstwmat_{lj}  \weight_{1j}} \weightscalar_{2j} $ \\
    Implicit Regularization & $ \sum \|\cdot \|_2 $ & $\sum \| \cdot  \|_2 $ & $\|\cdot\|_{*}$ (nuclear norm) & $\|\cdot\|_1 $ & $ \sum \|\cdot  \|_F$& $ \sum \|\cdot  \|_1$  \\
    \bottomrule
  \end{tabular}%
  }\vskip -0.2in
\end{table}
%

%%%%%%%%%%%%%%%%%%%%%%%% new subsection %%%%%%%%%%%%%%%%%%%%%%%%%%%%%%%
\vskip -0.1in
\textbf{Notation and preliminaries: }We denote matrices/vectors as uppercase/lowercase bold letters, for which a subscript indicates a certain element/column. We use $\vec{I}_k$ for the identity matrix of size $k$. We denote the set of integers from $1$ to $n$ as $[n]$. Moreover, $\|\cdot \|_{F}$ and $\| \cdot\|_{*}$ are Frobenius and nuclear norms and $\ball_p:=\{\vec{u} \in \mathbb{C}^d:\|\vec{u}\|_p\le 1\}$ is the unit $\ell_p$ ball. We also use $\mathbbm{1}[x\geq0]$ as an indicator.

To keep the presentation simple, we will use a regression framework with scalar outputs and squared loss. However, we also note that all of our results can be extended to vector outputs and arbitrary convex regression and classification loss functions. We present these extensions in Appendix. In our regression framework, we denote the input data matrix and the corresponding label vector as $\data \in \mathbb{R}^{n\times d}$ and $\labelvec \in \mathbb{R}^n$, respectively. Moreover, we represent the patch matrices, i.e., subsets of columns, extracted from $\data$ as $\data_k \in \real^{n\times h},\,k\in [K]$, where $h$ denotes the filter size. With this notation, $\{\data_k \firstw\}_{k=1}^K$ describes a convolution operation between the filter $\firstw \in \mathbb{R}^h$ and the data matrix $\data$. Throughout the paper, we will use the ReLU activation function defined as $\relu{x}=\max\{0,x\}$. However, since CNN training problems with ReLUs are not convex in their conventional form, below we introduce an alternative formulation for this activation, which will be crucial for our derivations.

%%%%%%%%%%%%%%%%%%%%%%%% new subsection %%%%%%%%%%%%%%%%%%%%%%%%%%%%%%%
\textbf{Prior Work \citep{pilanci2020neural}:}
Recently, \citet{pilanci2020neural} introduced an exact convex formulation for training two-layer fully connected ReLU networks in polynomial time for training data $\data \in \mathbb{R}^{n \times d}$ of constant rank, where the model is a standard two-layer scalar output network $f_{\theta}(\data):= \sum_{j=1}^m \relu{\data \firstw_j} \alpha_j$. However, this model has three main limitations. First, as noted by the authors, even though the algorithm is polynomial time, i.e., $\mathcal{O}(n^r)$, provided that $r:=\mathrm{rank}(\data)$, the complexity is exponential in $r=d$, i.e., $\mathcal{O}(n^d)$, if $\data$ is full rank. Additionally, as a direct consequence of their model, the analysis is limited to fully connected architectures. Although they briefly analyzed some CNN architectures in Section 4, as emphasized by the authors, these are either fully linear (without ReLU) or separable over the patch index $k$ as fully connected models, which do not correspond to weight sharing in classical CNN architectures in practice. Finally, their analysis does not extend to three-layer architectures with two ReLU layers since the analysis of two ReLU layers is significantly more challenging. On the contrary, we prove that classical CNN architectures can be globally optimized by standard convex solvers in polynomial time independent of the rank (see Table \ref{tab:complexity}). More importantly, we extend this analysis to three-layer CNNs with two ReLU layers to achieve polynomial time convex training as proven in Theorem \ref{theo:threelayer_main_theorem}.

%%%%%%%%%%%%%%%%%%%%%%%% new subsection %%%%%%%%%%%%%%%%%%%%%%%%%%%%%%%
\subsection{Hyperplane arrangements}
\vskip -0.1in
Let $\mathcal{H}$ be the set of all hyperplane arrangement patterns of $\data$, defined as the following set
\begin{align*}
\mathcal{H} := \bigcup \big \{ \{\sign(\data \weight)\} \,:\, \weight \in \real^d \big \},
\end{align*}
which has finitely many elements, i.e., $\vert \mathcal{H} \vert \le N_H<\infty$, $N_H \in \mathbb{N}$. We now define a collection of sets that correspond to positive signs for each element in $\mathcal{H}$, by $
\mathcal{S} := \big\{ \{ \cup_{h_i=1} \{i\}  \} \,:\, \vec{h} \in \mathcal{H} \big\}$.
We first note that ReLU is an elementwise function that masks the negative entries of a vector or matrix. Hence, given a set $S \in \mathcal{S}$, we define a diagonal mask matrix $\diag(S)\in \real^{n\times n}$ defined as $\diag(S)_{ii}:=\mathbbm{1}[ i \in S]$. 
Then, we have an alternative representation for ReLU as $\relu{\data \weight}=\diag(S)\data\weight$ given $\diag(S)\data\weight\geq 0$ and $\left(\vec{I}_n-\diag(S)\right)\data\weight \leq 0 $. Note that these constraints can be compactly defined as $\left(2\diag(S)-\vec{I}_n\right)\data \weight \geq 0$. If we denote the cardinality of $\mathcal{S}$ as $P$, i.e., the number of regions in a partition of $\real^d$ by hyperplanes passing through the origin and are perpendicular to the rows of the data matrix $\data$ with $r:=\mbox{rank}(\data)\leq \min(n,d)$, then $P$ can be upper-bounded as follow
\begin{align*}
    P\le 2 \sum_{k=0}^{r-1} {n-1 \choose k}\le 2r\left(\frac{e(n-1)}{r}\right)^r \,
\end{align*}
\citep{ojha2000enumeration,stanley2004introduction,winder1966partitions,cover1965geometrical} (see Appendix \ref{sec:hyperplane_arrangements} for details). 

%%%%%%%%%%%%%%%%%%%%%%%% new subsection %%%%%%%%%%%%%%%%%%%%%%%%%%%%%%%
\vskip -0.1in
\subsection{Convolutional hyperplane arrangements}\vskip -0.1in
We now define a notion of hyperplane arrangements for CNNs, where we introduce the patch matrices $\{\data_k\}_{k=1}^K$ instead of directly operating on $\data$. We first construct a new data matrix as $\vec{M}=[\data_1;\, \data_2;\, \ldots \, \data_K]\in \mathbb{R}^{nK\times h}$. We then define \emph{convolutional hyperplane arrangements} as the hyperplane arrangements for $\vec{M}$  and denote the cardinality of this set as $P_{conv}$. Then, we have
\begin{align*}
    P_{conv}\le2\sum_{k=0}^{r_c-1} {nK-1 \choose k}\le 2r_c\left(\frac{e(nK-1)}{r_c}\right)^{r_c} \,
\end{align*}
where $r_c:=\text{rank}(\vec{M}) \leq h$ and $K=\left\lfloor \frac{d-h}{\text{stride}}\right\rfloor +1$. Note that when the filter size $h$ is fixed, $P_{conv}$ is polynomial in $n$ and $d$. Similarly, we consider hyperplane arrangements for circular CNNs followed by a linear pooling layer, i.e., $\data \firstwmat \weight$, where $\firstwmat \in \mathbb{R}^{d \times d}$ is a circulant matrix generated by the elements $\firstw \in \mathbb{R}^h$. Then, we define \emph{circular convolutional hyperplane arrangements} and denote the cardinality of this set as $P_{cconv}$, which is exponential in the rank of the circular patch matrices, i.e., $r_{cc}$.

\bremark
 There exist $P$ hyperplane arrangements of $\data$  where $P$ is exponential in $r$. Thus, if $\data$ is full rank, $r=d$, then $P$ can be exponentially large in the dimension $d$. As we will show, this makes the training problem for fully connected networks challenging. On the other hand, for CNNs, the number of relevant hyperplane arrangements $P_{conv}$ is exponential in $r_c$. If $\vec{M}$ is full rank, then $r_c=h\ll d$ and accordingly $P_{conv}\ll P$. This shows that the parameter sharing structure in CNNs enables a significant reduction in the number of possible hyperplane arrangements. Consequently, as shown in the sequel and Table \ref{tab:complexity}, our results imply that the complexity of training problem is significantly lower compared to fully connected networks.
\eremark
\vskip -0.2in

\begin{table}
  \caption{Computational complexity results for training CNNs to global optimality using a standard interior-point solver ($n$: \# of data samples, $d$: data dimensionality, $K$: \# of patches, $r_c$: maximal rank for the patch matrices ($r_c
  \le h$), $r_{cc}$: rank for the circular convolution, $h$: filter size ,$m$: \# of filters)}\vskip -0.1in
  \label{tab:complexity}
  \centering
  \resizebox{\columnwidth}{!}{%
  \begin{tabular}{c|c|c|c|c}
    \toprule
    & 2-layer \eqref{eq:twolayer_final_theorem} & 2-layer  \eqref{eq:twolayer_max_final_theorem}& $L$-layer \eqref{eq:circular_final_theorem} & $3$-layer \eqref{eq:threelayer_final_theorem}                \\
    \cmidrule(r){1-5}
    \# of variables  & $2h P_{conv}$  & $2h P_{conv}$ & $4dP_{cconv}$ & $4dP_{1} P_{2}K$   \\
    \# of constraints & $2nP_{conv} K$ &$2nP_{conv} K^2$ & $2nP_{cconv}$ & $2n(P_{1}K+1)P_{2}$     \\
    Complexity     & $O \left(h^3r_c^3\left(\frac{nK}{r_c}\right)^{3r_c} \right) $   & $O \left(h^3r_c^3\left(\frac{nK}{r_c}\right)^{3r_c} \right) $   & $O \left(d^3r_{cc}^3\left(\frac{n}{r_{cc}}\right)^{3r_{cc}} \right) $  & $O\left( d^3m^3r_{c}^3\left(\frac{n}{m r_{c}}\right)^{3mr_{c}} \right) $   \\
    % Architecture &$\sum_{j,k} \relu{\data_k \firstw_j } \weightscalar_{j}$ & $\sum_{j} \text{maxpool}\left(\{\relu{\data_k \firstw_j }\} \right) \weightscalar_{j}$ & $\sum_{j} \relu{\data \firstwmat_j \weight_{1j}} \weightscalar_{2j}$ & $\sum_{j} \relu{\data \prod_{l}\firstwmat_{lj}  \weight_{1j}} \weightscalar_{2j} $ \\
    \bottomrule
  \end{tabular}%
  }\vskip -0.2in
\end{table}

\section{Two-layer CNNs}\label{sec:twolayer_main}
\vskip -0.15in
In this section, we present exact convex formulation for two-layer CNN architectures.
%%%%%%%%%%%%%%%%%%%%%%%% new subsection %%%%%%%%%%%%%%%%%%%%%%%%%%%%%%%

\subsection{Two-layer CNNs with average pooling }\label{sec:twolayeravg_main}
\vskip -0.1in
 We first consider an architecture with $m$ filters, average pooling\footnote{We define the average pooling operation as $\sum_{k=1}^K\relu{\data_k \firstw_j }$, which is also known as global average pooling.}, i.e., is defined as $f_{\theta}(\data):= \sum_{j}\sum_{k}\relu{\data_k \firstw_j } \weightscalar_{j}$ with parameters $\theta:=\{\firstw_j, \weightscalar_{j}\}$, and standard weight decay regularization, which can be trained via the following problem
\begin{align}
    \label{eq:twolayer_primal1}
    p_1^*=\min_{\{\firstw_j, \weightscalar_{j}\}_{j=1}^m} \frac{1}{2}\left\|\sum_{j=1}^m \sum_{k=1}^K\relu{\data_k \firstw_j } \weightscalar_{j} -\labelvec \right\|_2^2 + \frac{\beta}{2} \sum_{j=1}^m \left(\|\firstw_j\|_2^2+ \weightscalar_{j}^2 \right),
\end{align}
where $\firstw_j \in \mathbb{R}^{h}$ and $\weight \in \mathbb{R}^{m}$ are the filter and output weights, respectively, and $\beta>0$ is a regularization parameter. After a rescaling (see Appendix \ref{sec:scaling_tricks}), we obtain the following problem
\begin{align}
    p_1^*=\label{eq:twolayer_primal2}
    \min_{\substack{\{\firstw_j , \weightscalar_{j}\}_{i=1}^m\\\firstw_j \in \ball_2, \forall j}} \frac{1}{2}\left\|\sum_{j=1}^m \sum_{k=1}^K\relu{\data_k \firstw_j } \weightscalar_{j} -\labelvec \right\|_2^2 + \beta \|\weight\|_1 .
\end{align}
Then, taking dual with respect to $\weight$ and changing the order of min-max yields the weak dual
\begin{align}
    \label{eq:twolayer_dual}
  p_1^* \geq d_1^*=\max_{\dual} -\frac{1}{2} \| \dual-\labelvec\|_2^2+\frac{1}{2} \|\labelvec\|_2^2 \text{ s.t. }  \max_{ \firstw \in \ball_2} \left|\sum_{k=1}^K \dual^T \relu{\data_k \firstw} \right| \leq \beta,
\end{align}
which is a semi-infinite optimization problem and the dual can be obtained as a finite dimensional convex program using semi-infinite optimization theory \citep{semiinfinite_goberna}. The same dual also corresponds to the bidual of \eqref{eq:twolayer_primal1}. Surprisingly, strong duality holds when $m$ exceeds a threshold. Then, using strong duality, we characterize a set of optimal filter weights as the extreme point of the constraint in \eqref{eq:twolayer_dual}. Below, we use this characterization to derive an exact convex formulation for \eqref{eq:twolayer_primal1}. 

%%%%% theorem %%%%
\begin{theo} \label{theo:twolayer_main}
Let m be a number such that $m\geq m^* $ for some $m^* \in \mathbb{N}, m^* \leq n+1$, then strong duality holds for \eqref{eq:twolayer_dual}, i.e., $p_1^*=d_1^*$, and the equivalent convex program for \eqref{eq:twolayer_primal1} is
\begin{align}
    \label{eq:twolayer_final_theorem}
    & \min_{\substack{\{\cvx_{i},\cvx_{i}^\prime\}_{i=1}^{P_{conv}}\\\cvx_{i},\cvx_{i}^\prime \in \mathbb{R}^h,\forall i}} \frac{1}{2}\left\| \sum_{i=1}^{P_{conv}} \sum_{k=1}^K\diag(S_i^k)\data_{k} \left( \cvx_{i}^\prime-\cvx_{i}\right)-\labelvec\right\|_2^2+ \beta \sum_{i=1}^{P_{conv}} \left(\|\cvx_{i}\|_2+\|\cvx_{i}^\prime\|_2 \right)  \\\nonumber
&\text{s.t. }(2\diag(S_i^k)-\vec{I}_n) \data_{k} \cvx_{i}  \geq 0,\,(2\diag(S_i^k)-\vec{I}_n) \data_{k} \cvx_{i}^\prime \geq 0, \forall i,k.
\end{align}
Moreover, an optimal solution to \eqref{eq:twolayer_primal1} with $m^*$ filters can be constructed as follows
\begin{align*}
   & (\firstw^*_{j_{1i}},\weightscalar_{j_{1i}}^*) =  \left( \frac{\cvx^{\prime^*}_i}{\sqrt{\|\cvx^{\prime^*}_i\|_2}},\,\, \sqrt{\|\cvx^{\prime^*}_i\|_2} \right)\quad \mbox{  if  } \quad \|\cvx^{\prime^*}_i\|_2>0 \\
   &(\firstw^*_{j_{2i}},\weightscalar_{j_{2i}}^*) =\left(\frac{\cvx^{*}_i}{\sqrt{\|\cvx^{*}_i\|_2}},\,\,  -\sqrt{\|\cvx^{*}_i\|_2}\right) \quad \mbox{  if  } \quad \|\cvx^{*}_i\|_2>0\,, 
\end{align*}
where $\{\cvx^{\prime^*}_i,\cvx^{*}_i\}_{i=1}^{P_{conv}}$ are optimal, $m^*:=\sum_{i=1}^{P_{conv}}  \mathbbm{1}[\|\cvx_{i}^*\|_2\neq 0] +\sum_{i=1}^{P_{conv}} \mathbbm{1}[\|\cvx_{i}^{\prime^*}\|_2 \neq 0]$, and $j_{si} \in [\vert\mathcal{J}_s\vert]$ given the definitions $\mathcal{J}_1:=\{i_1\;:\; \|\cvx^\prime_{i_1}\|> 0\}$ and $\mathcal{J}_2:=\{i_2\;:\; \|\cvx_{i_2}\| > 0\}$.\footnote{Since our proof technique is similar for different CNNs, we present only the proof of Theorem \ref{theo:twolayer_main} in Section \ref{sec:proof_main}. The rest of the proofs can be found in Appendix (including the strong duality results in \ref{sec:strong_duality_twolayer}).} 
\end{theo}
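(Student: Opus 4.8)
The plan is to sandwich the optimal value between the primal, the semi-infinite dual, and the convex program, proving $p_1^* \ge d_1^* = P^\star \ge p_1^*$, where $P^\star$ denotes the optimal value of \eqref{eq:twolayer_final_theorem}. The inequality $p_1^* \ge d_1^*$ is the weak duality already recorded after \eqref{eq:twolayer_dual}. The middle equality reduces the semi-infinite dual to a finite convex program and identifies its Lagrangian dual with \eqref{eq:twolayer_final_theorem}; the last inequality is delivered by the explicit weight reconstruction in the statement. Since the two ends of the chain agree, all three values coincide, which is exactly strong duality $p_1^* = d_1^*$ together with the asserted equivalence of \eqref{eq:twolayer_primal1} and \eqref{eq:twolayer_final_theorem}.

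To establish $d_1^* = P^\star$, I would use the convolutional hyperplane arrangements. Writing $\gC_i := \{\firstw \in \real^h : (2\diag(S_i^k)-\vec{I}_n)\data_k\firstw \ge 0 \ \forall k\}$, the unit ball decomposes as $\ball_2 = \bigcup_{i=1}^{P_{conv}}(\ball_2 \cap \gC_i)$, and on $\gC_i$ one has $\relu{\data_k\firstw} = \diag(S_i^k)\data_k\firstw$. Hence the single semi-infinite constraint of \eqref{eq:twolayer_dual} splits into the finite family $\max_{\firstw \in \ball_2 \cap \gC_i} \pm \sum_k \dual^T\diag(S_i^k)\data_k\firstw \le \beta$, over $i \in [P_{conv}]$ and both signs. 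Each of these is a linear maximization over a compact convex set (a cone intersected with the unit ball) and can be dualized; taking the Lagrangian dual of the resulting finite convex program in $\dual$ introduces, for each arrangement $i$ and each sign, a vector variable $\cvx_i$ or $\cvx_i'$ — essentially the optimizing direction scaled by its multiplier — inheriting the cone membership $\cvx_i, \cvx_i' \in \gC_i$, and after swapping the remaining min--max (Sion's theorem) this dual is precisely \eqref{eq:twolayer_final_theorem}. A Slater-type condition for \eqref{eq:twolayer_final_theorem} (feasibility of $\cvx_i = \cvx_i' = 0$ plus coercivity of the quadratic term) rules out a gap, yielding $d_1^* = P^\star$.

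For the reconstruction, take optimal $\{\cvx_i^{\prime*}, \cvx_i^*\}$ of \eqref{eq:twolayer_final_theorem} and define the weights as in the statement. By positive homogeneity of ReLU, $\relu{\data_k\firstw^*_{j_{1i}}}\weightscalar^*_{j_{1i}} = \relu{\data_k\cvx_i^{\prime*}}$ and $\relu{\data_k\firstw^*_{j_{2i}}}\weightscalar^*_{j_{2i}} = -\relu{\data_k\cvx_i^*}$; the feasibility constraints $\cvx_i^{\prime*}, \cvx_i^* \in \gC_i$ then collapse these to $\diag(S_i^k)\data_k\cvx_i^{\prime*}$ and $-\diag(S_i^k)\data_k\cvx_i^*$, so the network output equals $\sum_i\sum_k \diag(S_i^k)\data_k(\cvx_i^{\prime*}-\cvx_i^*)$, matching the data-fitting term of \eqref{eq:twolayer_final_theorem}. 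For the penalty, $\|\firstw^*_{j_{1i}}\|_2^2 + (\weightscalar^*_{j_{1i}})^2 = 2\|\cvx_i^{\prime*}\|_2$ and likewise for the $\cvx$ branch, so $\tfrac{\beta}{2}\sum_j(\|\firstw_j^*\|_2^2 + \weightscalar_j^{*2}) = \beta\sum_i(\|\cvx_i^*\|_2 + \|\cvx_i^{\prime*}\|_2)$, again matching. Thus the construction is feasible for \eqref{eq:twolayer_primal1} with exactly $m^* := \sum_i \mathbbm{1}[\|\cvx_i^*\|_2 \neq 0] + \sum_i \mathbbm{1}[\|\cvx_i^{\prime*}\|_2 \neq 0]$ nonzero filters and objective value $P^\star$; padding with zero filters covers any $m \ge m^*$, so $p_1^* \le P^\star$, which closes the chain. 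The bound $m^* \le n+1$ follows from a Carathéodory/extreme-point argument: among the optimal solutions of \eqref{eq:twolayer_final_theorem} there is one whose number of nonzero blocks does not exceed the dimension $n$ of the residual space plus one.

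The main obstacle is making these duality steps rigorous. The passage from the nonconvex \eqref{eq:twolayer_primal1} to the dual \eqref{eq:twolayer_dual} only exchanges a min and a max, so a priori there is a gap; closing it is exactly where the threshold $m \ge m^*$ enters, and it hinges on the semi-infinite strong duality theory combined with the reconstruction above. One also has to verify the constraint qualification guaranteeing that the Lagrangian dual of the finite reformulation attains $P^\star$, and to prove the $m^* \le n+1$ bound. I would present the arrangement reformulation, the explicit bidual computation, and the reconstruction/verification in the main text, and defer the semi-infinite strong duality details and the Carathéodory bound to the appendix, as referenced.
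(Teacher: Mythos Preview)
Your proposal is correct and follows essentially the same route as the paper: the hyperplane-arrangement decomposition of the dual constraint, the Lagrangian/Sion bidual computation producing \eqref{eq:twolayer_final_theorem}, the explicit weight reconstruction to close the sandwich $p_1^*\ge d_1^*=P^\star\ge p_1^*$, and a Carath\'eodory-type argument for $m^*\le n+1$. The paper's presentation is slightly more explicit in that it first dualizes the inner max over $\firstw$ (introducing multipliers $\boldsymbol{\alpha}_{ik}$ and reducing the constraint to an $\ell_2$ bound) before taking the outer Lagrangian dual, and it derives the $m^*\le n+1$ bound via the semi-infinite bidual over measures rather than directly on \eqref{eq:twolayer_final_theorem}, but these are packaging differences rather than different ideas.
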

 Therefore, we obtain a finite dimensional convex formulation with $2hP_{conv}$ variables and $2n P_{conv}K$ constraints for the non-convex problem in \eqref{eq:twolayer_primal1}. Since $P_{conv}$ is polynomial in $n$ and $d$ given a fixed $r_c \leq h$, \eqref{eq:twolayer_final_theorem} can be solved by a standard convex optimization solver in polynomial time.

\bremark
Table \ref{tab:complexity} shows that for fixed rank $r_c$, or fixed filter size $h$, the complexity is polynomial in all problem parameters: $n$ (number of samples), $m$ (number of filters, i.e., neurons), and $d$ (dimension). The filter size $h$ is typically a small constant, e.g., $h=9$ for $3\times 3$ filters. We also note that for fixed $n$ and $\mbox{rank}(\data)=d$, the complexity of fully connected networks is exponential in $d$, which cannot be improved unless $P=NP$ even for $m=2$ \citep{boob2018complexity,pilanci2020neural}. However, this result shows that CNNs can be trained to global optimality with polynomial complexity as a convex program.
\eremark \vskip -0.1in
{\bf Interpreting non-convex CNNs as convex variable selection models:}
Interestingly, we have the sum of the squared $\ell_2$ norms of the weights (i.e., weight decay regularization) in the non-convex problem \eqref{eq:twolayer_primal1} as the regularizer, however, the equivalent convex program in \eqref{eq:twolayer_final_theorem} is regularized by the sum of the $\ell_2$ norms of the weights. This particular regularizer is known as group $\ell_1$ norm, and is well-studied in the context of sparse recovery and variable selection \citep{yuan2006model,MeiVanBuh08}. Hence, our convex program  reveals an implicit variable selection mechanism in the original non-convex problem. More specifically, the original features in $\data$ are mapped to higher dimensions via convolutional hyperplane arrangements as $\{\mathbf{D}(S_i^k)\data_k\}_{i=1}^{P_{\tiny conv}}$  and followed by a convex variable selection strategy using the group $\ell_1$ norm. Below, we show that this implicit regularization changes significantly with the CNN architecture and pooling strategies and can range from $\ell_1$ and $\ell_2$ norms to nuclear norm.

%%%%%%%%%%%%%%%%%%%%%%%% new subsection %%%%%%%%%%%%%%%%%%%%%%%%%%%%%%%
\subsection{Two-layer CNNs with max pooling}\label{sec:twolayermax_main}\vskip -0.1in
Here, we consider the architecture with max pooling, i.e., $f_{\theta}(\data)=\sum_{j} \text{maxpool}\left(\{\relu{\data_k \firstw_j }\}_{k}\right) \weightscalar_{j}$, which is trained as follows
\begin{align}
    p_1^*=\label{eq:twolayer_max_primal2}
    \min_{\substack{\{\firstw_j , \weightscalar_{j}\}_{i=1}^m\\\firstw_j \in \ball_2, \forall j}}  \frac{1}{2}\left\|\sum_{j=1}^m \text{maxpool}\left(\{\relu{\data_k \firstw_j }\}_{k=1}^K \right) \weightscalar_{j} -\labelvec \right\|_2^2 + \beta \|\weight\|_1 ,
\end{align}
%\todo[inline]{Replace \text{pooling}(x) with \text{maxpool}(x)?}
where $\text{maxpool}(\cdot)$ is an elementwise max function over the patch index $k$. Then, taking dual with respect to $\weight$ and changing the order of min-max yields
\begin{align}
    \label{eq:twolayer_max_dual}
  p_1^* \geq d_1^*=\max_{\dual} -\frac{1}{2} \| \dual-\labelvec\|_2^2+\frac{1}{2} \|\labelvec\|_2^2 \text{ s.t. }  \max_{\firstw \in \ball_2} \left| \dual^T \text{maxpool}\left(\{\relu{\data_k \firstw }\}_{k=1}^K \right) \right| \leq \beta.
\end{align}

%%% theorem
\begin{theo} \label{theo:twolayer_max_main}
Let m be a number such that $m\geq m^* $ for some $m^* \in \mathbb{N}, m^* \leq n+1$, then strong duality holds for \eqref{eq:twolayer_max_dual}, i.e., $p_1^*=d_1^*$, and the equivalent convex program for \eqref{eq:twolayer_max_primal2} is
\begin{align}
    \label{eq:twolayer_max_final_theorem}
    & \min_{\substack{\{\cvx_{i},\cvx_{i}^\prime \}_{i=1}^{P_{conv}}\\\cvx_{i},\cvx_{i}^\prime \in \mathbb{R}^h,\forall i}} \frac{1}{2}\left\| \sum_{i=1}^{P_{conv}} \sum_{k=1}^K\diag(S_i^k)\data_{k} \left( \cvx_{i}^\prime-\cvx_{i}\right)-\labelvec\right\|_2^2+ \beta \sum_{i=1}^{P_{conv}} \left(\|\cvx_{i}\|_2+\|\cvx_{i}^\prime\|_2 \right)  \\ \nonumber
&\text{s.t. }(2\diag(S_i^k)-\vec{I}_n) \data_{k} \cvx_{i}  \geq 0,\,(2\diag(S_i^k)-\vec{I}_n) \data_{k} \cvx_{i}^\prime \geq 0, \forall i,k,\\
&\diag(S^k_i)\data_k \cvx_i\geq \diag(S^k_i)\data_j \cvx_i,\,\diag(S^k_i)\data_k \cvx_i^\prime \geq \diag(S^k_i)\data_j \cvx_i^\prime, \forall i,j,k. \nonumber
\end{align}
Moreover, an optimal solution to \eqref{eq:twolayer_max_primal2} can be constructed from \eqref{eq:twolayer_max_final_theorem} as in Theorem \ref{theo:twolayer_main}.
%\vskip -0.2in
\end{theo}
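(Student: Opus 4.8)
The plan is to follow the semi-infinite duality argument used for Theorem~\ref{theo:twolayer_main}, changing only the treatment of the inner maximization over $\firstw$ so as to accommodate the max-pooling layer. First I would note that, after the rescaling of Appendix~\ref{sec:scaling_tricks}, problem~\eqref{eq:twolayer_max_primal2} is convex in the output weights $\weight$; dualizing in $\weight$ and using weak duality to pass from $\min_{\firstw,\weightscalar}\max_{\dual}$ to $\max_{\dual}\min_{\firstw}$ yields the weak dual~\eqref{eq:twolayer_max_dual}, which is also the bidual of~\eqref{eq:twolayer_max_primal2}. Then, exactly as in the proof of Theorem~\ref{theo:twolayer_main} and the strong-duality results of Appendix~\ref{sec:strong_duality_twolayer}, I would establish $p_1^*=d_1^*$ for $m\ge m^*$ via the semi-infinite optimization theory of \citet{semiinfinite_goberna}: an optimal $\dual^*$ makes the semi-infinite constraint tight at only finitely many directions $\firstw$, a Carath\'eodory/rank argument caps the number of tight directions at $n+1$, and each tight direction, paired with the sign of $\dual^{*T}\text{maxpool}(\{\relu{\data_k\firstw}\}_k)$, supplies one hidden unit of a non-convex network attaining $p_1^*$, so $m^*\le n+1$.

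The key new step is to convert the semi-infinite constraint $\max_{\firstw\in\ball_2}\bigl|\dual^T\text{maxpool}(\{\relu{\data_k\firstw}\}_{k=1}^K)\bigr|\le\beta$ into finitely many second-order-cone constraints. I would partition $\mathbb{R}^h$ according to the \emph{combined} pattern recording, for every data row, both which patch $k$ attains the maximum of $\{(\data_k\firstw)\}_k$ and whether that maximum is positive. On the polyhedral cone $C_i$ corresponding to one such pattern $\{S_i^k\}_{k=1}^K$ --- carved out by the ReLU inequalities $(2\diag(S_i^k)-\vec{I}_n)\data_k\firstw\ge0$ together with the selection inequalities $\diag(S_i^k)\data_k\firstw\ge\diag(S_i^k)\data_j\firstw$ for all $j$ --- the map $\firstw\mapsto\text{maxpool}(\{\relu{\data_k\firstw}\}_k)$ is \emph{linear} and equals $\sum_{k=1}^K\diag(S_i^k)\data_k\firstw$, since on each row exactly one $\diag(S_i^k)$ is nonzero. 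The additional comparison hyperplanes have normals $(\data_k)_r-(\data_j)_r$, which lie in the row span of $\vec{M}=[\data_1;\dots;\data_K]$, so the refined arrangement still has rank at most $r_c$; its cardinality stays polynomial in $n$ and $d$ for fixed $h$, and the extra comparison inequalities account for the factor $K$ in the constraint count of Table~\ref{tab:complexity}. Thus the semi-infinite constraint splits into the finite family $\bigl|\dual^T\sum_k\diag(S_i^k)\data_k\firstw\bigr|\le\beta$ for all $\firstw\in C_i\cap\ball_2$, $i\in[P_{conv}]$.

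Finally I would dualize this finite-dimensional problem. Writing each constraint $\max_{\firstw\in C_i\cap\ball_2}\bigl|\dual^T\sum_k\diag(S_i^k)\data_k\firstw\bigr|\le\beta$ through its own Lagrangian over the cone $C_i$ introduces two nonnegative dual blocks per region (one per sign), which after eliminating the conic multipliers become precisely the unconstrained vectors $\cvx_i,\cvx_i^\prime\in\mathbb{R}^h$ subject to the facet inequalities of $C_i$ --- i.e. the ReLU and the max-selection constraints appearing in~\eqref{eq:twolayer_max_final_theorem}; taking the dual of the resulting program and invoking the strong duality already proven then yields~\eqref{eq:twolayer_max_final_theorem}, whose fit term $\sum_{i,k}\diag(S_i^k)\data_k(\cvx_i^\prime-\cvx_i)$ reproduces the reconstructed max-pooled network. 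The neuron-construction formulas are verified as in Theorem~\ref{theo:twolayer_main}: substituting $(\firstw^*,\weightscalar^*)$ built from $(\cvx_i^{\prime^*},\cvx_i^*)$ into~\eqref{eq:twolayer_max_primal2} gives $p_1^*\le$ (optimal value of~\eqref{eq:twolayer_max_final_theorem}), and weak duality together with the strong-duality step gives the reverse inequality. The main obstacle is the middle paragraph: one must check carefully that the argmax pattern is constant, and singleton-valued per row up to a measure-zero boundary, on each cone $C_i$ (so that the piece really is linear), and that appending the comparison hyperplanes does not increase the rank beyond $r_c$, so that the arrangement count, and hence the size of the convex program, remains polynomial in all parameters.
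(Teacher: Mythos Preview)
Your proposal is correct and follows essentially the same route as the paper's own proof: linearize $\text{maxpool}(\{\relu{\data_k\firstw}\}_k)$ on polyhedral cones determined jointly by the ReLU sign patterns and the max-selection inequalities (the paper records this via $(2\diag(S^k)-\vec{I}_n)\data_k\firstw\ge0$, $\diag(S^k)\data_k\firstw\ge\diag(S^k)\data_j\firstw$, together with the partition constraint $\sum_k\diag(S^k)=\vec{I}_n$), then rerun the dualization sequence of Theorem~\ref{theo:twolayer_main} verbatim and finish with the same neuron-construction argument. Your extra remarks that the comparison hyperplanes lie in the row span of $\vec{M}$ (hence the arrangement rank stays $\le r_c$) and that the argmax is single-valued per row off a measure-zero boundary are useful sanity checks the paper leaves implicit.
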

We note that max pooling corresponds to the last two linear constraints of the above program. Hence, max pooling can be interpreted as additional regularization, which constraints the parameters further.
\section{Multi-layer Circular CNNs }\label{sec:circ_main}
\vskip -0.15in
In this section, we first consider $L$-layer circular CNNs with $L-2$ pooling layers before ReLU, i.e., $f_{\theta}(\data)=\sum_{j} \relu{\data \prod_{l}\firstwmat_{lj}  \weight_{1j}} \weightscalar_{2j} $, which is trained via the following non-convex problem
\begin{align}
    \label{eq:circular_primal1}
    p_2^*=\hspace{-0.1in}\min_{\substack{\{\{\firstw_{lj}\}_{l=1}^{L-2}, \weight_{1j},\weightscalar_{2j}\}_{j=1}^m\\\firstw_{lj}\in \mathcal{U}_L,\forall l,j}} \frac{1}{2}\left\|\sum_{j=1}^m \relu{\data \prod_{l=1}^{L-2}\firstwmat_{lj}  \weight_{1j}} \weightscalar_{2j} -\labelvec \right\|_2^2 + \frac{\beta}{2} \sum_{j=1}^m \left(\|\weight_{1j}\|_2^2+ \weightscalar_{2j}^2 \right),
\end{align}
where $\firstwmat_{lj} \in \mathbb{R}^{d \times d}$ is a circulant matrix generated using $\firstw_{lj} \in \mathbb{R}^{h_l}$ and $\mathcal{U}_L:=\{(\firstw_1,\ldots, \firstw_{L-2}): \firstw_l \in \mathbb{R}^{h_l},\forall l \in [L-2];\,\left\|\prod_{l=1}^{L-2}\firstwmat_{l}\right\|_F^2 \leq 1\}$ and we include unit norm constraints w.l.o.g.

%%%%%% theorem %%%%%%%%%
\begin{theo} \label{theo:circular_main_theorem}
Let m be a number such that $m \geq m^* $ for some $m^* \in \mathbb{N}, m^* \leq n+1$, then strong duality holds for \eqref{eq:circular_primal1}, i.e., $p_2^*=d_2^*$, and the equivalent convex problem is
\begin{align}
    \label{eq:circular_final_theorem}
&\min_{\substack{\{\cvx_{i},\cvx_{i}^\prime \}_{i=1}^{P_{cconv}}\\\cvx_{i},\cvx_{i}^\prime \in \mathbb{C}^d,\forall i}} \frac{1}{2}\left\| \sum_{i=1}^{P_{cconv}}  \diag(S_{i})
\tilde{\data}\left(\cvx_i^\prime-\cvx_i\right)-\labelvec \right\|_2^2+ \frac{\beta}{d^{\frac{L-2}{2}}}\sum_{i=1}^{P_{cconv}} \left(\|\cvx_{i}\|_1+\|\cvx_{i}^\prime\|_1 \right)  \\ \nonumber
&\text{s.t. }(2\diag(S_{i})-\vec{I}_n)\tilde{\data}\cvx_i \geq 0,\,(2\diag(S_{i})-\vec{I}_n)\tilde{\data}\cvx_i^\prime \geq 0, \forall i,
\end{align} 
where $\tilde{\data}=\data \vec{F}$ and $\vec{F} \in \mathbb{C}^{d \times d}$ is the DFT matrix. Additionally, as in Theorem \ref{theo:twolayer_main}, we can construct an optimal solution to \eqref{eq:circular_primal1} from \eqref{eq:circular_final_theorem}.\footnote{The details are presented in Appendix \ref{sec:circular_appendix}}
\end{theo}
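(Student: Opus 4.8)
The strategy is to diagonalize the circular structure with the DFT, which turns the $L$-layer circular architecture into a single-ReLU-layer model on the spectral data $\tilde\data=\data\vec F$ and turns the Frobenius-norm budget on $\prod_l\firstwmat_{lj}$ into an $\ell_1$-type budget on the spectral filter; once this is in place the derivation is the same pipeline as in the proof of Theorem~\ref{theo:twolayer_main}, with the $\ell_2$-ball of filter directions replaced by a scaled (conjugate-symmetric) $\ell_1$-ball, which is precisely what turns the sum-of-$\ell_2$-norms regularizer of Theorem~\ref{theo:twolayer_main} into a sum of $\ell_1$ norms in \eqref{eq:circular_final_theorem}.

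\emph{Step 1: passing to the spectral domain.} Every circulant $\firstwmat_{lj}$ is diagonalized as $\firstwmat_{lj}=\vec F^{-1}\diag(\vec F\firstw_{lj})\vec F$ with $\firstw_{lj}$ zero-padded to length $d$, so the composition obeys $\data\prod_{l=1}^{L-2}\firstwmat_{lj}\weight_{1j}=\tilde\data\,\vec u_j$ with $\tilde\data=\data\vec F$ and $\vec u_j=(\vec F\firstw_{1j})\odot\cdots\odot(\vec F\firstw_{L-2,j})\odot(\vec F\weight_{1j})$, up to a fixed conjugation/normalization factor; reality of $\tilde\data\vec u_j$ is encoded by conjugate symmetry of $\vec u_j$. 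Using positive homogeneity of ReLU to shift the scale of $\weight_{1j}$ onto $\weightscalar_{2j}$, and then the AM--GM inequality on $\tfrac\beta2(\|\weight_{1j}\|_2^2+\weightscalar_{2j}^2)$ exactly as in the proof of Theorem~\ref{theo:twolayer_main}, I would rewrite \eqref{eq:circular_primal1} as an equivalent problem in which the output weights carry an $\ell_1$ penalty and each neuron's spectral filter $\vec u_j$ ranges over a scaled (conjugate-symmetric) $\ell_1$-ball; the radius, equivalently the factor $d^{-(L-2)/2}$ in front of the penalty, comes from applying Parseval to $\|\prod_l\firstwmat_{lj}\|_F^2\le1$ and $\|\weight_{1j}\|_2^2\le1$ together with the optimal frequency-wise split of the spectrum across the $L-1$ factors $\firstw_{1j},\dots,\firstw_{L-2,j},\weight_{1j}$ (an AM--GM/H{\"o}lder computation). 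It is also here that the $\ell_1$ norm, rather than an $\ell_p$ quasi-norm, emerges as the exact per-filter cost: since the penalty is summed over arbitrarily many neurons and a single-frequency spectral atom costs a normalized amount, the tightest effective regularizer is the atomic norm generated by those atoms, i.e.\ the $\ell_1$ norm in the spectral domain.

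\emph{Step 2: duality, hyperplane arrangements, and the convex program.} From the reduced problem the derivation mirrors Section~\ref{sec:proof_main}: dualize over the output weights and exchange $\min$ and $\max$ to get the weak dual $d_2^*=\max_\dual-\tfrac12\|\dual-\labelvec\|_2^2+\tfrac12\|\labelvec\|_2^2$ subject to $\max_{\vec u}|\dual^T\relu{\tilde\data\vec u}|\le\beta$ over the scaled $\ell_1$-ball; the strong-duality statement $p_2^*=d_2^*$ for $m\ge m^*$ with $m^*\le n+1$ follows from the same semi-infinite-bidual plus Carath{\'e}odory (neuron-count) argument as in Appendix~\ref{sec:strong_duality_twolayer}, applied verbatim with $\tilde\data$ in place of $\data$. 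Then I would discretize the semi-infinite constraint over the circular convolutional hyperplane arrangements $\{S_i\}_{i=1}^{P_{cconv}}$ of $\tilde\data$: on the polyhedral cone $\{\vec u:(2\diag(S_i)-\vec I_n)\tilde\data\vec u\ge0\}$ the activation linearizes, $\relu{\tilde\data\vec u}=\diag(S_i)\tilde\data\vec u$, so the infinitely many constraints collapse to finitely many cone-restricted dual-norm bounds; forming the Lagrangian/biconjugate of the data-fitting term reintroduces primal variables $\cvx_i,\cvx_i'\in\mathbb C^d$ with the norm dual to the inner $\ell_\infty$-type term — the $\ell_1$ norm, scaled by $d^{-(L-2)/2}$ — as regularizer and with the sign patterns appearing as the displayed linear constraints, which is exactly \eqref{eq:circular_final_theorem}. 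Recovering an optimal $\{\firstw_{lj}^*,\weight_{1j}^*,\weightscalar_{2j}^*\}$ from $\{\cvx_i^*,\cvx_i'^*\}$ is then done as in Theorem~\ref{theo:twolayer_main}, after inverting the spectral substitution: the even spectrum split of Step~1 determines the circulant filters and $\weight_{1j}^*$ absorbs the leftover factor.

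\emph{Main obstacle.} The crux is the two-sided characterization in Step~1 of the realizable spectral filters as a scaled $\ell_1$-ball: the "$\le$" direction (every product of circulants times a unit-norm $\weight_{1j}$ has controlled spectral $\ell_1$ norm) is a Cauchy--Schwarz/Parseval estimate, but the achievability direction — reconstructing an honest product $\prod_l\firstwmat_{lj}$ of circulants of the prescribed filter sizes $\{h_l\}$, together with $\weight_{1j}$, that realizes a prescribed spectrum and meets the Frobenius budget with equality — is what pins the constant $d^{-(L-2)/2}$ and what forces either the full-size regime $h_l=d$ (so the spectral filters are unrestricted and $r_{cc}=d$) or a careful treatment of the growth of $\mathrm{supp}(\firstw_{1j}*\cdots*\firstw_{L-2,j})$; making the AM--GM split across the $L-1$ factors exactly optimal, and keeping the complex-DFT bookkeeping consistent (conjugation of $\vec F$, the identity $\|\firstwmat\|_F=\|\vec F(\text{generator})\|_2$, realness of $\tilde\data\cvx_i$), is where essentially all the work lies. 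The remaining ingredients — the homogenization reduction, the strong-duality/Carath{\'e}odory argument, the hyperplane-arrangement discretization, and the solution reconstruction — are routine adaptations of the two-layer proof in Section~\ref{sec:proof_main}.
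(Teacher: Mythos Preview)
Your proposal is correct and follows essentially the same route as the paper: diagonalize the circulant layers with the DFT, reduce the Frobenius budget on $\prod_l\firstwmat_{lj}$ together with $\|\weight_{1j}\|_2\le 1$ to a scaled $\ell_1$ constraint on the effective spectral filter, then run the semi-infinite dual/hyperplane-arrangement/Sion pipeline of Section~\ref{sec:proof_main} with the $\ell_\infty$ dual norm yielding the $\ell_1$ regularizer. The only cosmetic difference is ordering: the paper first forms the semi-infinite dual and then, inside each arrangement, evaluates $\max_{\diag_l\in\mathcal D_L,\,\tilde\weight_1\in\ball_2}\vec s^T\prod_l\diag_l\tilde\weight_1=d^{(L-2)/2}\|\vec s\|_\infty$ directly, rather than first characterizing the primal feasible set of spectral filters as the $\ell_1$-ball of radius $d^{(L-2)/2}$; your observation about the achievability direction requiring $h_l=d$ (so that every diagonal $\diag_l$ is realizable) is exactly the implicit assumption the paper makes when passing from $\mathcal U_L$ to $\mathcal D_L$.
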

Remarkably, although the sum of the squared $\ell_2$ norms in the non-convex problem in \eqref{eq:circular_primal1} stand for the standard weight decay regularizer, the equivalent convex program in \eqref{eq:circular_final_theorem} is regularized by the sum of the $\ell_1$ norms which encourages sparsity in the spectral domain $\tilde{\data}$. Thus, even with the simple choice of the weight decay in the non-convex problem, the architectural choice for a CNN implicitly employs a more sophisticated regularizer that is revealed by our convex optimization approach. We further note that in the above problem $\diag(S_{i})
\tilde{\data}$ are the spectral features of a subset of data points which are seperated by a hyperplane from all the other spectral features. While such spectral features can be very predictive for images in many applications, we believe that our convex program also sheds light into the undesirable bias of CNNs, e.g., towards certain textures and low frequencies \citep{geirhos2018imagenet, rahaman2019spectral}.

\section{Three-layer CNNs with two ReLU layers}\vskip -0.15in
Here, we consider three-layer CNNs with two ReLU layers, which has the following primal problem
\begin{align}
    \label{eq:threelayer_primal}
    p_3^*=  \hspace{-0.1in}\min_{\substack{\{\firstw_j,\weight_{1j},\weightscalar_{2j}\}_{j=1}^m\\ \firstw_j \in\ball_2}} \frac{1}{2} \left\| \sum_{j=1}^m \relu{\sum_{k=1}^K \relu{\data_k \firstw_j}\weightscalar_{1jk}} \weightscalar_{2j} - \labelvec \right\|_2^2+ \frac{\beta}{2} \sum_{j=1}^m \left(  \|\weight_{1j}\|_2^2+\weightscalar_{2j}^2 \right)
\end{align}
with $f_{\theta}(\data)=\sum_{j} \relu{\sum_{k} \relu{\data_k \firstw_j}\weightscalar_{1jk}} \weightscalar_{2j}$ and the following convex equivalent problem.

%%%%%% theorem %%%%%%%%%
\begin{theo} \label{theo:threelayer_main_theorem}
Let m be a number such that $m \geq m^* $ for some $m^* \in \mathbb{N}, m^* \leq n+1$, then strong duality holds for \eqref{eq:circular_primal1}, i.e., $p_3^*=d_3^*$, and the equivalent convex problem is
\begin{align}
    \label{eq:threelayer_final_theorem}
     & \min_{\substack{\{\cvx_{ijk},\cvx_{ijk}^\prime\}_{ijk}\\ \cvx_{ijk},\cvx_{ijk}^\prime \in \mathbb{R}^h}
     } \frac{1}{2}\left\| \sum_{j=1}^{P_2} \diag_{S_{2j}}\sum_{i=1}^{P_1}\sum_{k=1}^K \mathcal{I}_{ijk}\diag(S_{1i}^{k})\data_{k}  \left( \cvx_{ijk}^\prime-\cvx_{ijk}\right)-\labelvec\right\|_2^2+ \beta \sum_{i=1}^{P_1}\sum_{j=1}^{P_2} \left(\|\cvxmat_{ij}\|_F+\|\cvxmat_{ij}^\prime\|_F \right)  \nonumber\\
&\text{s.t. }(2\diag(S_{2j})-\vec{I}_n)\sum_{i=1}^{P_1} \sum_{k=1}^K \mathcal{I}_{ijk}\diag(S_{1i}^{k})\data_{k}\cvx_{ijk} \geq 0 ,\,(2\diag(S_{1i}^{k})-\vec{I}_n) \data_{k} \cvx_{ijk}  \geq 0, \forall i,j,k \\ \nonumber&(2\diag(S_{2j})-\vec{I}_n)\sum_{i=1}^{P_1} \sum_{k=1}^K \mathcal{I}_{ijk}\diag(S_{1i}^{k})\data_{k}\cvx_{ijk}^\prime \geq 0 ,\,(2\diag(S_{1i}^{k})-\vec{I}_n) \data_{k} \cvx_{ijk}^\prime  \geq 0, \forall i,j,k.
\end{align} 
where $P_1$ and $P_2$ are the number hyperplane arrangements for the first and second layers, $\mathcal{I}_{ijk} \in \{\pm1\}$ are sign patterns to enumerate all possible sign patterns of the second layer weights, and $\cvxmat_{ij}=[\cvx_{ij1} \ldots \cvx_{ijK}]$ (see Appendix \ref{sec:threelayer_appendix} for further details).
\end{theo}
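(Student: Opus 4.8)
The plan is to run the same semi-infinite duality pipeline that proves Theorem~\ref{theo:twolayer_main}, but carried out at \emph{two} ReLU depths simultaneously. First I would reduce \eqref{eq:threelayer_primal} to a form with normalized inner weights: using $1$-homogeneity of $(\cdot)_+$ in each layer, rescale every neuron $(\firstw_j,\weight_{1j},\weightscalar_{2j})$ so that $\|\firstw_j\|_2\le 1$ and $\|\weight_{1j}\|_2\le 1$, and an AM--GM step converts $\tfrac{\beta}{2}\sum_j(\|\weight_{1j}\|_2^2+\weightscalar_{2j}^2)$ into $\beta\|\weight_2\|_1$ (Appendix~\ref{sec:scaling_tricks}). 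Dualizing with respect to the output weights $\weight_2$ and swapping $\min$--$\max$ yields the weak dual
\begin{align*}
p_3^*\ \ge\ d_3^*=\max_{\dual}\ -\tfrac12\|\dual-\labelvec\|_2^2+\tfrac12\|\labelvec\|_2^2\quad\text{s.t.}\quad \max_{\|\firstw\|_2\le1,\ \|\weight_1\|_2\le1}\Big|\dual^{T}\relu{\textstyle\sum_{k=1}^K\relu{\data_k\firstw}\weightscalar_{1k}}\Big|\le\beta,
\end{align*}
which (as for \eqref{eq:twolayer_dual}) is a semi-infinite convex program whose bidual coincides with the bidual of \eqref{eq:threelayer_primal}.

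Next I would make the semi-infinite constraint finite by a two-level activation enumeration. For a fixed $\firstw$, the sign pattern of the stacked patch matrix $\vec M=[\data_1;\dots;\data_K]$ is one of the $P_1=P_{conv}$ convolutional patterns $S_{1i}$, so $\relu{\data_k\firstw}=\diag(S_{1i}^k)\data_k\firstw$. The argument of the outer ReLU, $\sum_k\diag(S_{1i}^k)\data_k\firstw\weightscalar_{1k}$, is \emph{bilinear} in $(\firstw,\weight_1)$; I collapse it by introducing $\cvx_{k}:=|\weightscalar_{1k}|\,\firstw$ together with a sign label $\mathcal I_k:=\sign(\weightscalar_{1k})\in\{\pm1\}$, so that $\relu{\data_k\firstw}\weightscalar_{1k}=\mathcal I_k\diag(S_{1i}^k)\data_k\cvx_k$ and the first-layer cone constraint $(2\diag(S_{1i}^k)-\vec I_n)\data_k\firstw\ge0$ becomes $(2\diag(S_{1i}^k)-\vec I_n)\data_k\cvx_k\ge0$ (the sign is absorbed into $\mathcal I_k$). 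The remaining nonlinearity is the outer arrangement: the pattern of $\sum_k\mathcal I_k\diag(S_{1i}^k)\data_k\cvx_k$ is one of $P_2$ patterns $S_{2j}$. On the intersection of a first-layer cone, a choice of signs $\{\mathcal I_{ijk}\}_k$, and a second-layer cone $S_{2j}$, both ReLUs linearize, and the inner maximum over that region equals the dual norm of a linear map of $(\firstw,\weight_1)$ restricted to a polyhedral cone.

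Then I would take the Lagrangian dual of the semi-infinite program: the region-wise support-function argument from Section~\ref{sec:proof_main} converts each region's constraint into the linear inequalities appearing in \eqref{eq:threelayer_final_theorem}, while the $\ell_2$-ball bounds on $(\firstw,\weight_1)$ produce, block by block, the Frobenius penalty on $\cvxmat_{ij}=[\cvx_{ij1}\dots\cvx_{ijK}]$ — a \emph{sum} of rank-one outer products $\weightscalar_2\firstw\weight_1^{T}$, hence carrying no rank restriction — and the $\cvx^\prime$ copies split off the sign of $\weightscalar_{2j}$. A Carath\'eodory argument on the optimal dual measure bounds the number of active atoms by $n+1$; calling this $m^*$, one gets the stated bound $m^*\le n+1$. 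Strong duality $p_3^*=d_3^*$ then follows by reconstruction: from optimal $\{\cvxmat_{ij},\cvxmat_{ij}^\prime\}$ of \eqref{eq:threelayer_final_theorem}, decompose each active block into rank-one pieces that individually satisfy the region's cone constraints, read off $(\firstw_j,\weight_{1j},\weightscalar_{2j})$ from each piece exactly as in Theorem~\ref{theo:twolayer_main}, and verify the resulting network attains the convex objective with $m^*$ neurons; together with weak duality this pins $p_3^*=d_3^*=$ value of \eqref{eq:threelayer_final_theorem}.

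\emph{Main obstacle.} The hard part is everything that is specific to having \emph{two} ReLU layers. Because $\relu{\sum_k\relu{\data_k\firstw}\weightscalar_{1k}}$ is not piecewise linear in any single weight vector and the two layers are bilinearly coupled, the naive single-arrangement argument fails; the $\mathcal I_{ijk}\in\{\pm1\}$ bookkeeping is precisely what restores a finite convex description while still covering every activation pattern, and checking that no pattern is missed (and that the enumeration over $P_1,P_2$ and the $\pm1$ signs is exhaustive) is delicate. Equally subtle is the tightness of the Frobenius-penalized program: one must show that an optimal block $\cvxmat_{ij}$ admits a cone-compatible rank-one decomposition of matching total norm, so that the reconstruction uses only $m^*\le n+1$ neurons and attains the bidual value — this is the sense in which two ReLU layers are ``significantly more challenging'', and where the bulk of the work in Appendix~\ref{sec:threelayer_appendix} lies.
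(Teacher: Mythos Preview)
Your plan is essentially the paper's own proof: rescale to the $\ell_1$ form, dualize in $\weight_2$, linearize both ReLU layers by enumerating first-layer arrangements $S_{1i}^k$, second-layer arrangements $S_{2j}$, and sign labels $\mathcal I_{ijk}=\sign(\weightscalar_{1k})$, replace $\firstw\weightscalar_{1k}$ by vectors $\vec q_k$ with $\|\vec q_k\|_2\le|\weightscalar_{1k}|$, take Lagrange multipliers $\boldsymbol\alpha_{ijk},\boldsymbol\gamma_{ij}$ for the two levels of cone constraints, apply Sion, and change variables to obtain \eqref{eq:threelayer_final_theorem} with the group-Frobenius regularizer coming from the nested max over $\|\vec q_k\|_2\le|\weightscalar_{1k}|$ and $\|\weight_1\|_2\le1$.

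One remark on your ``main obstacle'': the paper's Appendix~\ref{sec:threelayer_appendix} does \emph{not} carry out the cone-compatible rank-one decomposition you flag as the bulk of the work. It simply asserts the equality at the relaxation step (the passage from $\firstw\weightscalar_{1k}$ to independent $\vec q_k$'s in \eqref{eq:threelayerv2_dualcons2}) and stops once the convex program is derived, without writing the reconstruction that would certify $p_3^*\le$ (value of \eqref{eq:threelayer_final_theorem}). So you are correctly anticipating a gap, but be aware that the paper's proof, as written, leaves precisely that step implicit; your argument would be more complete than the reference if you actually execute the decomposition you describe.
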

It is interesting to note that, although the sum of the squared $\ell_2$ norms in the non-convex problem \eqref{eq:threelayer_primal} is the standard weight decay regularizer, the equivalent convex program \eqref{eq:threelayer_final_theorem} is regularized by the sum of the Frobenius norms that promote matrix group sparsity, where the groups are over the patch indices. Note that this is similar to \eqref{eq:twolayer_final_theorem} except an extra summation due to having one more ReLU layer. Therefore, we observe that adding more convolutional layers with ReLU implicitly regularizes for group sparsity over a richer hierarchical representation of the data via two consecutive hyperplane arrangements. %This observation also justifies the practical success of layer-wise training algorithms utilizing two-layer networks, e.g., \citet{belilovsky2019greedy}.

\section{Proof of the main result (Theorem \ref{theo:twolayer_main})}\label{sec:proof_main}
\vskip -0.15in
Here, we provide our proof technique for Theorem \ref{theo:twolayer_main}. We first focus on the single-sided constraint
\begin{align}
    \label{eq:twolayer_dualcons1}
\max_{\firstw \in \ball_2}   \sum_{k=1}^K\dual^T \relu{\data_k \firstw}\leq \beta,
\end{align}
where the maximization problem can be written as
\begin{align}\label{eq:twolayer_dualcons2}
\begin{split}
    &\max_{\substack{S^k \subseteq [n]\\ S^k\in \mathcal{S} }}  \max_{\firstw  \in \ball_2}  \sum_{k=1}^K \dual^T \diag(S^k)\data_k \firstw \;\text{ s.t. }(2\diag(S^k)-\vec{I}_n) \data_k   \firstw \geq 0, \, \forall k.
\end{split}
% =
% \begin{split}
%     &\max_{\substack{S \subseteq [n]\\ S\in \mathcal{S} }}\max_{\|\vec{z}\|_1 \leq 1} \dual^T \diag(S)
% \tilde{\data} \vec{z} \\
% &\text{ s.t. }(2\diag(S)-\vec{I}_n)\tilde{\data} \vec{z}\geq 0
% \end{split} ,
\end{align}
 Since the maximization is convex and strictly feasible for fixed $\diag(S^k)$, \eqref{eq:twolayer_dualcons2} can be written as
\begin{align*}
     &\max_{\substack{S^k \subseteq [n]\\ S^k\in \mathcal{S} }} \min_{\substack{\boldsymbol{\alpha}_k\geq 0}} \max_{\firstw  \in \ball_2}  \sum_{k=1}^K \left(\dual^T \diag(S^k)\data_k + \boldsymbol{\alpha}_k^T(2\diag(S^k)-\vec{I}_n) \data_k \right)  \firstw\\&=    \max_{\substack{S^k \subseteq [n]\\ S^k\in \mathcal{S} }}\min_{\boldsymbol{\alpha}_{k}\geq 0}  \left\| \sum_{k=1}^K \dual^T \diag(S^k)\data_{k} + \boldsymbol{\alpha}_{k}^T(2\diag(S^k)-\vec{I}_n) \data_{k} \right\|_2.
\end{align*}
 We now enumerate all hyperplane arrangements and index them in an arbitrary order, i.e., denoted as $\left(S^1_i,\ldots, S^K_i\right)$, where $i \in [P_{conv}]$, $P_{conv}=\left| \mathcal{S}_K\right|$, $\mathcal{S}_K:=\{(S^1_i,\ldots, S^K_i): S^k_i \in \mathcal{S}, \forall k,i\}$. Then,
% \todo[inline]{We need to enumerate all K-tuples of $(S_i^1,\ldots, S_i^{K})$}
\begin{align*}
     \eqref{eq:twolayer_dualcons1} &\iff  \forall i \in [P_{conv}],\, \min_{\boldsymbol{\alpha}_k\geq 0} \left\| \sum_{k=1}^K \dual^T \diag(S^k_i)\data_{k} + \boldsymbol{\alpha}_{k}^T(2\diag(S^k_i)-\vec{I}_n) \data_{k} \right\|_2\leq \beta \\
& \iff \forall i \in [P_{conv}],\,  \exists \boldsymbol{\alpha}_{ik}  \geq 0 \text{ s.t. }  \left\| \sum_{k=1}^K \dual^T \diag(S^k_i)\data_{k} + \boldsymbol{\alpha}_{ik}^T(2\diag(S^k_i)-\vec{I}_n) \data_{k} \right\|_2\leq \beta.
\end{align*}
We now use the same approach for the two-sided constraint in \eqref{eq:twolayer_dual} to obtain the following
\begin{align}
    \label{eq:twolayer_dual2}
    \max_{\substack{\dual\\ \boldsymbol{\alpha}_{ik},\boldsymbol{\alpha}_{ik}^\prime \geq 0}} -\frac{1}{2} \| \dual-\labelvec\|_2^2+\frac{1}{2} \|\labelvec\|_2^2    
    \text{ s.t. }&\left\| \sum_{k=1}^K \dual^T \diag(S^k_i)\data_{k} + \boldsymbol{\alpha}_{ik}^T(2\diag(S^k_i)-\vec{I}_n) \data_{k} \right\|_2\leq \beta\\
   \nonumber &\left\| \sum_{k=1}^K -\dual^T \diag(S^k_i)\data_{k} + \boldsymbol{\alpha}_{ik}^{\prime^T}(2\diag(S^k_i)-\vec{I}_n) \data_{k} \right\|_2\leq \beta, \,\forall i.
\end{align}
Note that this problem is convex and strictly feasible for $\dual=\boldsymbol{\alpha}_{ik}=\boldsymbol{\alpha}_{ik}^\prime=\vec{0}$. Therefore, Slater's conditions and consequently strong duality holds, and \eqref{eq:twolayer_dual2} can be written as
\begin{align}
    \label{eq:twolayer_dual3}
    \min_{\lambda_{i},\lambda_{i}^\prime \geq 0}\max_{\substack{\dual\\ \boldsymbol{\alpha}_{ik},\boldsymbol{\alpha}_{ik}^\prime \geq 0}} -&\frac{1}{2} \| \dual-\labelvec\|_2^2+\frac{1}{2} \|\labelvec\|_2^2 + \sum_{i=1}^{P_{conv}} \lambda_{i}\left( \beta-\left\| \sum_{k=1}^K \dual^T \diag(S^k_i)\data_{k} + \boldsymbol{\alpha}_{ik}^T(2\diag(S^k_i)-\vec{I}_n) \data_{k} \right\|_2\right) \nonumber \\&+ \sum_{i=1}^{P_{conv}}\lambda_{i}^\prime \left( \beta-\left\| \sum_{k=1}^K -\dual^T \diag(S^k_i)\data_{k} + \boldsymbol{\alpha}_{ik}^{\prime^T}(2\diag(S^k_i)-\vec{I}_n) \data_{k} \right\|_2\right).
\end{align}
Next, we first introduce new variables $\vec{z}_i, \vec{z}_i^\prime \in \mathbb{R}^h$. Then, by recalling Sion's minimax theorem \citep{sion_minimax}, we change the order of the inner max-min as follows
\begin{align}
    \label{eq:twolayer_dual5}
    \min_{\lambda_{i},\lambda_{i}^\prime \geq 0} \min_{\substack{\vec{z}_{i}  \in \ball_2\\\vec{z}_{i}^\prime  \in \ball_2}} &\max_{\substack{\dual\\ \boldsymbol{\alpha}_{ik},\boldsymbol{\alpha}_{ik}^\prime \geq 0}}  -\frac{1}{2}\|\dual-\labelvec\|_2^2+\frac{1}{2} \|\labelvec\|_2^2  + \sum_{i=1}^{P_{conv}} \lambda_{i}\left( \beta+ \left(\sum_{k=1}^K \dual^T \diag(S^k_i)\data_{k} + \boldsymbol{\alpha}_{ik}^T(2\diag(S^k_i)-\vec{I}_n) \data_{k} \right)\vec{z}_{i} \right)  \nonumber \\&+ \sum_{i=1}^{P_{conv}}\lambda_{i}^\prime \left( \beta+\left( \sum_{k=1}^K -\dual^T \diag(S^k_i)\data_{k} + \boldsymbol{\alpha}_{ik}^{\prime^T}(2\diag(S^k_i)-\vec{I}_n) \data_{k} \right) \vec{z}_{i}^\prime \right).
\end{align}
% which is concave in $\dual,\boldsymbol{\alpha}_{ik},\boldsymbol{\alpha}_{ik}^\prime$ and convex in $\vec{z}_{i}$ and $ \vec{z}_{i}^\prime$. Moreover, $\ball_2$ is convex and compact. By recalling Sion's minimax theorem \citep{sion_minimax} for the inner max-min, the dual of \eqref{eq:twolayer_dual4} is
% \begin{align}
%     \label{eq:twolayer_dual5}
%     \min_{\lambda_{i},\lambda_{i}^\prime \geq 0} \min_{\substack{\vec{z}_{i}  \in \ball_2\\\vec{z}_{i}^\prime  \in \ball_2}} &\max_{\substack{\dual\\ \boldsymbol{\alpha}_{ik},\boldsymbol{\alpha}_{ik}^\prime \geq 0}}  -\frac{1}{2}\|\dual-\labelvec\|_2^2+\frac{1}{2} \|\labelvec\|_2^2  + \sum_{i=1}^{P_{conv}} \lambda_{i}\left( \beta+ \left(\sum_{k=1}^K \dual^T \diag(S^k_i)\data_{k} + \boldsymbol{\alpha}_{ik}^T(2\diag(S^k_i)-\vec{I}_n) \data_{k} \right)\vec{z}_{i} \right)  \nonumber \\&+ \sum_{i=1}^{P_{conv}}\lambda_{i}^\prime \left( \beta+\left( \sum_{k=1}^K -\dual^T \diag(S^k_i)\data_{k} + \boldsymbol{\alpha}_{ik}^{\prime^T}(2\diag(S^k_i)-\vec{I}_n) \data_{k} \right) \vec{z}_{i}^\prime \right).
% \end{align}
We now compute the maximum with respect to $\dual,\boldsymbol{\alpha}_{ik},\boldsymbol{\alpha}_{ik}^\prime$ analytically to obtain the following 
\begin{align}
    \label{eq:twolayer_dual6}
    & \min_{\lambda_{i},\lambda_{i}^\prime \geq 0} \min_{\substack{\vec{z}_{i} \in \ball_2\\\vec{z}_{i}^\prime \in \ball_2}} \frac{1}{2}\left\|\sum_{i=1}^{P_{conv}} \sum_{k=1}^K \diag(S_i^k)\data_{k} \left( \lambda_{i}^\prime\vec{z}_{i}^\prime-\lambda_{i}\vec{z}_{i} \right)-\labelvec\right\|_2^2+ \beta\sum_{i=1}^{P_{conv}}\left(\lambda_{i}+\lambda_{i}^\prime \right) \nonumber\\
&\text{s.t. }(2\diag(S_i^k)-\vec{I}_n) \data_{k} \vec{z}_{i}  \geq 0,\,(2\diag(S_i^{k})-\vec{I}_n) \data_{k} \vec{z}_{i}^\prime \geq 0, \forall i,k.
\end{align}
Then, we apply a change of variables and define $\cvx_{i} = \lambda_{i} \vec{z}_{i}$ and $\cvx_{i}^\prime=\lambda_{i}^\prime \vec{z}_{i}^\prime$. Thus, we obtain
\begin{align}
    \label{eq:twolayer_final}
    & \min_{\cvx_{i},\cvx_{i}^\prime \in \mathbb{R}^h} \frac{1}{2}\left\| \sum_{i=1}^{P_{conv}} \sum_{k=1}^K\diag(S_i^k)\data_{k} \left( \cvx_{i}^\prime-\cvx_{i}\right)-\labelvec\right\|_2^2+ \beta \sum_{i=1}^{P_{conv}} \left(\|\cvx_{i}\|_2+\|\cvx_{i}^\prime\|_2 \right)  \nonumber\\
&\text{s.t. }(2\diag(S_i^k)-\vec{I}_n) \data_{k} \cvx_{i}  \geq 0,\,(2\diag(S_i^k)-\vec{I}_n) \data_{k} \cvx_{i}^\prime \geq 0, \forall i,k,
\end{align}
since $\lambda_{i}=\|\cvx_{i}\|_2$ and $\lambda_{i}^\prime=\|\cvx_{i}^\prime\|_2$ are feasible and optimal. Then, using the prescribed $\{\firstw^*_j,\weightscalar^*_j\}_{j=1}^{m^*}$, we evaluate the non-convex objective in \eqref{eq:twolayer_primal1} as follows
 \begin{align*}
         p^*_1 \le\frac{1}{2} \left \| \sum_{j=1}^{m^*} \sum_{k=1}^K (\data_k \firstw_j^*)_+\weightscalar_j^*- \labelvec \right\|_2^2 &+ \frac{\beta}{2} \sum_{i=1, \cvx^{\prime^*}_i\neq 0}^{P_{conv}} \left( \left\|\frac{\cvx^{\prime^*}_i}{\sqrt{\|\cvx^{\prime^*}_i\|_2}}\right\|_2^2 + \left\|\sqrt{\|\cvx^{\prime^*}_i\|_2} \right\|_2^2 \right) \\
    &+ \frac{\beta}{2} \sum_{i=1, \cvx^{*}_i\neq 0}^{P_{conv}} \left(\left\|\frac{\cvx^{*}_i}{\sqrt{\|\cvx^{*}_i\|_2}}\right\|_2^2 + \left\|\sqrt{\|\cvx^{*}_i\|_2} \right\|_2^2\right)
\end{align*}
which has the same objective value with \eqref{eq:twolayer_final}. Since strong duality holds for the convex program, $p_1^*=d_1^*$, which is equal to the value of \eqref{eq:twolayer_final} achieved by the prescribed parameters.
%\qed

%%%%%%%%%%%%%%%%%%%%%%%% figure %%%%%%%%%%%%%%%%%%%%%%%%%%%%%%%
\begin{figure*}[t]
\vskip -0.1in
\centering
\captionsetup[subfigure]{oneside,margin={1cm,0cm}}
	\begin{subfigure}[t]{0.45\textwidth}
	\centering
	\includegraphics[width=.8\textwidth, height=0.5\textwidth]{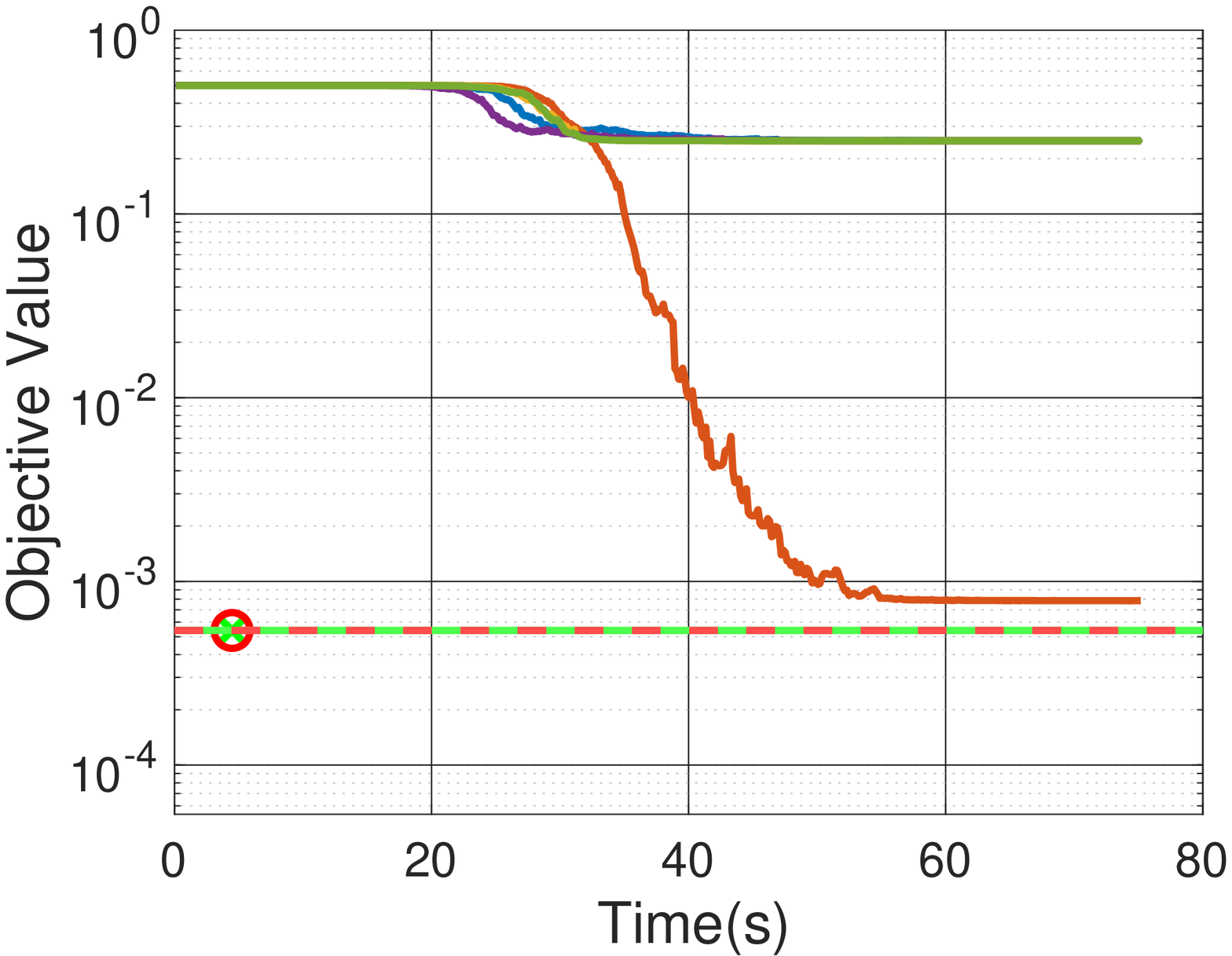}
	\caption{Independent realizations with $m=5$\centering}\vskip -0.1in \label{fig:synthetic_m5}
\end{subfigure} \hspace*{\fill}
	\begin{subfigure}[t]{0.45\textwidth}
	\centering
	\includegraphics[width=.8\textwidth, height=0.5\textwidth]{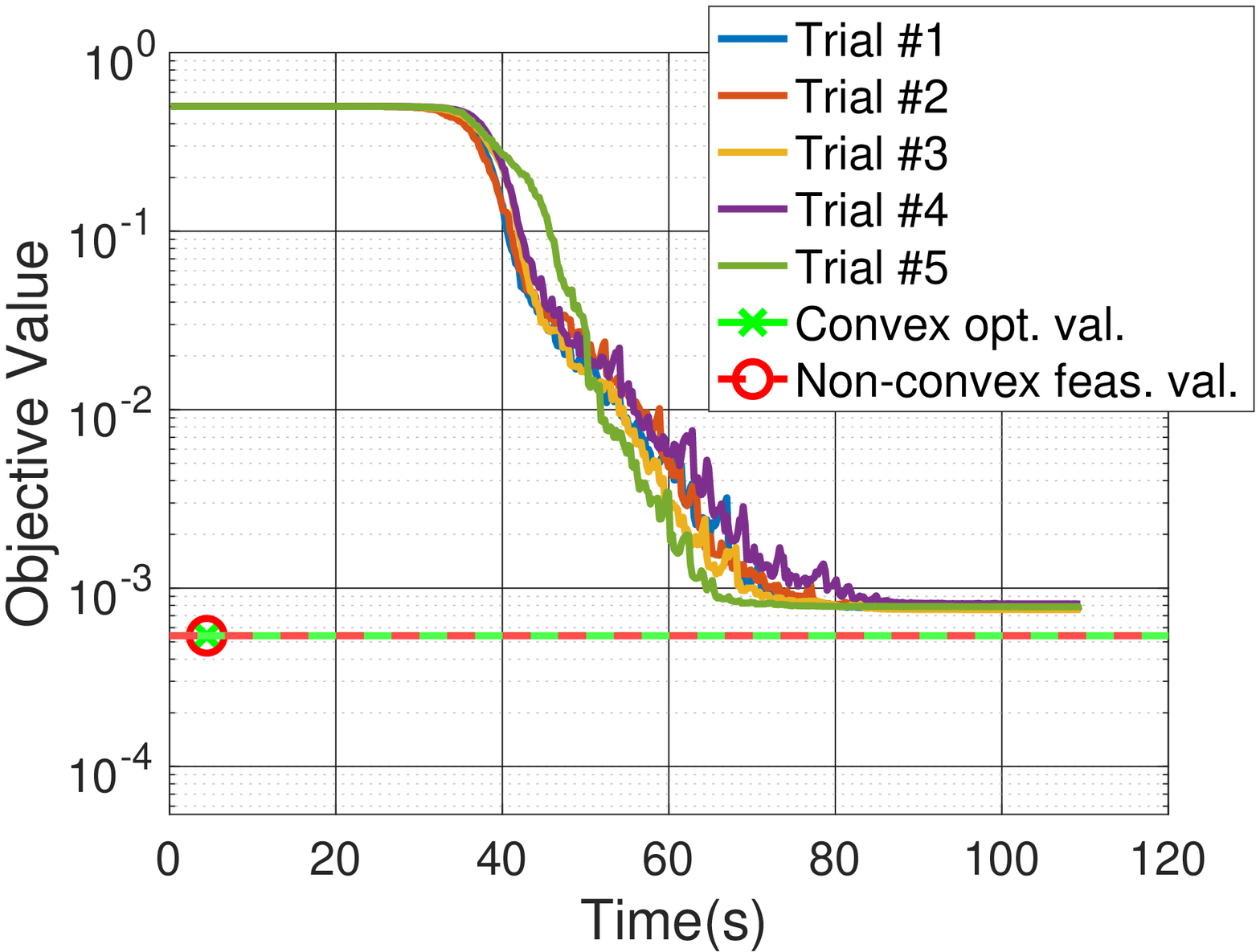}
	\caption{Independent realizations with $m=15$\centering} \vskip -0.1in\label{fig:synthetic_m15}
\end{subfigure} \hspace*{\fill}

\caption{Training cost of the three-layer circular CNN trained with SGD (5 initialization trials) on a synthetic dataset ($n=6$, $d=20$, $h=3$, stride $ =1$), where the green and red line with a marker represent the objective value obtained by the proposed convex program in \eqref{eq:circular_final_theorem} and the non-convex objective value in \eqref{eq:circular_primal1} of a feasible network with the weights found by the convex program, respectively. We use markers to denote the total computation time of the convex solver.}\label{fig:synthetic}
\end{figure*}
\begin{figure*}[t]
\centering
\captionsetup[subfigure]{oneside,margin={1cm,0cm}}
	\begin{subfigure}[t]{0.45\textwidth}
	\centering
	\includegraphics[width=.8\textwidth, height=0.5\textwidth]{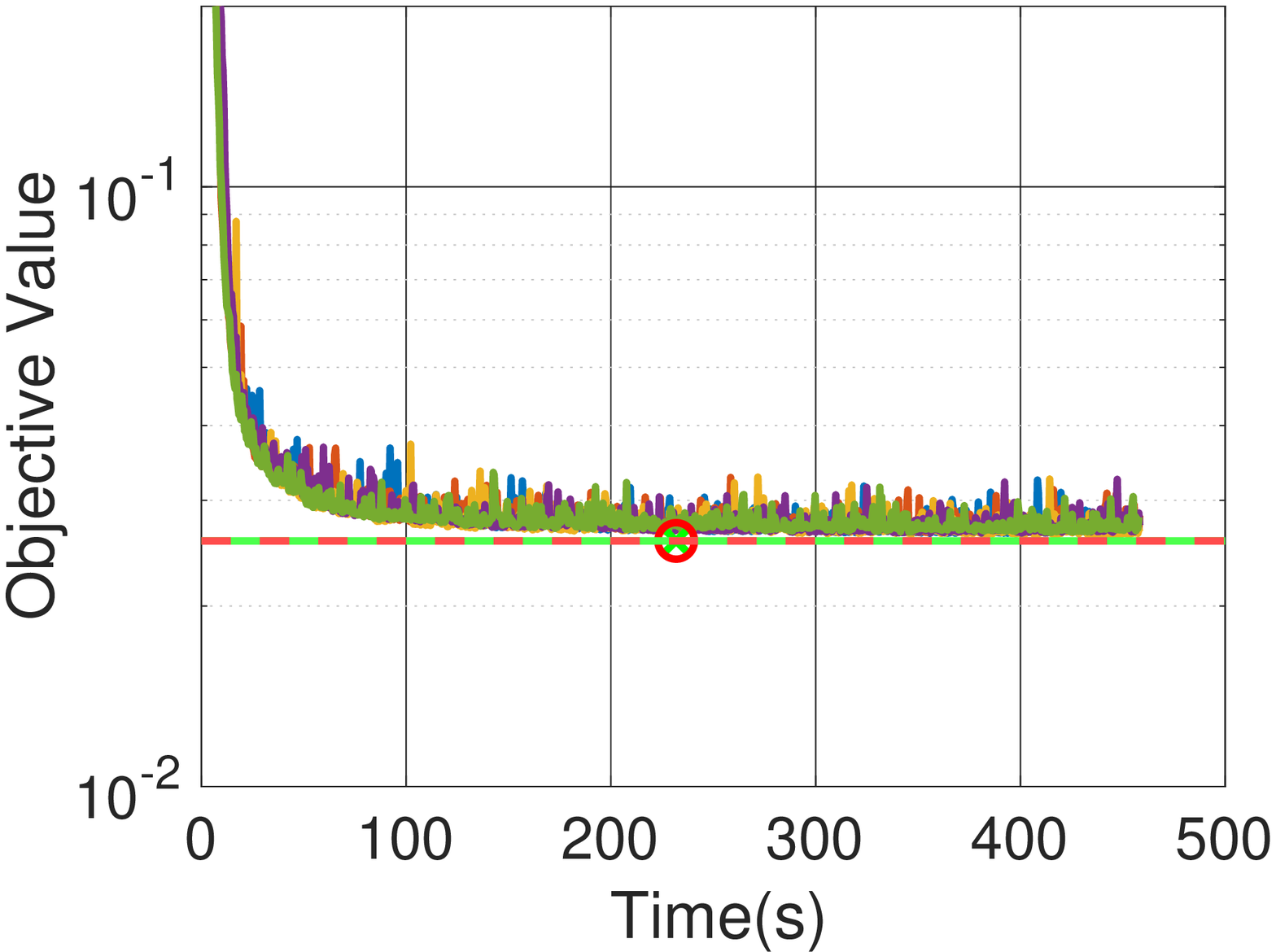}
            \caption{MNIST-Training objective}
\label{fig:mnist_obj}
\end{subfigure}  \hfill
	\begin{subfigure}[t]{0.45\textwidth}
	\centering
	\includegraphics[width=.8\textwidth, height=0.5\textwidth]{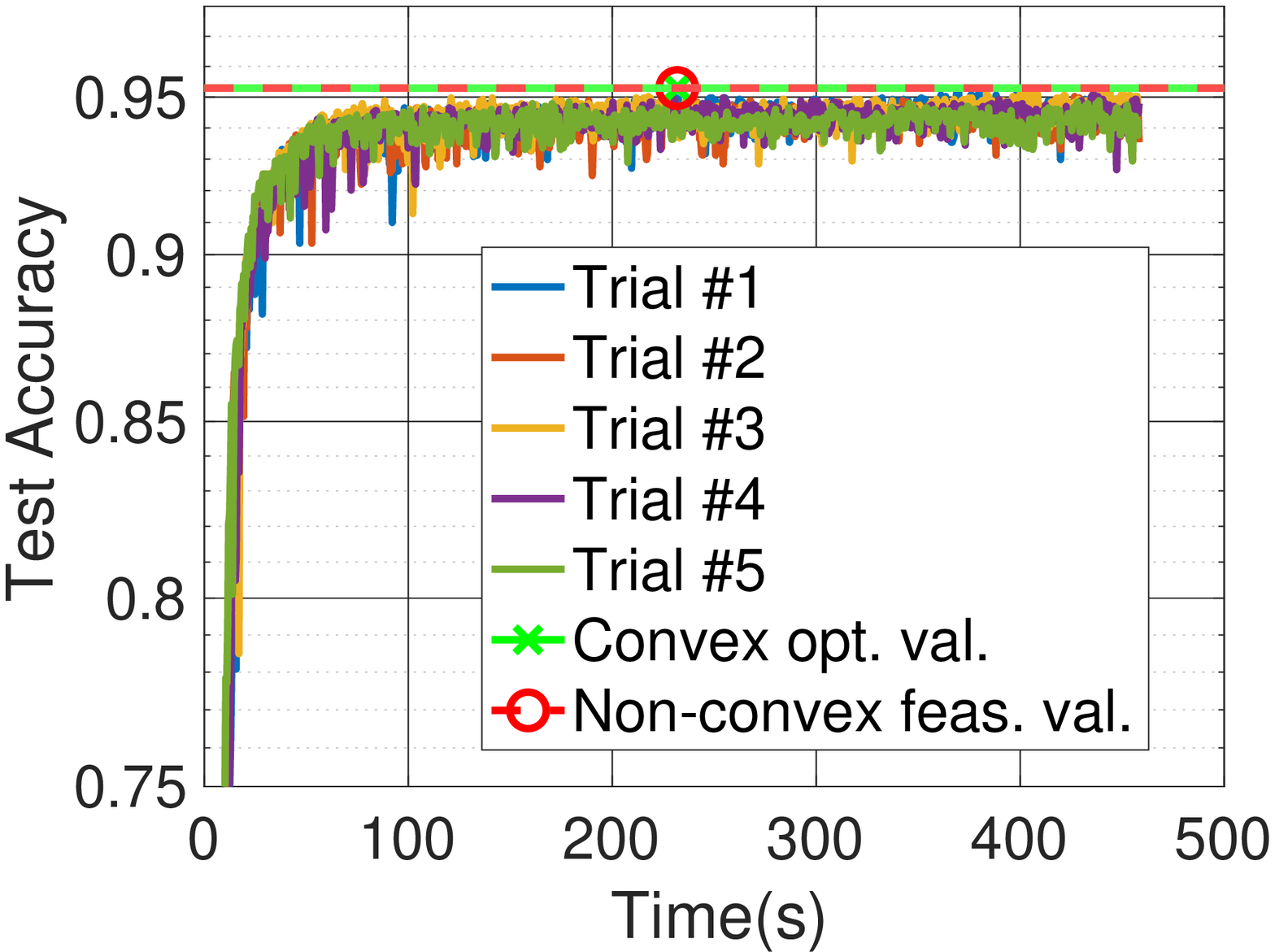}
            \caption{MNIST-Test accuracy}
	\label{fig:mnist_test}
\end{subfigure}        
        \centering
	\begin{subfigure}[t]{0.45\textwidth}
	\centering
	\includegraphics[width=.8\textwidth, height=0.5\textwidth]{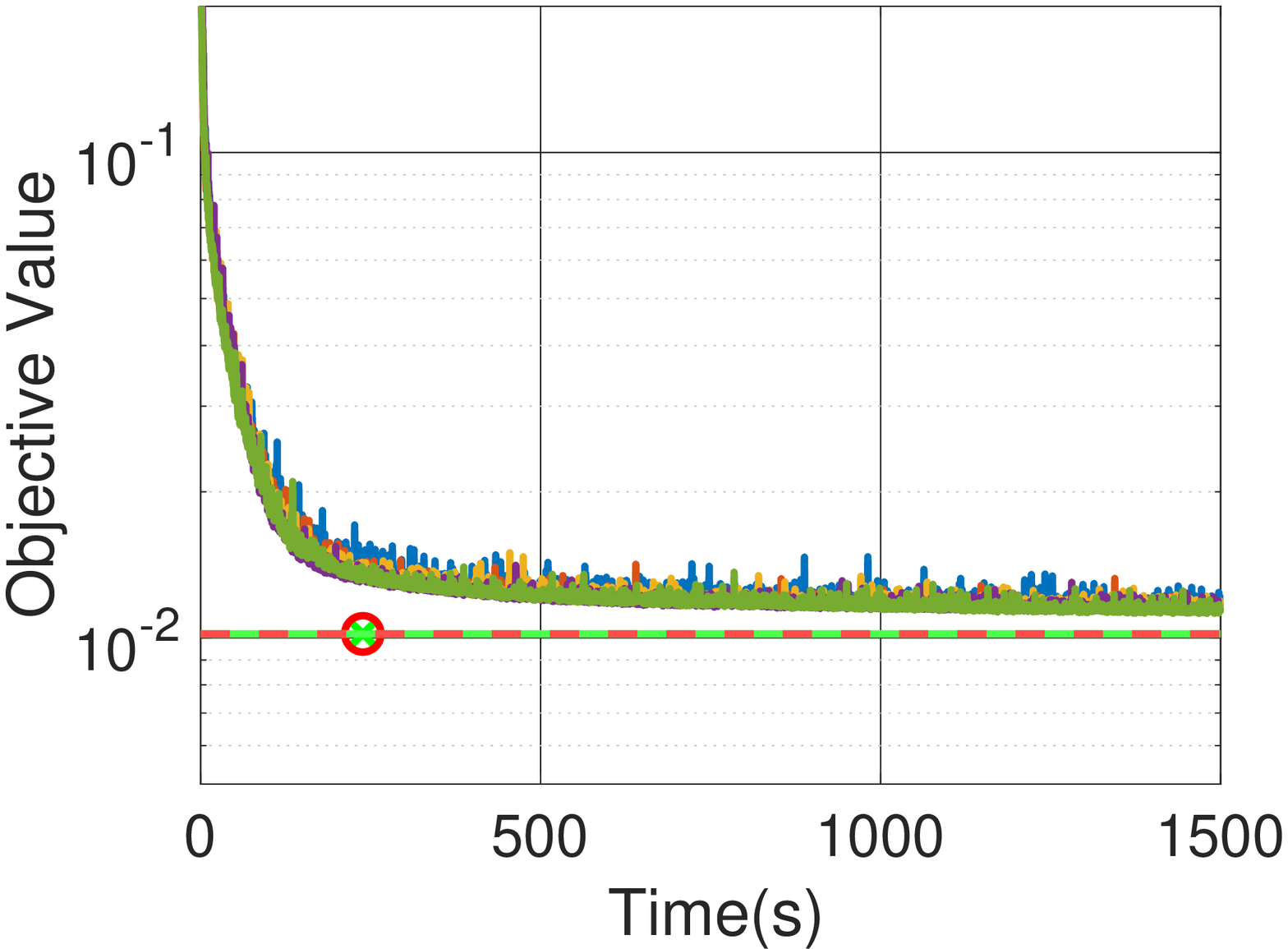}
            \caption{CIFAR10-Training objective}
	\label{fig:cifar_obj}
\end{subfigure}  \hfill
	\begin{subfigure}[t]{0.45\textwidth}
	\centering
	\includegraphics[width=.8\textwidth, height=0.5\textwidth]{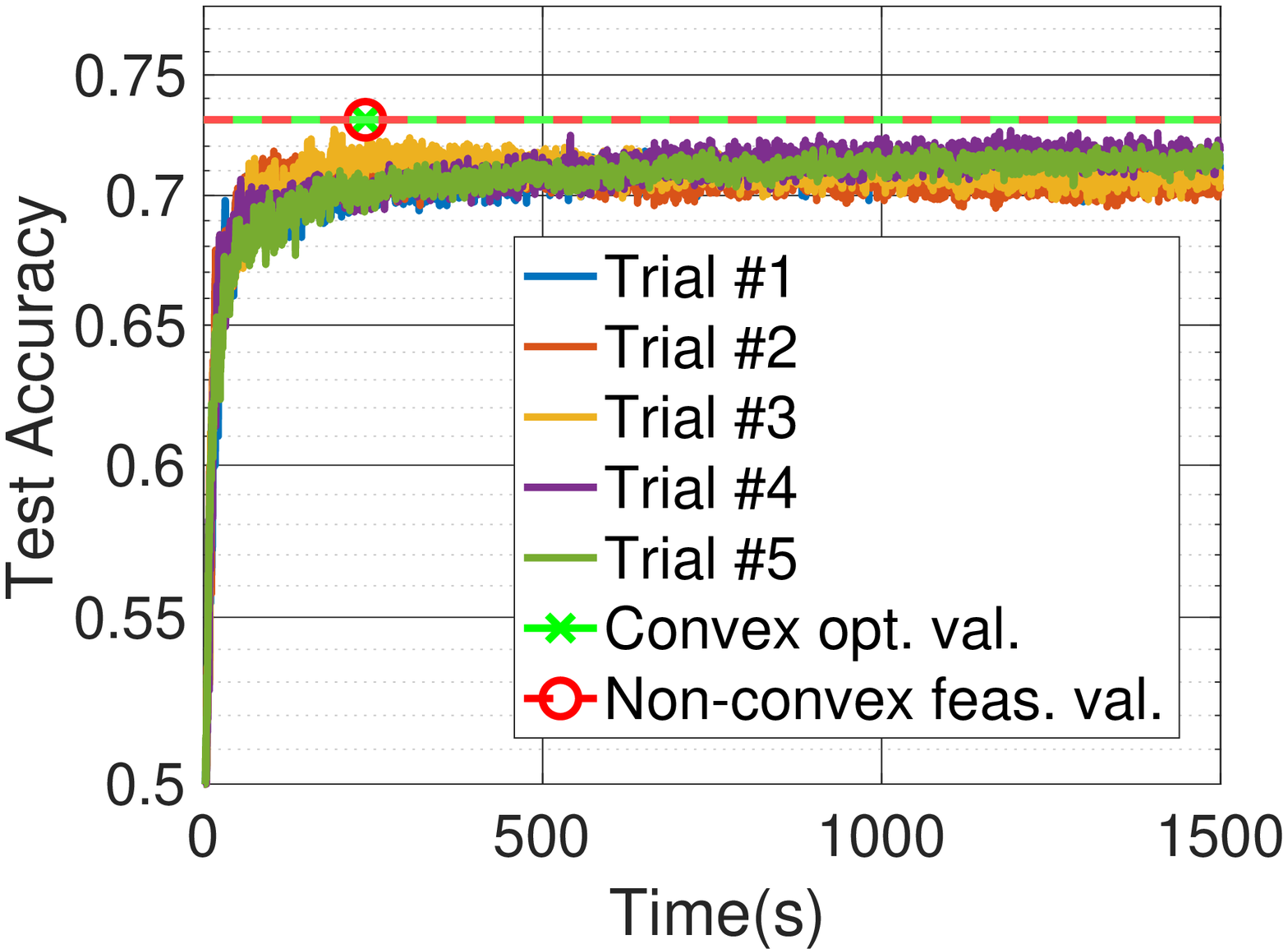}
            \caption{CIFAR10-Test accuracy}
	\label{fig:cifar_test}
\end{subfigure}  
\caption{Evaluation of the three-layer circular CNN trained with SGD (5 initialization trials) on a subset of MNIST ($n=99$, $d=50$, $m=20$, $h=3$, stride $ =1$) and  CIFAR10 ($n=99$, $d=50$, $m=40$, $h=3$, stride $ =1$). }\label{fig:mnist_cifar}
\vskip -0.25in
\end{figure*}
%%%%%%%%%%%%%%%%%%%%%%%% figure %%%%%%%%%%%%%%%%%%%%%%%%%%%%%%%

%%%%%%%%%%%%%%%%%%%%%%%% new section %%%%%%%%%%%%%%%%%%%%%%%%%%%%%%%
\section{Numerical experiments}\label{sec:numerical experiments}\vskip -0.1in
In this section\footnote{Additional numerical results can be found in Appendix \ref{sec:additional_numerical}.}\textsuperscript{,}\footnote{We use CVX \citep{cvx} and CVXPY \citep{cvxpy,cvxpy_rewriting} with the SDPT3 solver \citep{tutuncu2001sdpt3} to solve convex optimization problems.}, we present numerical experiments to verify our claims. We first consider a synthetic dataset, where $(n,d)=(6,20)$, $\data \in \mathbb{R}^{6\times 20}$ is generated using a multivariate normal distribution with zero mean and identity covariance, and $\labelvec=[1\, -1\, 1  \, -1 \, -1 \,1]^T$. We then train the three-layer circular CNN model in \eqref{eq:circular_primal1} using SGD and the convex program \eqref{eq:circular_final_theorem}. In Figure \ref{fig:synthetic}, we plot the regularized objective value with respect to the computation time with $5$ different independent realizations for SGD. We also plot both the non-convex objective in \eqref{eq:circular_primal1} and the convex objective in \eqref{eq:circular_final_theorem} for our convex program, where optimal prescribed parameters are used to convert the solution of the convex program to the original non-convex CNN architecture (see Appendix \ref{sec:circular_appendix}). In Figure \ref{fig:synthetic_m5}, we use $5$ filters with $h=3$ and stride $1$, where only one trial converges to the optimal objective value achieved by both our convex program and feasible network. As $m$ increases, all the trials are able to converge to the optimal objective value in Figure \ref{fig:synthetic_m15}. We also evaluate the same model on a subset of MNIST \citep{mnist} and CIFAR10 \citep{cifar10} for binary classification. Here, we first randomly sample the dataset and then select $(n,d,m,h,\text{stride}) = (99,50,20,3,1)$ and a batch size of 10 for SGD. Similarly for CIFAR10, we select $(n,d,m,h,\text{stride}) = (99,50,40,3,1)$ and use a batch size of 10 for SGD. In Figure \ref{fig:mnist_cifar}, we plot both the regularized objective values in \eqref{eq:circular_primal1} and \eqref{eq:circular_final_theorem}, and the corresponding test accuracies with the computation time. Since the number of filters is large enough, all the SGD trials converge the optimal value provided by our convex program.

%%%%%%%%%%%%%%%%%%%%%%%% new section %%%%%%%%%%%%%%%%%%%%%%%%%%%%%%%
\section{Concluding remarks}\label{sec:conclusion}\vskip -0.1in
We studied various non-convex CNN training problems and introduced exact finite dimensional convex programs. Particularly, we provide equivalent convex characterizations for ReLU CNN architectures in a higher dimensional space. Unlike the previous studies, we prove that these equivalent characterizations have polynomial complexity in all input parameters and can be globally optimized via convex optimization solvers. Furthermore, we show that depending on the type of a CNN architecture, equivalent convex programs might exhibit different norm regularization structure, e.g., $\ell_1$, $\ell_2$, and nuclear norm. Thus, we claim that the \emph{implicit regularization} phenomenon in modern neural networks architectures can be precisely characterized as convex regularizers. Therefore, extending our results to deeper networks is a promising direction. We also conjecture that the proposed convex approach can also be used to analyze popular heuristic techniques to train modern deep learning architectures. For example, after our work, \cite{ergen2021batchnorm} studied batch normalization through our convex framework and revealed an implicit patchwise whitening effect. Similarly, \cite{sahiner2021vectoroutput} extended our model to vector outputs. More importantly, in the light of our results, efficient optimization algorithms can be developed to exactly (or approximately) optimize deep CNN architectures for large scale experiments in practice, which is left for future research.

\section*{Acknowledgements}
This work was partially supported by the National Science Foundation under grants IIS-1838179 and ECCS-2037304, Facebook Research, Adobe Research and Stanford SystemX Alliance.

\bibliography{references}

\begin{thebibliography}{43}
\providecommand{\natexlab}[1]{#1}
\providecommand{\url}[1]{\texttt{#1}}
\expandafter\ifx\csname urlstyle\endcsname\relax
  \providecommand{\doi}[1]{doi: #1}\else
  \providecommand{\doi}{doi: \begingroup \urlstyle{rm}\Url}\fi

\bibitem[Agrawal et~al.(2018)Agrawal, Verschueren, Diamond, and
  Boyd]{cvxpy_rewriting}
Akshay Agrawal, Robin Verschueren, Steven Diamond, and Stephen Boyd.
\newblock A rewriting system for convex optimization problems.
\newblock \emph{Journal of Control and Decision}, 5\penalty0 (1):\penalty0
  42--60, 2018.

\bibitem[Bach(2017)]{bach2017breaking}
Francis Bach.
\newblock Breaking the curse of dimensionality with convex neural networks.
\newblock \emph{The Journal of Machine Learning Research}, 18\penalty0
  (1):\penalty0 629--681, 2017.

\bibitem[Belilovsky et~al.(2019)Belilovsky, Eickenberg, and
  Oyallon]{belilovsky2019greedy}
Eugene Belilovsky, Michael Eickenberg, and Edouard Oyallon.
\newblock Greedy layerwise learning can scale to imagenet.
\newblock In \emph{International Conference on Machine Learning}, pp.\
  583--593, 2019.

\bibitem[Bengio et~al.(2006)Bengio, Roux, Vincent, Delalleau, and
  Marcotte]{bengio2006convex}
Yoshua Bengio, Nicolas~L Roux, Pascal Vincent, Olivier Delalleau, and Patrice
  Marcotte.
\newblock Convex neural networks.
\newblock In \emph{Advances in neural information processing systems}, pp.\
  123--130, 2006.

\bibitem[Blanc et~al.(2019)Blanc, Gupta, Valiant, and
  Valiant]{implicit_reg_blanc}
Guy Blanc, Neha Gupta, Gregory Valiant, and Paul Valiant.
\newblock Implicit regularization for deep neural networks driven by an
  ornstein-uhlenbeck like process.
\newblock \emph{CoRR}, abs/1904.09080, 2019.
\newblock URL \url{http://arxiv.org/abs/1904.09080}.

\bibitem[Boob et~al.(2018)Boob, Dey, and Lan]{boob2018complexity}
Digvijay Boob, Santanu~S Dey, and Guanghui Lan.
\newblock Complexity of training relu neural network.
\newblock \emph{arXiv preprint arXiv:1809.10787}, 2018.

\bibitem[Boyd \& Vandenberghe(2004)Boyd and Vandenberghe]{boyd_convex}
Stephen Boyd and Lieven Vandenberghe.
\newblock \emph{Convex optimization}.
\newblock Cambridge university press, 2004.

\bibitem[Cover(1965)]{cover1965geometrical}
Thomas~M Cover.
\newblock Geometrical and statistical properties of systems of linear
  inequalities with applications in pattern recognition.
\newblock \emph{IEEE transactions on electronic computers}, \penalty0
  (3):\penalty0 326--334, 1965.

\bibitem[Diamond \& Boyd(2016)Diamond and Boyd]{cvxpy}
Steven Diamond and Stephen Boyd.
\newblock {CVXPY}: A {P}ython-embedded modeling language for convex
  optimization.
\newblock \emph{Journal of Machine Learning Research}, 17\penalty0
  (83):\penalty0 1--5, 2016.

\bibitem[Edelsbrunner et~al.(1986)Edelsbrunner, O’Rourke, and
  Seidel]{edelsbrunner1986constructing}
Herbert Edelsbrunner, Joseph O’Rourke, and Raimund Seidel.
\newblock Constructing arrangements of lines and hyperplanes with applications.
\newblock \emph{SIAM Journal on Computing}, 15\penalty0 (2):\penalty0 341--363,
  1986.

\bibitem[Ergen \& Pilanci(2019)Ergen and Pilanci]{ergen2019cutting}
Tolga Ergen and Mert Pilanci.
\newblock Convex duality and cutting plane methods for over-parameterized
  neural networks.
\newblock In \emph{OPT-ML workshop}, 2019.

\bibitem[Ergen \& Pilanci(2020{\natexlab{a}})Ergen and Pilanci]{ergen2020cnn}
Tolga Ergen and Mert Pilanci.
\newblock Convex programs for global optimization of convolutional neural
  networks in polynomial-time.
\newblock In \emph{OPT-ML workshop}, 2020{\natexlab{a}}.

\bibitem[Ergen \& Pilanci(2020{\natexlab{b}})Ergen and
  Pilanci]{ergen2021deepduality}
Tolga Ergen and Mert Pilanci.
\newblock Revealing the structure of deep neural networks via convex duality.
\newblock \emph{arXiv preprint arXiv:2002.11219}, 2020{\natexlab{b}}.

\bibitem[Ergen \& Pilanci(2020{\natexlab{c}})Ergen and Pilanci]{ergen_convex}
Tolga Ergen and Mert Pilanci.
\newblock Convex geometry and duality of over-parameterized neural networks.
\newblock \emph{arXiv preprint arXiv:2002.11219}, 2020{\natexlab{c}}.

\bibitem[Ergen \& Pilanci(2020{\natexlab{d}})Ergen and Pilanci]{ergen_convex2}
Tolga Ergen and Mert Pilanci.
\newblock Convex geometry of two-layer relu networks: Implicit autoencoding and
  interpretable models.
\newblock In Silvia Chiappa and Roberto Calandra (eds.), \emph{Proceedings of
  the Twenty Third International Conference on Artificial Intelligence and
  Statistics}, volume 108 of \emph{Proceedings of Machine Learning Research},
  pp.\  4024--4033, Online, 26--28 Aug 2020{\natexlab{d}}. PMLR.
\newblock URL \url{http://proceedings.mlr.press/v108/ergen20a.html}.

\bibitem[Ergen et~al.(2021)Ergen, Sahiner, Ozturkler, Pauly, Mardani, and
  Pilanci]{ergen2021batchnorm}
Tolga Ergen, Arda Sahiner, Batu Ozturkler, John Pauly, Morteza Mardani, and
  Mert Pilanci.
\newblock Demystifying batch normalization in relu networks: Equivalent convex
  optimization models and implicit regularization.
\newblock 2021.

\bibitem[Geirhos et~al.(2018)Geirhos, Rubisch, Michaelis, Bethge, Wichmann, and
  Brendel]{geirhos2018imagenet}
Robert Geirhos, Patricia Rubisch, Claudio Michaelis, Matthias Bethge, Felix~A
  Wichmann, and Wieland Brendel.
\newblock Imagenet-trained cnns are biased towards texture; increasing shape
  bias improves accuracy and robustness.
\newblock In \emph{International Conference on Learning Representations}, 2018.

\bibitem[Goberna \& L\'{o}pez-Cerd\'{a}(1998)Goberna and
  L\'{o}pez-Cerd\'{a}]{semiinfinite_goberna}
Miguel~Angel Goberna and Marco L\'{o}pez-Cerd\'{a}.
\newblock \emph{Linear semi-infinite optimization}.
\newblock 01 1998.
\newblock \doi{10.1007/978-1-4899-8044-1_3}.

\bibitem[Grant \& Boyd(2014)Grant and Boyd]{cvx}
Michael Grant and Stephen Boyd.
\newblock {CVX}: Matlab software for disciplined convex programming, version
  2.1.
\newblock \url{http://cvxr.com/cvx}, March 2014.

\bibitem[Gunasekar et~al.(2018)Gunasekar, Lee, Soudry, and
  Srebro]{gunasaker_cnn}
Suriya Gunasekar, Jason~D Lee, Daniel Soudry, and Nati Srebro.
\newblock Implicit bias of gradient descent on linear convolutional networks.
\newblock In S.~Bengio, H.~Wallach, H.~Larochelle, K.~Grauman, N.~Cesa-Bianchi,
  and R.~Garnett (eds.), \emph{Advances in Neural Information Processing
  Systems 31}, pp.\  9461--9471. Curran Associates, Inc., 2018.

\bibitem[Krizhevsky et~al.(2014)Krizhevsky, Nair, and Hinton]{cifar10}
Alex Krizhevsky, Vinod Nair, and Geoffrey Hinton.
\newblock The {CIFAR}-10 dataset.
\newblock \url{http://www. cs. toronto. edu/kriz/cifar. html}, 2014.

\bibitem[LeCun()]{mnist}
Yann LeCun.
\newblock The {MNIST} database of handwritten digits.
\newblock \url{http://yann. lecun. com/exdb/mnist/}.

\bibitem[LeCun et~al.(2015)LeCun, Bengio, and Hinton]{lecun2015deep}
Yann LeCun, Yoshua Bengio, and Geoffrey Hinton.
\newblock Deep learning.
\newblock \emph{nature}, 521\penalty0 (7553):\penalty0 436--444, 2015.

\bibitem[Maennel et~al.(2018)Maennel, Bousquet, and Gelly]{quantized_hartmut}
Hartmut Maennel, Olivier Bousquet, and Sylvain Gelly.
\newblock Gradient descent quantizes relu network features.
\newblock \emph{arXiv preprint arXiv:1803.08367}, 2018.

\bibitem[Meier et~al.(2008)Meier, van~de Geer, and B\"{u}hlmann]{MeiVanBuh08}
L.~Meier, S.~van~de Geer, and P.~B\"{u}hlmann.
\newblock The group lasso for logistic regression.
\newblock \emph{Journal of the Royal Statistical Society, Series B},
  70:\penalty0 53--71, 2008.

\bibitem[Neyshabur et~al.(2014)Neyshabur, Tomioka, and Srebro]{neyshabur_reg}
Behnam Neyshabur, Ryota Tomioka, and Nathan Srebro.
\newblock In search of the real inductive bias: On the role of implicit
  regularization in deep learning.
\newblock \emph{arXiv preprint arXiv:1412.6614}, 2014.

\bibitem[Ojha(2000)]{ojha2000enumeration}
Piyush~C Ojha.
\newblock Enumeration of linear threshold functions from the lattice of
  hyperplane intersections.
\newblock \emph{IEEE Transactions on Neural Networks}, 11\penalty0
  (4):\penalty0 839--850, 2000.

\bibitem[Parhi \& Nowak(2019)Parhi and Nowak]{parhi_minimum}
Rahul Parhi and Robert~D. Nowak.
\newblock Minimum "norm" neural networks are splines, 2019.

\bibitem[Pilanci \& Ergen(2020)Pilanci and Ergen]{pilanci2020neural}
Mert Pilanci and Tolga Ergen.
\newblock Neural networks are convex regularizers: Exact polynomial-time convex
  optimization formulations for two-layer networks, 2020.

\bibitem[Rahaman et~al.(2019)Rahaman, Baratin, Arpit, Draxler, Lin, Hamprecht,
  Bengio, and Courville]{rahaman2019spectral}
Nasim Rahaman, Aristide Baratin, Devansh Arpit, Felix Draxler, Min Lin, Fred
  Hamprecht, Yoshua Bengio, and Aaron Courville.
\newblock On the spectral bias of neural networks.
\newblock In \emph{International Conference on Machine Learning}, pp.\
  5301--5310. PMLR, 2019.

\bibitem[Recht et~al.(2010)Recht, Fazel, and Parrilo]{recht2010guaranteed}
Benjamin Recht, Maryam Fazel, and Pablo~A Parrilo.
\newblock Guaranteed minimum-rank solutions of linear matrix equations via
  nuclear norm minimization.
\newblock \emph{SIAM review}, 52\penalty0 (3):\penalty0 471--501, 2010.

\bibitem[Rosset et~al.(2007)Rosset, Swirszcz, Srebro, and Zhu]{rosset2007}
Saharon Rosset, Grzegorz Swirszcz, Nathan Srebro, and Ji~Zhu.
\newblock L1 regularization in infinite dimensional feature spaces.
\newblock In \emph{International Conference on Computational Learning Theory},
  pp.\  544--558. Springer, 2007.

\bibitem[Rudin(1964)]{Rudin}
Walter Rudin.
\newblock \emph{Principles of Mathematical Analysis}.
\newblock Mc{G}raw-Hill, New York, 1964.

\bibitem[Sahiner et~al.(2021)Sahiner, Ergen, Pauly, and
  Pilanci]{sahiner2021vectoroutput}
Arda Sahiner, Tolga Ergen, John~M. Pauly, and Mert Pilanci.
\newblock Vector-output relu neural network problems are copositive programs:
  Convex analysis of two layer networks and polynomial-time algorithms.
\newblock In \emph{International Conference on Learning Representations}, 2021.
\newblock URL \url{https://openreview.net/forum?id=fGF8qAqpXXG}.

\bibitem[Savarese et~al.(2019)Savarese, Evron, Soudry, and
  Srebro]{infinite_width}
Pedro Savarese, Itay Evron, Daniel Soudry, and Nathan Srebro.
\newblock How do infinite width bounded norm networks look in function space?
\newblock \emph{CoRR}, abs/1902.05040, 2019.
\newblock URL \url{http://arxiv.org/abs/1902.05040}.

\bibitem[Shapiro(2009)]{shapiro2009semi}
Alexander Shapiro.
\newblock Semi-infinite programming, duality, discretization and optimality
  conditions.
\newblock \emph{Optimization}, 58\penalty0 (2):\penalty0 133--161, 2009.

\bibitem[Simonyan \& Zisserman(2014)Simonyan and Zisserman]{simonyan2014very}
Karen Simonyan and Andrew Zisserman.
\newblock Very deep convolutional networks for large-scale image recognition.
\newblock \emph{arXiv preprint arXiv:1409.1556}, 2014.

\bibitem[Sion(1958)]{sion_minimax}
Maurice Sion.
\newblock On general minimax theorems.
\newblock \emph{Pacific J. Math.}, 8\penalty0 (1):\penalty0 171--176, 1958.
\newblock URL \url{https://projecteuclid.org:443/euclid.pjm/1103040253}.

\bibitem[Stanley et~al.(2004)]{stanley2004introduction}
Richard~P Stanley et~al.
\newblock An introduction to hyperplane arrangements.
\newblock \emph{Geometric combinatorics}, 13:\penalty0 389--496, 2004.

\bibitem[T{\"u}t{\"u}nc{\"u} et~al.(2001)T{\"u}t{\"u}nc{\"u}, Toh, and
  Todd]{tutuncu2001sdpt3}
RH~T{\"u}t{\"u}nc{\"u}, KC~Toh, and MJ~Todd.
\newblock Sdpt3—a matlab software package for semidefinite-quadratic-linear
  programming, version 3.0.
\newblock \emph{Web page http://www. math. nus. edu. sg/mattohkc/sdpt3. html},
  2001.

\bibitem[Winder(1966)]{winder1966partitions}
RO~Winder.
\newblock Partitions of n-space by hyperplanes.
\newblock \emph{SIAM Journal on Applied Mathematics}, 14\penalty0 (4):\penalty0
  811--818, 1966.

\bibitem[Yuan \& Lin(2006)Yuan and Lin]{yuan2006model}
Ming Yuan and Yi~Lin.
\newblock Model selection and estimation in regression with grouped variables.
\newblock \emph{Journal of the Royal Statistical Society: Series B (Statistical
  Methodology)}, 68\penalty0 (1):\penalty0 49--67, 2006.

\bibitem[Zhang et~al.(2016)Zhang, Bengio, Hardt, Recht, and
  Vinyals]{understanding_zhang}
Chiyuan Zhang, Samy Bengio, Moritz Hardt, Benjamin Recht, and Oriol Vinyals.
\newblock Understanding deep learning requires rethinking generalization.
\newblock \emph{arXiv preprint arXiv:1611.03530}, 2016.

\end{thebibliography}
\bibliographystyle{iclr2021_conference}

\newpage
\appendix
\addcontentsline{toc}{section}{Appendix} % Add the appendix text to the document TOC
\part{} % Start the appendix part
\parttoc % Insert the appendix TOC

\section{Appendix}
In this section, we present additional materials and proofs of the main results that are not included in the main paper due to the page limit. 

%%%%%%%%%%%%%%%%%%%%%%% subsection %%%%%%%%%%%%%%%%%%%%%%%%%%%%%%%%%%
\subsection{Additional numerical results}\label{sec:additional_numerical}
%\textbf{Code}:\fbox{\url{https://www.dropbox.com/s/up0s1jbppe35td2/Code.zip?dl=0}}

Here, we present additional numerical experiments to further verify our theory. We first perform an experiment with another synthetic dataset, where $\data \in \mathbb{R}^{6\times 15}$ is generated using a multivariate normal distribution with zero mean and identity covariance, and $\labelvec=[1\, -1\, 1 \, 1 \, 1 \, -1 ]^T$. In this case, we use the two-layer CNN model in \eqref{eq:twolayer_primal1} and the corresponding convex program in \eqref{eq:twolayer_final_theorem}. In Figure \ref{fig:syntheticv2}, we perform the experiment using $m=5,8,15$ filters of size $h=10$ and stride $5$, where we observe that as the number of filters increases, the ratio of the trials converging to the optimal objective value increases as well.

In order to apply our convex approach in Theorem \ref{theo:twolayer_main} to larger scale experiments, we now introduce an unconstrained version of the convex program in \eqref{eq:twolayer_final_theorem} as follows
\begin{align}
    \label{eq:twolayer_final_theorem_uncons}
 & \min_{\substack{\{\cvx_{i},\cvx_{i}^\prime\}_{i=1}^{P_{conv}}\\\cvx_{i},\cvx_{i}^\prime \in \mathbb{R}^h,\forall i}} \frac{1}{2}\left\| \sum_{i=1}^{P_{conv}} \sum_{k=1}^K\diag(S_i^k)\data_{k} \left( \cvx_{i}^\prime-\cvx_{i}\right)-\labelvec\right\|_2^2+ \beta \sum_{i=1}^{P_{conv}} \left(\|\cvx_{i}\|_2+\|\cvx_{i}^\prime\|_2 \right)  \\\nonumber
& \hfill+ \rho \vec{1}^T  \sum_{i=1}^{P_{conv}} \sum_{k=1}^K\left( \relu{-(2\diag(S^k_{i})-\vec{I}_n){\data_k}\cvx_i}+\relu{-(2\diag(S^k_{i})-\vec{I}_n) {\data}_k\cvx_i^\prime}\right),
\end{align}
where $\rho >0$ is a trade-off parameter. Since the problem in \eqref{eq:twolayer_final_theorem_uncons} is in an unconstrained form, we can directly optimize its parameters using conventional algorithms such as SGD. Hence, we use PyTorch to optimize the parameters of a two-layer CNN architecture using both the non-convex objective in \eqref{eq:twolayer_primal1} and the convex objective in \eqref{eq:twolayer_final_theorem_uncons}, where we use the full CIFAR-10 dataset for binary classification, i.e., $(n,d)=(10000,3072)$. In Figure \ref{fig:cifar_pytorch}, we provide the training objective and the test accuracy of each approach with respect to the number of epochs. Here, we observe that the optimization on the convex formulation achieves lower training objective and higher test accuracy compared to the classical optimization on the non-convex problem.
\begin{figure*}[ht]
\centering
\captionsetup[subfigure]{oneside,margin={1cm,0cm}}
	\begin{subfigure}[t]{0.32\textwidth}
	\centering
	\includegraphics[width=1.0\textwidth, height=0.8\textwidth]{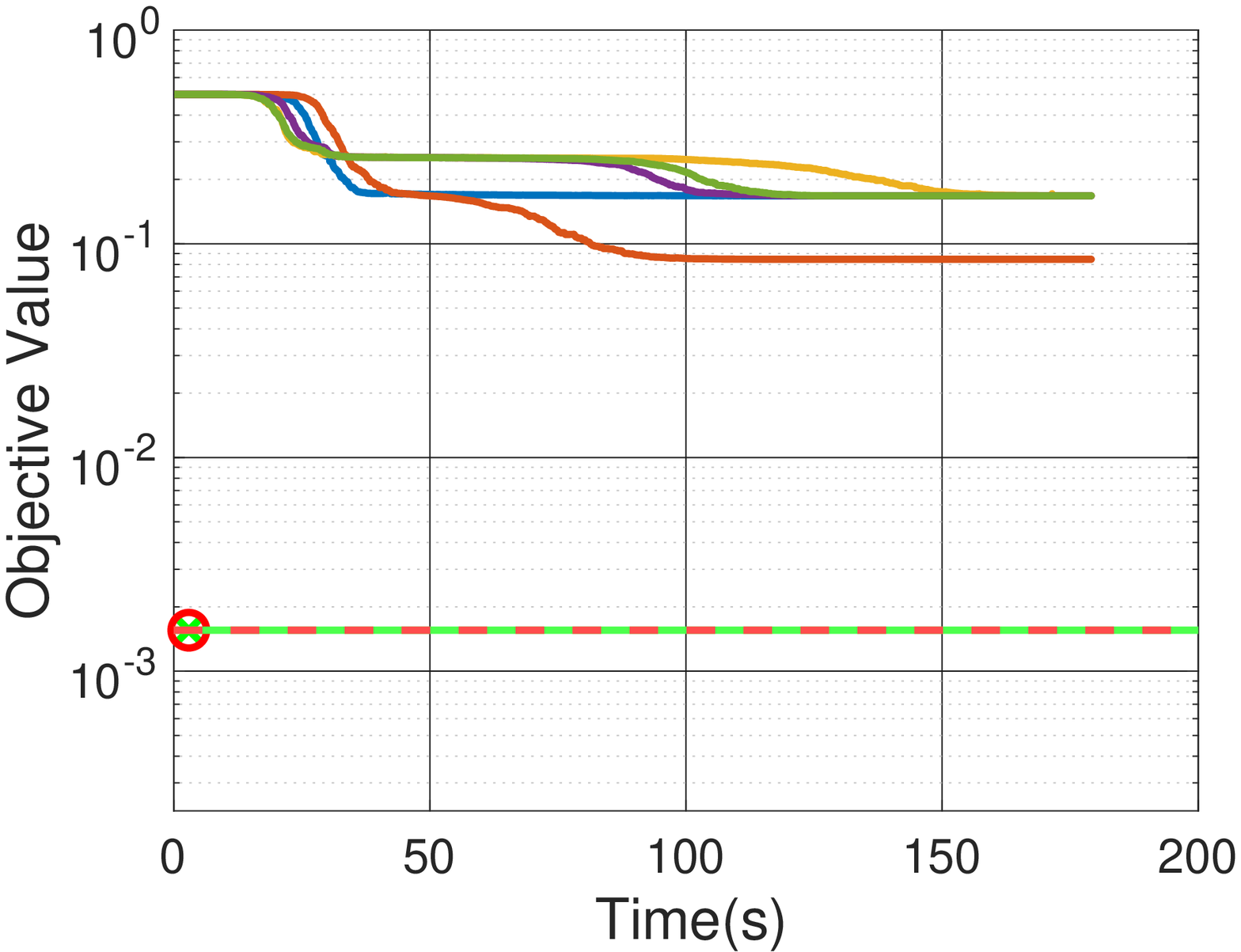}
	\caption{Independent realizations with $m=3$\centering} \label{fig:syntheticv2_m3}
\end{subfigure} \hspace*{\fill}
	\begin{subfigure}[t]{0.32\textwidth}
	\centering
	\includegraphics[width=1.0\textwidth, height=0.8\textwidth]{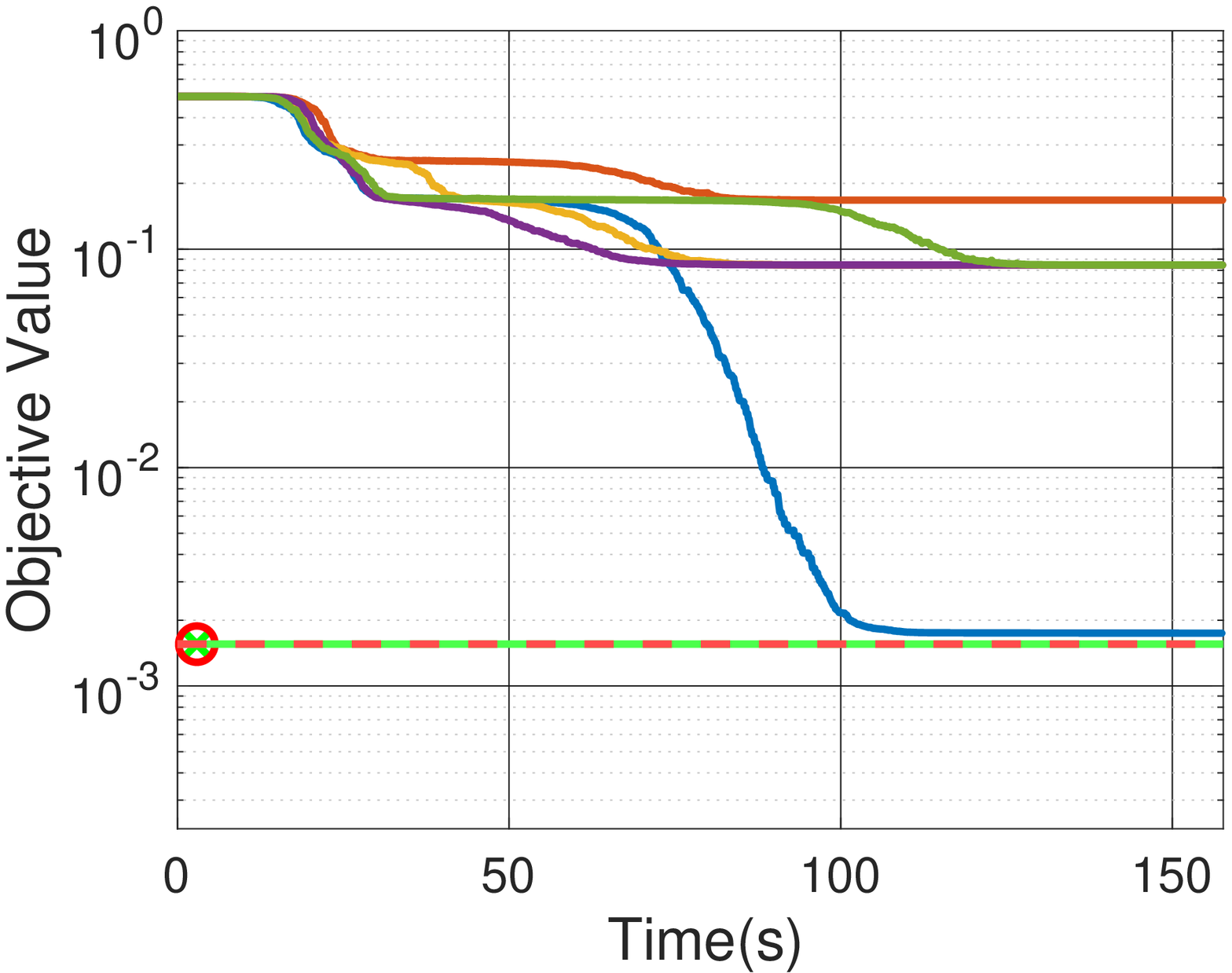}
	\caption{Independent realizations with $m=8$\centering} \label{fig:syntheticv2_m8}
\end{subfigure} \hspace*{\fill}
	\begin{subfigure}[t]{0.32\textwidth}
	\centering
	\includegraphics[width=1.11\textwidth, height=0.8\textwidth]{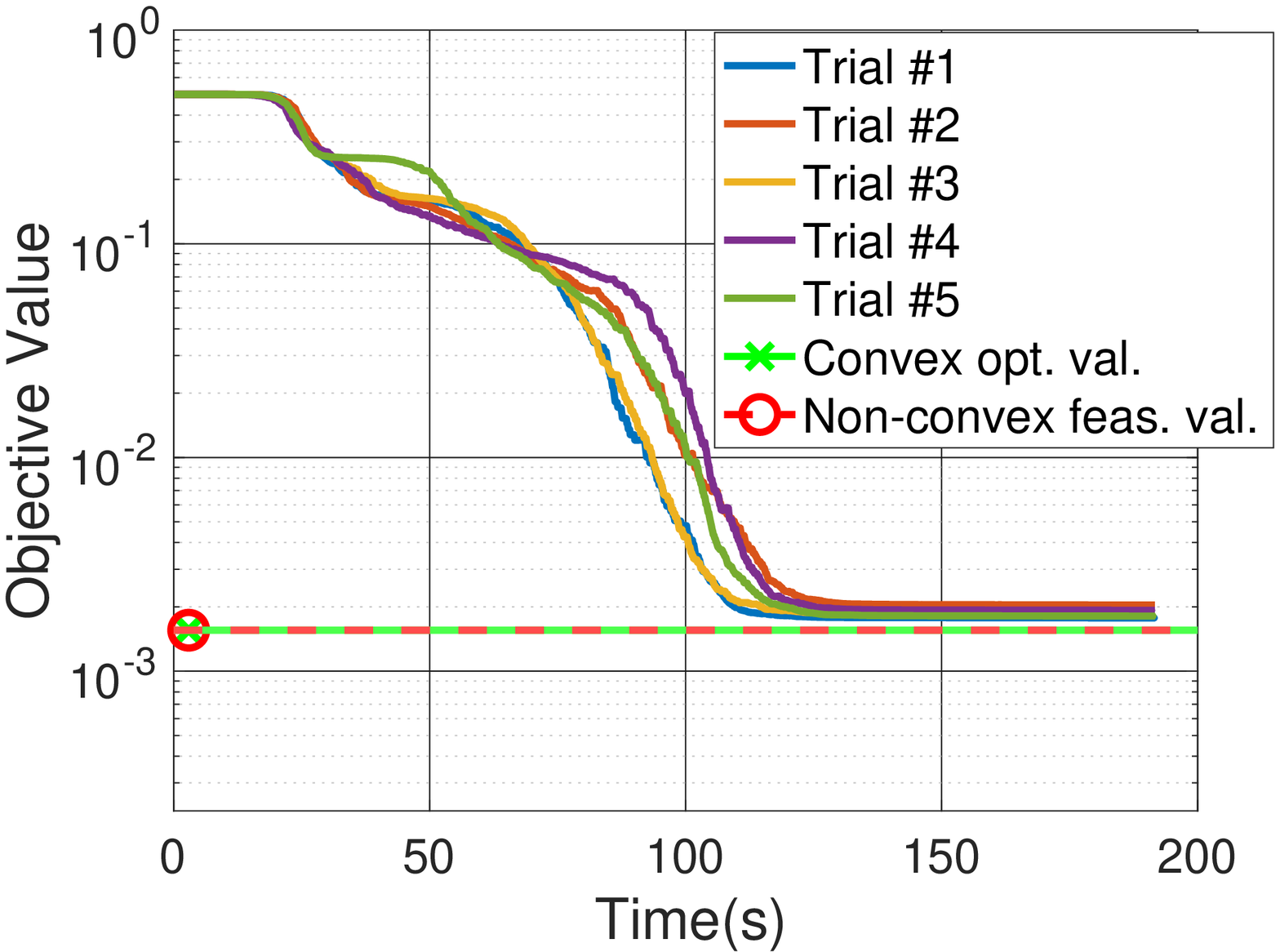}
	\caption{Independent realizations with $m=15$\centering} \label{fig:syntheticv2_m15}
\end{subfigure} \hspace*{\fill}
\caption{Training cost of a two-layer CNN (with average pooling) trained with SGD (5 initialization trials) on a synthetic dataset ($n=6$, $d=15$, $h=10$, stride $ =5$), where the green line with a marker represents the objective value obtained by the proposed convex program in \eqref{eq:twolayer_final_theorem} and the red line with a marker represents the non-convex objective value in \eqref{eq:twolayer_primal1} of a feasible network with the weights found by the convex program. Here, we use markers to denote the total computation time of the convex optimization solver.}\label{fig:syntheticv2}
%\vskip -0.1in
\end{figure*}

\begin{figure*}[ht]
\centering
\captionsetup[subfigure]{oneside,margin={1cm,0cm}}
	\begin{subfigure}[t]{0.45\textwidth}
	\centering
	\includegraphics[width=1.0\textwidth, height=0.8\textwidth]{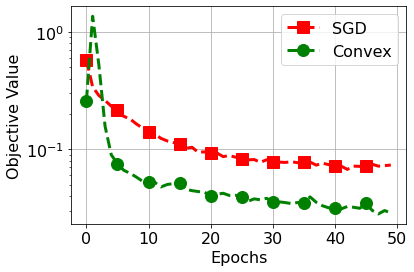}
	\caption{Objective value\centering} \label{fig:cifar_obj_pytorch}
\end{subfigure} \hspace*{\fill}
	\begin{subfigure}[t]{0.45\textwidth}
	\centering
	\includegraphics[width=1\textwidth, height=0.8\textwidth]{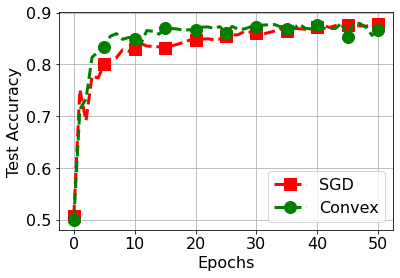}
	\caption{Test accuracy \centering} \label{fig:cifar_test_pytorch}
\end{subfigure} \hspace*{\fill}
\caption{Evaluation of two-layer CNNs trained with SGD on full CIFAR-10 ($n=10000$, $d=3072$, $m=50$, $h=12$, stride $ =4$). }\label{fig:cifar_pytorch}
%\vskip -0.1in
\end{figure*}

\subsection{Constructing hyperplane arrangements in polynomial time}\label{sec:hyperplane_arrangements}
In this section, we discuss the number of distinct hyperplane arrangements, i.e., $P$, and present algorithm that enumerates all the distinct arrangements in polynomial time. 

We first consider the number of all distinct sign patterns $\text{sign}(\data \weight)$ for all $\weight \in \mathbb{R}^d$. This number corresponds to the number of regions in a partition of $\mathbb{R}^d$ by hyperplanes passing through the origin, and are perpendicular to the rows of $\data$. Here, one can replace the dimensionality $d$ with the rank of the data matrix $\data$, i.e., denoted as $r$, without loss of generality. Let us first introduce the Singular Value Decomposition of $\data$ in a compact form as $\data= \vec{U} \boldsymbol{\Sigma} \vec{V}^T$, where $\vec{U} \in \mathbb{R}^{n \times r}$, $\boldsymbol{\Sigma}\in \mathbb{R}^{r\times r}$, and $\vec{V} \in \mathbb{R}^{r \times d}$. Then, for a given vector $\weight \in \mathbb{R}^d$, $\data \weight=\vec{U}\weight^{\prime}$, where $\weight^\prime=\boldsymbol{\Sigma} \vec{V}^T\weight$, $\weight^\prime \in \mathbb{R}^r$. Hence, the number of distinct sign patterns $\text{sign}(\data \weight)$ for all possible $\weight \in \mathbb{R}^d$ is equal to the number of sign patterns $\text{sign}(\vec{U} \weight^\prime)$ for all possible $\weight^\prime \in \mathbb{R}^r$.

Consider an arrangement of $n$ hyperplanes in $ \real^r$, where $n \ge r$. Let us denote the number of regions in this arrangement by $P_{n,r}$. In \citet{ojha2000enumeration,cover1965geometrical}, it is shown that this number satisfies
\begin{align*}
    P_{n,r} \le 2\sum_{k=0}^{r-1}{n-1 \choose k}\,.
\end{align*}
For hyperplanes in general position, the above inequality is in fact an equality.
In \citet{edelsbrunner1986constructing}, the authors present an algorithm that enumerates all possible hyperplane arrangements $O(n^r)$ time, which can be used to construct the data for the convex programs we present throughout the paper.

%%%%%%%%%%%%%%%%%%%%%%% subsection %%%%%%%%%%%%%%%%%%%%%%%%%%%%%%%%%%
\subsection{Equivalence of the $\ell_1$ penalized objectives}\label{sec:scaling_tricks}
In this section, we prove the equivalence between the original problems with $\ell_2$ regularization and their $\ell_1$ penalized versions. We also note that similar equivalence results were also presented in \citet{infinite_width,neyshabur_reg,ergen2019cutting, ergen_convex,ergen_convex2}. We start with the equivalence between \eqref{eq:twolayer_primal1} and \eqref{eq:twolayer_primal2}. 
\begin{lem} \label{lemma:scaling_twolayer}
The following two problems are equivalent:
\begin{align*}
  %\begin{split}
      &\min_{\{\firstw_j, \weightscalar_{j}\}_{j=1}^m} \frac{1}{2}\left\|\sum_{j=1}^m \sum_{k=1}^K\relu{\data_k \firstw_j } \weightscalar_{j} -\labelvec \right\|_2^2 + \frac{\beta}{2} \sum_{j=1}^m \left(\|\firstw_j\|_2^2+\weightscalar_{j}^2 \right)
  %\end{split}\\=\begin{split}
 \\ &=  \min_{\substack{\{\firstw_j, \weightscalar_{j}\}_{j=1}^m\\ \firstw_j \in \ball_2,\forall j}}\frac{1}{2}\left\|\sum_{j=1}^m \sum_{k=1}^K\relu{\data_k \firstw_j } \weightscalar_{j} -\labelvec \right\|_2^2 + \beta \sum_{j=1}^m \|\weight\|_1  .
 % \end{split}.
\end{align*}
\end{lem}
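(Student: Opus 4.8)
The plan is to establish the equivalence via the standard rescaling argument for homogeneous activations. The key observation is that ReLU is positively homogeneous of degree one, so $\relu{\data_k (t\firstw_j)} = t\,\relu{\data_k \firstw_j}$ for any $t>0$. This means that for each neuron $j$ we may replace $(\firstw_j, \weightscalar_j)$ by $(t_j \firstw_j, \weightscalar_j / t_j)$ for any $t_j > 0$ without changing the network output $\sum_{j,k}\relu{\data_k \firstw_j}\weightscalar_j$. Hence the two problems have the same feasible set of \emph{functions}; only the regularizers differ, and we must show their minima over the scaling orbits coincide.

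First I would show the inequality ``$\geq$'' (weight-decay problem dominates the $\ell_1$-constrained one): given any feasible $\{\firstw_j, \weightscalar_j\}$ for the left-hand problem with $\firstw_j \neq 0$, rescale to $\tilde\firstw_j := \firstw_j / \|\firstw_j\|_2$ and $\tilde\weightscalar_j := \weightscalar_j \|\firstw_j\|_2$; this leaves the data-fitting term unchanged, satisfies $\tilde\firstw_j \in \ball_2$, and by the AM-GM inequality $\|\firstw_j\|_2^2 + \weightscalar_j^2 \geq 2\|\firstw_j\|_2 |\weightscalar_j| = 2|\tilde\weightscalar_j|$, so $\tfrac{\beta}{2}\sum_j(\|\firstw_j\|_2^2 + \weightscalar_j^2) \geq \beta \sum_j |\tilde\weightscalar_j| = \beta\|\tilde{\weight}\|_1$. (Neurons with $\firstw_j = 0$ contribute nothing to the output and can only be dropped, decreasing both regularizers.) For the reverse inequality ``$\leq$'', given any feasible $\{\firstw_j, \weightscalar_j\}$ for the right-hand problem with $\firstw_j \in \ball_2$, choose the optimal scaling within each orbit: set $t_j = |\weightscalar_j|^{1/2} / \|\firstw_j\|_2^{1/2}$ so that after rescaling both $\|\firstw_j\|_2^2$ and $\weightscalar_j^2$ equal $\|\firstw_j\|_2 |\weightscalar_j| \leq |\weightscalar_j|$ (using $\|\firstw_j\|_2 \le 1$); then $\tfrac{\beta}{2}\sum_j(\|\firstw_j\|_2^2+\weightscalar_j^2) \le \beta\sum_j|\weightscalar_j| = \beta\|\weight\|_1$. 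Combining the two directions, taking infima on both sides, gives equality of the optimal values.

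I do not expect a serious obstacle here; the only points requiring a little care are the degenerate cases $\firstw_j = 0$ or $\weightscalar_j = 0$ (handle by noting such neurons are inert and can be removed), and making sure the rescaling is applied per-neuron rather than globally. One should also remark that the minimum is attained — or, if one prefers to avoid attainment issues, carry out the argument at the level of infima of objective values, which is all that is needed since the statement only asserts equality of the two optimal values. The same rescaling template applies verbatim to the other primal pairs in the paper (e.g.\ \eqref{eq:circular_primal1}, \eqref{eq:threelayer_primal}), replacing $\|\firstw_j\|_2$ by the appropriate Frobenius norm of the product of circulant factors or the inner-layer weight matrix, which is why this lemma is stated once and reused.
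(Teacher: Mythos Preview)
Your proposal is correct and follows essentially the same approach as the paper: both exploit the positive homogeneity of ReLU to rescale $(\firstw_j,\weightscalar_j)\mapsto(\gamma_j\firstw_j,\weightscalar_j/\gamma_j)$ without changing the output, then apply AM--GM to compare the weight-decay regularizer with the $\ell_1$ penalty. The only cosmetic difference is organizational: the paper first passes through the intermediate problem with the equality constraint $\|\firstw_j\|_2=1$ and then argues by contradiction that relaxing to $\|\firstw_j\|_2\le 1$ is lossless, whereas you prove the two inequalities $\ge$ and $\le$ directly against the ball-constrained problem; your treatment of the degenerate cases and the remark about working at the level of infima are also slightly more careful than the paper's write-up.
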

\begin{proof}[\textbf{Proof of Lemma \ref{lemma:scaling_twolayer}}]
We rescale the parameters as $\bar{\firstw}_{j}=\gamma_j\firstw_{j}$ and $\bar{\weightscalar}_{j}= \weightscalar_{j}/\gamma_j$, for any $\gamma_j>0$. Then, the output becomes
\begin{align*}
  \sum_{j=1}^m \sum_{k=1}^K(\data_k \bar{\firstw}_{j})_+\bar{\weightscalar}_{j} =\sum_{j=1}^m \sum_{k=1}^K( \data_k \firstw_{j}\gamma_j)_+\frac{\weightscalar_{j}}{\gamma_j} =\sum_{j=1}^m \sum_{k=1}^K(\data \firstw_{j})_+\weightscalar_{j} ,
\end{align*}
which proves that the scaling does not change the network output. In addition to this, we have the following basic inequality
\begin{align*}
  \frac{1}{2} \sum_{j=1}^m (\|\firstw_{j}\|_2^2+ \weightscalar_{j}^2) \geq \sum_{j=1}^m (|\weightscalar_{j}| \text{ }\| \firstw_{j}\|_2),
\end{align*}
where the equality is achieved with the scaling choice $\gamma_j=\left(\frac{|\weightscalar_{j}|}{\| \firstw_{j}\|_2}\right)^{\frac{1}{2}}$ is used. Since the scaling operation does not change the right-hand side of the inequality, we can set $\|\firstw_{j} \|_2=1, \forall j$. Therefore, the right-hand side becomes $\| \weight\|_1$.

Now, let us consider a modified version of the problem, where the unit norm equality constraint is relaxed as $\| \firstw_{j} \|_2 \leq 1$. Let us also assume that for a certain index $j$, we obtain  $\| \firstw_{j} \|_2 < 1$ with $\weightscalar_{j}\neq 0$ as an optimal solution. This shows that the unit norm inequality constraint is not active for $\firstw_{j}$, and hence removing the constraint for $\firstw_{j}$ will not change the optimal solution. However, when we remove the constraint, $\| \firstw_{j}\|_2 \rightarrow \infty$ reduces the objective value since it yields $\weightscalar_{j}=0$. Therefore, we have a contradiction, which proves that all the constraints that correspond to a nonzero $\weightscalar_{j}$ must be active for an optimal solution. This also shows that replacing $\|\firstw_j\|_2=1$ with $\| \firstw_{j} \|_2 \leq 1$ does not change the solution to the problem.
\end{proof}

Next, we prove the equivalence between \eqref{eq:circular_primal1} for $L=3$ and \eqref{eq:circular_primal2}.

\begin{lem} \label{lemma:scaling_threelayer_circ}
The following two problems are equivalent:
\begin{align*}
    &\min_{\substack{\{\firstw_j, \weight_{1j},\weightscalar_{2j}\}_{j=1}^m\\ \firstw_j \in \ball_2,\forall j}} \frac{1}{2}\left\|\sum_{j=1}^m \relu{\data \firstwmat_j \weight_{1j}} \weightscalar_{2j} -\labelvec \right\|_2^2 + \frac{\beta}{2} \sum_{j=1}^m \left(\|\weight_{1j}\|_2^2+ \weightscalar_{2j}^2 \right) \\
  &=  \min_{\substack{\{\firstw_j, \weight_{1j},\weightscalar_{2j}\}_{j=1}^m\\ \firstw_j,\weight_{1j} \in \ball_2,\forall j}} \frac{1}{2}\left\|\sum_{j=1}^m \relu{\data \firstwmat_j \weight_{1j}} \weightscalar_{2j} -\labelvec \right\|_2^2 + \beta \|\weight_{2}\|_1 .
\end{align*}
\end{lem}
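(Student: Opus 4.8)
\textbf{Proof proposal for Lemma~\ref{lemma:scaling_threelayer_circ}.}
The plan is to re-use, essentially verbatim, the two-step rescaling argument from the proof of Lemma~\ref{lemma:scaling_twolayer}, with the key adaptation that the first-layer filter $\firstw_j$ (hence the circulant matrix $\firstwmat_j$) is kept \emph{frozen} throughout, and only the pair $(\weight_{1j},\weightscalar_{2j})$ is rescaled. For $\gamma_j>0$, set $\bar{\weight}_{1j}=\gamma_j\weight_{1j}$ and $\bar{\weightscalar}_{2j}=\weightscalar_{2j}/\gamma_j$. Because $\relu{\cdot}$ is positively homogeneous of degree one, $\relu{\data\firstwmat_j\bar{\weight}_{1j}}\bar{\weightscalar}_{2j}=\relu{\data\firstwmat_j\weight_{1j}}\weightscalar_{2j}$, so the network output and hence the data-fit term are invariant under this rescaling, while the constraint $\firstw_j\in\ball_2$ is untouched. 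Thus the comparison of the two problems reduces to the regularization terms alone.

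\emph{Step 1 ($\ell_2\to\ell_1$).} I would apply the elementary inequality $\frac{1}{2}\big(\|\weight_{1j}\|_2^2+\weightscalar_{2j}^2\big)\ge|\weightscalar_{2j}|\,\|\weight_{1j}\|_2$, with equality at $\gamma_j=\big(|\weightscalar_{2j}|/\|\weight_{1j}\|_2\big)^{1/2}$ (handling the trivial cases $\weight_{1j}=0$ or $\weightscalar_{2j}=0$ by setting both to zero). Since the rescaling leaves the right-hand side of this inequality unchanged, one may additionally impose $\|\weight_{1j}\|_2=1$ for every $j$, after which $\frac{\beta}{2}\sum_j(\|\weight_{1j}\|_2^2+\weightscalar_{2j}^2)$ becomes $\beta\sum_j|\weightscalar_{2j}|=\beta\|\weight_2\|_1$. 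This identifies the left-hand problem with the constrained $\ell_1$-penalized problem carrying the \emph{equality} constraint $\|\weight_{1j}\|_2=1$.

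\emph{Step 2 (equality $\to$ inequality).} It remains to show that relaxing $\|\weight_{1j}\|_2=1$ to $\weight_{1j}\in\ball_2$ does not change the optimal value. Arguing as in Lemma~\ref{lemma:scaling_twolayer}: if an optimal solution of the relaxed problem had $\|\weight_{1j}\|_2<1$ while $\weightscalar_{2j}\ne0$ for some $j$, then the norm constraint on $\weight_{1j}$ is inactive, so dropping it cannot change optimality; but then scaling $\weight_{1j}\mapsto t\,\weight_{1j}$, $\weightscalar_{2j}\mapsto\weightscalar_{2j}/t$ keeps the output fixed and drives the penalty $\beta|\weightscalar_{2j}|/t\to0$ as $t\to\infty$, strictly decreasing the objective --- a contradiction. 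Hence every $j$ with $\weightscalar_{2j}\ne0$ has $\|\weight_{1j}\|_2=1$ at optimality (the remaining filters may be normalized without effect), so the relaxed and equality-constrained problems share the same value. Chaining Steps 1 and 2 yields the claimed equivalence.

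The argument is routine; the only point that needs care is confining the rescaling to $(\weight_{1j},\weightscalar_{2j})$ so that neither the feasibility constraint $\firstw_j\in\ball_2$ nor the circulant parametrization of $\firstwmat_j$ is disturbed --- once that is observed, positive homogeneity of ReLU does all the work. I do not anticipate a genuine obstacle here; the same template, with an extra bookkeeping step for the product normalization $\firstw_{lj}\in\mathcal{U}_L$, also covers the general-$L$ case treated later.
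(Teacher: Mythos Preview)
Your proposal is correct and follows essentially the same approach as the paper's own proof: the paper rescales $(\weight_{1j},\weightscalar_{2j})$ by $\gamma_j$ and $1/\gamma_j$ while leaving $\firstw_j$ untouched, invokes the same AM--GM inequality with the same equalizing choice of $\gamma_j$, and then defers the equality-to-inequality relaxation argument to the proof of Lemma~\ref{lemma:scaling_twolayer}, which is exactly the contradiction you spell out in Step~2.
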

\begin{proof}[\textbf{Proof of Lemma \ref{lemma:scaling_threelayer_circ}}]
We rescale the parameters as $\bar{\weight}_{1j}=\gamma_j\weight_{1j}$ and $\bar{\weightscalar}_{2j}= \weightscalar_{2j}/\gamma_j$, for any $\gamma_j>0$. Then, the output becomes
\begin{align*}
   \sum_{j=1}^m (\data \firstwmat_j \bar{\weight}_{1j})_+\bar{\weightscalar}_{2j} =\sum_{j=1}^m ( \data \firstwmat_j \weight_{1j}\gamma_j)_+\frac{\weightscalar_{2j}}{\gamma_j} =\sum_{j=1}^m (\data \firstwmat_{j} \weight_{1j})_+\weightscalar_{2j} ,
\end{align*}
which proves that the scaling does not change the network output. In addition to this, we have the following basic inequality
\begin{align*}
   \frac{1}{2} \sum_{j=1}^m (\|\weight_{1j}\|_2^2+\weightscalar_{2j}^2) \geq \sum_{j=1}^m (\|\weight_{1j}\|_2 \text{ }| \weightscalar_{2j}|),
\end{align*}
where the equality is achieved with the scaling choice $\gamma_j=\left(\frac{|\weightscalar_{2j}|}{\| \weight_{1j}\|_2}\right)^{\frac{1}{2}}$ is used. Since the scaling operation does not change the right-hand side of the inequality, we can set $\|\weight_{1j} \|_2=1, \forall j$. Therefore, the right-hand side becomes $\| \weight_{2}\|_1$. The rest of the proof directly follows from the proof of Lemma \ref{lemma:scaling_twolayer}.
\end{proof}

%%%%%%%%%%%%%%%%%%%%%%% subsection %%%%%%%%%%%%%%%%%%%%%%%%%%%%%%%%%%
\subsection{Two-layer linear CNNs} \label{sec:twolayer_linear}
%\subsection{Two-layer linear CNNs}\label{sec:twolayerlinear_main}
We now consider two-layer linear CNNs, for which the training problem is
\begin{align}
\label{eq:twolayer_linear}
    &\min_{\{\firstw_j,\weight_j\}_{j=1}^m} \frac{1}{2}\left\|\sum_{k=1}^K \sum_{j=1}^m  \data_k \firstw_j \weightscalar_{jk}-\labelvec \right\|_2^2
+ \frac{\beta}{2} \sum_{j=1}^m \left(\|\firstw_j\|_2^2+\|\weight_j\|_2^2 \right).
\end{align}
\begin{theo}\citep{pilanci2020neural}\label{theo:twolayer_linear_main}
The equivalent convex program for \eqref{eq:twolayer_linear} is
\begin{align}
\label{eq:twolayer_linear_final}
    &\min_{\{\vec{z}_k\}_{k=1}^K, \vec{z}_k \in \mathbb{R}^h } \frac{1}{2}\left\|\sum_{k=1}^K \data_k \vec{z}_k-\labelvec \right\|^2_2+ \beta\left \|[\vec{z}_1,\ldots,\vec{z}_K] \right \|_{*}.
\end{align}
\end{theo}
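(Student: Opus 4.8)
The crucial structural observation is that the network output depends on the parameters only through the single matrix $\vec{Z}:=\sum_{j=1}^m \firstw_j\weight_j^T\in\mathbb{R}^{h\times K}$, where $\weight_j=(\weightscalar_{j1},\ldots,\weightscalar_{jK})^T$. Indeed,
\[
\sum_{k=1}^K\sum_{j=1}^m \data_k\firstw_j\weightscalar_{jk}=\sum_{k=1}^K\data_k\Big(\sum_{j=1}^m\firstw_j\weightscalar_{jk}\Big)=\sum_{k=1}^K\data_k\vec{z}_k,
\]
where $\vec{z}_k$ is the $k$-th column of $\vec{Z}$, i.e.\ $[\vec{z}_1,\ldots,\vec{z}_K]=\vec{Z}$. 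Hence the data-fidelity term is a function of $\vec{Z}$ alone, and the whole problem reduces to understanding the cheapest way, under the weight-decay penalty, to realize a prescribed $\vec{Z}$.

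\textbf{Step 1: balancing the layer norms.} Exactly as in Lemma \ref{lemma:scaling_twolayer}, for any $\gamma_j>0$ the rescaling $\firstw_j\mapsto\gamma_j\firstw_j$, $\weight_j\mapsto\weight_j/\gamma_j$ leaves both $\vec{Z}$ and the output unchanged, while $\tfrac12(\|\firstw_j\|_2^2+\|\weight_j\|_2^2)\ge\|\firstw_j\|_2\|\weight_j\|_2$ with equality at $\gamma_j=(\|\weight_j\|_2/\|\firstw_j\|_2)^{1/2}$ (terms with $\firstw_j\weight_j^T=0$ can simply be dropped). Therefore \eqref{eq:twolayer_linear} has the same optimal value as
\[
\min_{\{\firstw_j,\weight_j\}_{j=1}^m}\ \frac12\Big\|\sum_{k=1}^K\data_k\vec{z}_k-\labelvec\Big\|_2^2+\beta\sum_{j=1}^m\|\firstw_j\|_2\|\weight_j\|_2,\qquad [\vec{z}_1,\ldots,\vec{z}_K]=\sum_{j=1}^m\firstw_j\weight_j^T .
\]

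\textbf{Step 2: the nuclear-norm variational identity.} I would then invoke the standard fact that for any $\vec{Z}\in\mathbb{R}^{h\times K}$ and any $m\ge\min(h,K)$,
\[
\|\vec{Z}\|_*=\min\Big\{\textstyle\sum_{j=1}^m\|\firstw_j\|_2\|\weight_j\|_2\ :\ \vec{Z}=\sum_{j=1}^m\firstw_j\weight_j^T\Big\},
\]
where the $\le$ direction is the triangle inequality together with $\|\firstw\weight^T\|_*=\|\firstw\|_2\|\weight\|_2$, and the minimum is attained via the thin SVD $\vec{Z}=\sum_j\sigma_j\vec{u}_j\vec{v}_j^T$ by setting $\firstw_j=\sqrt{\sigma_j}\,\vec{u}_j$, $\weight_j=\sqrt{\sigma_j}\,\vec{v}_j$. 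Substituting this identity into the problem of Step 1 — first minimizing over all factorizations of a fixed $\vec{Z}$, then over $\vec{Z}\in\mathbb{R}^{h\times K}$, which is precisely $[\vec{z}_1,\ldots,\vec{z}_K]$ with no further constraint — collapses the objective to
\[
\min_{\{\vec{z}_k\}_{k=1}^K}\ \frac12\Big\|\sum_{k=1}^K\data_k\vec{z}_k-\labelvec\Big\|_2^2+\beta\big\|[\vec{z}_1,\ldots,\vec{z}_K]\big\|_* ,
\]
which is \eqref{eq:twolayer_linear_final}; this problem is jointly convex in $\{\vec{z}_k\}$ (squared loss plus a norm). Reading the SVD construction backwards, and then applying the norm-balancing of Step 1 in reverse, recovers an optimal $\{\firstw_j^*,\weightscalar_{jk}^*\}$ from an optimal $\vec{Z}^*$ of \eqref{eq:twolayer_linear_final}.

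\textbf{Main obstacle.} There is no deep difficulty — the argument is a clean application of the nuclear-norm factorization formula — but the one point requiring care is the role of $m$: exact equality needs $m\ge\min(h,K)$ so that an optimal $\vec{Z}^*$ admits a decomposition into $m$ rank-one terms, whereas for smaller $m$ the original problem is only a rank-constrained (nonconvex) surrogate of \eqref{eq:twolayer_linear_final}. Everything else is routine bookkeeping about rescaling and columns.
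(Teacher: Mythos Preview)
Your proof is correct, but it follows a genuinely different route from the paper. The paper proceeds by duality: after the same rescaling step you use, it takes the Fenchel dual with respect to the output weights $\weight_j$, obtaining a semi-infinite problem whose constraint $\max_{\firstw\in\ball_2}\sqrt{\sum_k(\dual^T\data_k\firstw)^2}\le\beta$ is recognized as an operator-norm bound $\sigma_{\max}([\data_1^T\dual,\ldots,\data_K^T\dual])\le\beta$; dualizing this constraint then produces the nuclear norm as the conjugate of the spectral norm. Your argument stays entirely on the primal side, invoking the variational factorization identity $\|\vec{Z}\|_*=\min\{\sum_j\|\firstw_j\|_2\|\weight_j\|_2:\vec{Z}=\sum_j\firstw_j\weight_j^T\}$ directly. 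Your route is shorter and more transparent for this linear architecture, and it makes the role of the width condition $m\ge\min(h,K)$ explicit (the paper's statement leaves it implicit). The paper's dual route, on the other hand, is the template that extends to the ReLU cases treated elsewhere in the paper, where no primal factorization identity is available; so it trades simplicity here for methodological uniformity.
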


\begin{proof}[\textbf{Proof of Theorem \ref{theo:twolayer_linear_main}}]
We first apply a rescaling (as in Lemma \ref{lemma:scaling_twolayer}) to the primal problem in \eqref{eq:twolayer_linear} as follows
\begin{align*}
        &\min_{\substack{\{\firstw_j,\weight_j\}_{j=1}^m\\ \firstw_j \in \ball_2}} \frac{1}{2}\left\|\sum_{k=1}^K \sum_{j=1}^m  \data_k \firstw_j \weightscalar_{jk}-\labelvec \right\|_2^2
+ \beta  \sum_{j=1}^m  \|\weight_j\|_2 .
\end{align*}
Then, taking the dual with respect to the output layer weights $\weight_{j}$ yields
 \begin{align}
\label{eq:linearconvdual}
    &\max_{\dual} -\frac{1}{2}\|\dual-\vec{y}\|_2^2+\frac{1}{2}\|\vec{y}\|_2^2\,\,\;\mbox{s.t.} \max_{\firstw \in \ball_2}\, \sqrt{\sum_{k}\big( \dual^T \data_k \firstw \big)^2} \le \beta.
\end{align}
 Let us then reparameterize the problem above as follows
\begin{align*}
    &\max_{\vec{M},\dual} -\frac{1}{2}\|\dual-\labelvec\|_2^2+\frac{1}{2}\|\labelvec\|_2^2 \mbox{ s.t.  } \sigma_{\max}\left(\vec{M}\right)\le \beta, \; \vec{M}=[\data_1^T \dual\, \ldots \, \data_K^T \dual ],
\end{align*}
where $\sigma_{max}(\vec{M})$ represent the maximum singular value of $\vec{M}$. Then the Lagrangian is as follows
\begin{align*}
    L(\lambda,\vec{Z},\vec{M},\dual)&=-\frac{1}{2}\|\dual-\labelvec\|_2^2+\frac{1}{2}\|\labelvec\|_2^2+\lambda\left(\beta-\sigma_{\max}\left(\vec{M}\right)\right)+\trace(\vec{Z}^T \vec{M})-\trace(\vec{Z}^T[\data_1^T \dual\, \ldots\, \data_K^T \dual ])\\
    &=-\frac{1}{2}\|\dual-\labelvec\|_2^2+\frac{1}{2}\|\labelvec\|_2^2+\lambda\left(\beta-\sigma_{\max}\left(\vec{M}\right)\right)+\trace(\vec{Z}^T \vec{M})-\dual^T \sum_{k=1}^K \data_k \vec{z}_k
\end{align*}
where $\lambda \geq 0$. Then maximizing over $\vec{M}$ and $\dual$ yields the following dual form
\begin{align*}
    &\min_{\{\vec{z}_k\}_{k=1}^K , \vec{z}_k\in\real^h } \frac{1}{2}\left\|\sum_{k=1}^K \data_k \vec{z}_k-\labelvec \right\|^2_2+ \beta\left \|[\vec{z}_1\, \ldots \, \vec{z}_K] \right \|_{*},
\end{align*}
where $\left\|[\vec{z}_1\, \ldots \, \vec{z}_K] \right \|_{*}=\|\vec{Z}\|_*=\sum_i \sigma_i(\vec{Z})$ is the $\ell_1$ norm of singular values, i.e., nuclear norm \citep{recht2010guaranteed}.  
\end{proof}
The regularized training problem for two-layer circular CNNs as follows
\begin{align}\label{eq:twolayer_linear_circular}
    \min_{\{\firstw_j,\weight_j\}_{i=1}^m} \frac{1}{2} \left\| \sum_{j=1}^m\data \firstwmat_j \weight_{j} -\labelvec \right\|^2_2+\frac{\beta}{2}\sum_{j=1}^m \left( \|\firstw_j\|_2^2+\|\weight_j\|_2^2 \right)
\end{align}
where $\firstwmat_j \in \mathbb{R}^{d \times d}$ is a circulant matrix generated by a circular shift modulo $d$ using $\firstw_j \in \mathbb{R}^h$.
\begin{theo}\citep{pilanci2020neural}\label{theo:twolayer_linear_circular_main}
The equivalent convex program for \eqref{eq:twolayer_linear_circular} is
\begin{align}
\label{eq:twolayer_linear_circular_final}
    \min_{\vec{z} \in \mathbb{C}^d} \frac{1}{2}\Big \|\tilde{\data} \vec{z} -\labelvec \Big\|^2_2+ \frac{\beta}{\sqrt{d}} \|\vec{z}\|_1,
\end{align}
where $\tilde{\data}=\data \vec{F}$ and $\vec{F} \in \mathbb{C}^{d \times d}$ is the DFT matrix.
\end{theo}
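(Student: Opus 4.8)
The plan is to mirror the proof of Theorem~\ref{theo:twolayer_linear_main}, replacing the maximum–singular–value computation that produced the nuclear norm with a DFT diagonalization of the circulant filters, which will turn the semi-infinite constraint into an $\ell_\infty$ bound in the spectral domain and hence produce an $\ell_1$ penalty. First I would rescale the parameters exactly as in Lemma~\ref{lemma:scaling_twolayer} (equivalently the $L=3$ case of Lemma~\ref{lemma:scaling_threelayer_circ} with no middle layer): using the scale invariance $\data\firstwmat_j\weight_j=\data(\gamma_j\firstwmat_j)(\weight_j/\gamma_j)$ of a linear neuron together with $\tfrac12(\|\firstw_j\|_2^2+\|\weight_j\|_2^2)\ge\|\firstw_j\|_2\|\weight_j\|_2$, the problem \eqref{eq:twolayer_linear_circular} becomes
\begin{align*}
\min_{\substack{\{\firstw_j,\weight_j\}_{j=1}^m\\\firstw_j\in\ball_2}}\ \frac12\Big\|\sum_{j=1}^m\data\firstwmat_j\weight_j-\labelvec\Big\|_2^2+\beta\sum_{j=1}^m\|\weight_j\|_2 .
\end{align*}

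Next I would take the Lagrangian dual with respect to the output weights $\weight_j$ (a group-lasso–type dual for fixed filters) and, since arbitrarily many neurons are allowed, pass to the semi-infinite constraint obtained by ranging over all admissible filter directions, arriving at
\begin{align*}
\max_{\dual}\ -\frac12\|\dual-\labelvec\|_2^2+\frac12\|\labelvec\|_2^2\quad\text{s.t.}\quad \max_{\firstw\in\ball_2}\big\|\firstwmat^T\data^T\dual\big\|_2\le\beta ,
\end{align*}
the exact analogue of \eqref{eq:linearconvdual}; strong duality holds by Slater's condition as in Section~\ref{sec:proof_main}.

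The crux is evaluating the inner maximum. A circulant $\firstwmat$ is real and normal, so $\|\firstwmat^T\vec q\|_2=\|\firstwmat\vec q\|_2$, and $\firstwmat\vec q$ is the circular convolution $\firstw\circledast\vec q$; diagonalizing by the unitary DFT matrix $\vec F$ and applying Parseval gives $\|\firstwmat\vec q\|_2=\sqrt d\,\|(\vec F\firstw)\odot(\vec F\vec q)\|_2$ with $\vec q=\data^T\dual$. Since $\firstw$ and $\dual$ are real, $\vec F\firstw$ and $\vec F\vec q$ are conjugate-symmetric, so subject to $\|\vec F\firstw\|_2=\|\firstw\|_2\le1$ the budget $|\vec F\firstw|^2$ is best placed on the (conjugate pair of) frequencies where $|\vec F\vec q|$ is largest; hence $\max_{\firstw\in\ball_2}\|\firstwmat^T\data^T\dual\|_2=\sqrt d\,\|\vec F\data^T\dual\|_\infty=\sqrt d\,\|\tilde{\data}^T\dual\|_\infty$, where $\tilde{\data}=\data\vec F$ and I used $\vec F^T=\vec F$. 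The dual thus reduces to the finite program $\max_{\dual}\{-\tfrac12\|\dual-\labelvec\|_2^2+\tfrac12\|\labelvec\|_2^2:\ \|\tilde{\data}^T\dual\|_\infty\le\beta/\sqrt d\}$.

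Finally I would dualize this program as in the proof of Theorem~\ref{theo:twolayer_linear_main}: introducing a multiplier $\vec z\in\mathbb{C}^d$ for the linear map $\tilde{\data}^T\dual$ and using that the support function of the $\ell_\infty$ ball is $\|\cdot\|_1$, the constraint contributes $\tfrac{\beta}{\sqrt d}\|\vec z\|_1$, while maximizing the remaining quadratic over $\dual$ produces $\tfrac12\|\tilde{\data}\vec z-\labelvec\|_2^2$, giving exactly \eqref{eq:twolayer_linear_circular_final}; an optimal non-convex solution is read off from $\vec z$ as in Theorem~\ref{theo:twolayer_main}, each active spectral coordinate of $\vec z$ supplying one neuron whose circulant filter is the corresponding DFT atom with magnitudes split by the AM--GM equality above. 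I expect the main obstacle to be the third step: keeping the DFT normalization and the conjugate-symmetry constraints (from $\firstw,\dual$ real) straight so the factor comes out as exactly $\beta/\sqrt d$, and — if one keeps a short filter $\firstw\in\mathbb{R}^h$ with $h<d$ rather than the full-width circular convolution — arguing the support restriction does not shrink the attainable spectral envelope (equivalently, working with the full $d\times d$ circulant model the statement implicitly adopts); everything else is a direct transcription of the linear fully-connected and linear-CNN arguments already established in the paper.
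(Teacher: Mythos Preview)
Your proposal is correct and follows essentially the same route as the paper: rescale to the $\ell_1$-penalized form, take the dual with respect to the output weights, diagonalize the circulant via the DFT to reduce the semi-infinite constraint to $\|\tilde{\data}^T\dual\|_\infty\le\beta/\sqrt d$, and dualize once more to obtain the $\ell_1$-regularized program. The only cosmetic difference is that the paper substitutes $\firstwmat=\vec F\diag\vec F^H$ with $\diag=\mathrm{diag}(\sqrt d\,\vec F\firstw)$ and optimizes over the diagonal $\diag$ subject to $\|\diag\|_F^2\le d$, whereas you keep $\firstw$ as the variable and invoke Parseval/the convolution theorem directly; your observation about the $h<d$ support restriction is apt, and indeed the paper's argument implicitly treats the full $d\times d$ circulant model just as you suspected.
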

\begin{proof}[\textbf{Proof of Theorem \ref{theo:twolayer_linear_circular_main}}]
We first apply a rescaling (as in Lemma \ref{lemma:scaling_twolayer}) to the primal problem in \eqref{eq:twolayer_linear_circular} as follows
\begin{align*}
      \min_{\substack{\{\firstw_j,\weight_j\}_{i=1}^m\\ \firstw_j \in \ball_2}} \frac{1}{2} \left\| \sum_{j=1}^m\data \firstwmat_j \weight_{j} -\labelvec \right\|^2_2+\beta\sum_{j=1}^m \|\weight_j\|_2 \
\end{align*}

and then taking the dual with respect to the output layer weights $\weight_j$ yields
\begin{align*}
    \max_\dual -\frac{1}{2}\|\dual-\labelvec\|_2^2 +\frac{1}{2}\| \labelvec\|_2^2 \text{ s.t. } \max_{\diag \in \mathcal{D}}\|\dual^T \data \vec{F} \diag \vec{F}^H \|_2 \leq \beta,
\end{align*}
where $\mathcal{D}:=\{\diag: \|\diag\|_F^2 \leq d\}$. In the problem above, we use the eigenvalue decomposition $\firstwmat=\vec{F} \diag \vec{F}^H$, where $\vec{F} \in \mathbb{C}^{d \times d}$ is the DFT matrix and $\diag\in \mathbb{C}^{d \times d}$ is a diagonal matrix defined as $\diag:= \mathrm{diag}(\sqrt{d} \vec{F}\firstw)$. We also note that the unit norm constraint in the primal problem, i.e., $\firstw_j \in \ball_2$, is equivalent to $\diag_j \in \mathcal{D}$ since $\diag_j= \mathrm{diag}(\sqrt{d} \vec{F}\firstw_j)$ and $\|\diag_j\|_F^2= d \|\firstw_j\|_2^2$ due the properties of circulant matrices. 

Now let us first define a variable change as $\tilde{\data}=\data \vec{F}$.  Then, the problem above can be equivalently written as
\begin{align*}
    \max_\dual -\frac{1}{2}\|\dual-\labelvec\|_2^2 +\frac{1}{2}\| \labelvec\|_2^2 \text{ s.t. } \max_{\diag \in \mathcal{D}}\|\dual^T \tilde{\data} \diag \|_2 \leq \beta .
\end{align*}
Since $\diag$ is a diagonal matrix with a norm constraint on its diagonal entries,  for an arbitrary vector $\vec{s} \in \mathbb{C}^n$, we have
\begin{align*}
    \| \vec{s}^T \diag\|_2 = \sqrt{\sum_{i=1}^n |s_i|^2 \vert\diag_{ii}\vert^2}  \leq  s_{max}  \sqrt{\sum_{i=1}^n \vert\diag_{ii}\vert^2}  =  s_{max} \sqrt{d},
\end{align*}
where $s_{max}:=\max_i \vert s_i \vert$. If we denote the maximum index as  $i_{max}:= \argmax_{i} \vert s_i \vert$, then the upper-bound is achieved when
\begin{align*}
    \diag_{ii}=\begin{cases}
    \sqrt{d} &\text{ if } i=i_{max}\\
    0, & otherwise
    \end{cases}.
\end{align*}
Using this observation, the problem above can be further simplified as
\begin{align*}
    \max_\dual -\frac{1}{2}\|\dual-\labelvec\|_2^2 +\frac{1}{2}\| \labelvec\|_2^2 \text{ s.t. } \|\dual^T \tilde{\data}\|_{\infty} \leq \frac{\beta}{\sqrt{d}}\;  .
\end{align*}
Then, taking the dual of this problem gives the following
\begin{align*}
    \min_{z \in \mathbb{C}^d} \frac{1}{2}\left \|\tilde{\data} \vec{z}-\labelvec \right\|^2_2+ \frac{\beta}{\sqrt{d}} \|\vec{z}\|_1.
\end{align*}
 
\end{proof}

%%%%%%%%%%%%%%%%%%%%%%% subsection %%%%%%%%%%%%%%%%%%%%%%%%%%%%%%%%%%
\subsection{Extensions to vector outputs}
Here, we present the extensions of our approach to vector output. To keep the notation and presentation simple, we consider the vector output version of the two-layer linear CNN model in Section \ref{sec:twolayer_linear}. The training problem is as follows
\begin{align*}
%\label{eq:CNN_linear_vector}
    &\min_{\{\firstw_j, \{\weight_{jk}\}_{k=1}^K\}_{j=1}^m} \frac{1}{2}\left\|\sum_{k=1}^K \sum_{j=1}^m  \data_k \firstw_j \weight_{jk}^T-\labelmat \right\|_F^2
+ \frac{\beta}{2} \sum_{j=1}^m \left(\|\firstw_j\|_2^2+\sum_{k=1}^K \|\weight_{jk}\|_2^2 \right).
\end{align*}
The corresponding dual problem is given by
\begin{align*}
%\label{eq:linearconvdual_vector}
    &\max_{\dualmat} -\frac{1}{2}\|\dualmat-\labelmat\|_F^2+\frac{1}{2}\|\labelmat\|_F^2\,\,\mbox{ s.t.  } \max_{\firstw\in \ball_2}\, \sqrt{\sum_{k=1}^K\left\| \dualmat^T \data_k \firstw \right\|_2^2} \le \beta.
\end{align*}
The maximizers of the dual are the maximal eigenvectors of $\sum_{k=1}^K \data_k^T \dualmat \dualmat^T \data_k$, which are optimal filters. 

We now focus on the dual constraint as in Proof of Theorem \ref{theo:twolayer_main}.
\begin{align*}
    \max_{\firstw\in \ball_2}\, \sqrt{\sum_{k=1}^K\left\| \dualmat^T \data_k \firstw \right\|_2^2}&= \max_{\firstw ,\vec{s},\vec{g}_k\in \ball_2}\, \sum_{k=1}^K s_k \vec{g}_k^T \dualmat^T \data_k \firstw =\max_{\firstw ,\vec{s},\vec{g}_k\in \ball_2}\, \sum_{k=1}^K s_k  \left \langle  \dualmat, \data_k \firstw \vec{g}_k^T \right\rangle \\
    &=\max_{\substack{ \|\vec{G}_k\|_{*}\leq 1\\\vec{s} \in \ball_2}}\, \sum_{k=1}^K s_k  \left \langle  \dualmat, \data_k \vec{G}_k \right\rangle =\max_{\substack{\|\vec{G}_k\|_{*}\leq s_k \\\vec{s} \in \ball_2}}\, \sum_{k=1}^K   \left \langle  \dualmat, \data_k \vec{G}_k \right\rangle 
\end{align*}
Then, the rest of the derivations directly follow Section \ref{sec:twolayer_linear}.

%%%%%%%%%%%%%%%%%%%%%%% subsection %%%%%%%%%%%%%%%%%%%%%%%%%%%%%%%%%%
\subsection{Extensions to arbitrary convex loss functions}\label{sec:genericloss_feasiblecons}
In this section, we first show the procedure to create an optimal standard CNN architecture using the optimal weights provided by the convex program in. Then, we extend our derivations to arbitrary convex loss functions.

In order to keep our derivations simple and clear, we use the regularized two-layer architecture in \eqref{eq:twolayer_primal1}. For a given convex loss function $\ell(\cdot,\labelvec)$, the regularized training problem can be stated as follows
\begin{align}
    p^*_1=\min_{\{\firstw_j, \weightscalar_{j}\}_{j=1}^m}\ell\left( \sum_{j=1}^m \sum_{k=1}^K (\data_k \firstw_j)_+\weightscalar_{j}, \labelvec\right) +\frac{\beta}{2} \sum_{j=1}^m (\|\firstw_j\|_2^2+\weightscalar_j^2)\,. 
    \label{eq:twolayer_cost_general}
\end{align}
Then, the corresponding finite dimensional convex equivalent is
\begin{align}
\label{eq:twolayer_final_general}  
&\min_{\substack{\{\cvx_{i},\cvx_{i}^\prime\}_{i=1}^{P_{conv}}\\\cvx_{i},\cvx_{i}^\prime \in \mathbb{R}^h,\forall i}}\, \ell\left( \sum_{i=1}^{P_{conv}} \sum_{k=1}^K \diag(S_i^k)\data_k (\cvx_{i}^\prime-\cvx_{i}) , \labelvec\right) +\beta\sum_{i=1}^{P_{conv}} \sum_{k=1}^K \left(\|\cvx_{i}\|_2+\|\cvx_{i}^\prime\|_2\right)   \\ \nonumber
&\mbox{ s.t. }  (2\diag(S_i^k)-\vec{I}_n)\data_k \cvx_{i}\geq 0,~ (2\diag(S_i^k)-\vec{I}_n)\data_k \cvx_{i}\geq 0,\, \forall i,k. 
\end{align}
 We now define $m^*:=\sum_{i=1}^{P_{conv}}  \mathbbm{1}[\|\cvx_{i}^*\|_2\neq 0]+\sum_{i=1}^{P_{conv}} \mathbbm{1}[\|\cvx_{i}^{\prime^*}\|_2 \neq 0]$, where $\{\cvx_{i}^*,\cvx_{i}^{\prime^*}\}_{i=1}^{P_{conv}}$ are the optimal weights in \eqref{eq:twolayer_final_general}.

\begin{theo}
\label{theo:twolayer_general}
The convex program \eqref{eq:twolayer_final_general} and the non-convex problem \eqref{eq:twolayer_cost_general}, where $m\ge m^*$ has identical optimal values. Moreover, an optimal solution to \eqref{eq:twolayer_cost_general} can be constructed from an optimal solution to \eqref{eq:twolayer_final_general} as follows
\begin{align*}
   & (\firstw^*_{j_{1i}},\weightscalar_{j_{1i}}^*) =  \left( \frac{\cvx^{\prime^*}_i}{\sqrt{\|\cvx^{\prime^*}_i\|_2}},\,\, \sqrt{\|\cvx^{\prime^*}_i\|_2} \right)\quad \mbox{  if  } \quad \|\cvx^{\prime^*}_i\|_2>0 \\
   &(\firstw^*_{j_{2i}},\weightscalar_{j_{2i}}^*) =\left(\frac{\cvx^{*}_i}{\sqrt{\|\cvx^{*}_i\|_2}},\,\,  -\sqrt{\|\cvx^{*}_i\|_2}\right) \quad \mbox{  if  } \quad \|\cvx^{*}_i\|_2>0\,, 
\end{align*}
where $\{\cvx^{\prime^*}_i,\cvx^{*}_i\}_{i=1}^{P_{conv}}$ are the optimal solutions to \eqref{eq:twolayer_final_general}.
\end{theo}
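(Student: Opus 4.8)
The plan is to replay the proof of Theorem~\ref{theo:twolayer_main} from Section~\ref{sec:proof_main} with the squared loss replaced by a generic proper closed convex function $\ell(\cdot,\labelvec)$, substituting Fenchel conjugation for the explicit quadratic computations. First I would reduce \eqref{eq:twolayer_cost_general} to an $\ell_1$-penalized form: the rescaling $\firstw_j\mapsto\gamma_j\firstw_j$, $\weightscalar_j\mapsto\weightscalar_j/\gamma_j$ of Lemma~\ref{lemma:scaling_twolayer} leaves the prediction $\sum_{j,k}\relu{\data_k\firstw_j}\weightscalar_j$ unchanged and, with $\gamma_j=(|\weightscalar_j|/\|\firstw_j\|_2)^{1/2}$ together with the inactive-constraint argument, turns the weight-decay term into $\beta\|\weight\|_1$ subject to $\firstw_j\in\ball_2$; since this reasoning never touches the data-fitting term it applies verbatim, yielding
\begin{align*}
p_1^* = \min_{\substack{\{\firstw_j,\weightscalar_j\}_{j=1}^m\\ \firstw_j\in\ball_2}} \ell\left(\sum_{j=1}^m\sum_{k=1}^K\relu{\data_k\firstw_j}\weightscalar_j,\,\labelvec\right) + \beta\|\weight\|_1.
\end{align*}

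Next I would dualize with respect to $\weight$. Writing $\ell^*$ for the Fenchel conjugate of $\ell(\cdot,\labelvec)$ --- with the same sign convention as the squared-loss case, where the term appearing in the dual is $-\tfrac12\|\dual-\labelvec\|_2^2+\tfrac12\|\labelvec\|_2^2$ --- swapping min and max gives the weak dual $p_1^*\ge d_1^*=\max_\dual -\ell^*(\dual)$ subject to $\max_{\firstw\in\ball_2}\big|\sum_{k=1}^K\dual^T\relu{\data_k\firstw}\big|\le\beta$, which is also the bidual of \eqref{eq:twolayer_cost_general}; strong duality $p_1^*=d_1^*$ for $m\ge m^*$ follows from the semi-infinite duality argument of Appendix~\ref{sec:strong_duality_twolayer}, which uses only convexity of $\ell$. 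The remaining steps of Section~\ref{sec:proof_main} then carry over line by line: enumerate the convolutional hyperplane arrangements $(S_i^1,\dots,S_i^K)$, $i\in[P_{conv}]$; rewrite each of the two constraints as the existence of $\boldsymbol{\alpha}_{ik},\boldsymbol{\alpha}_{ik}'\ge0$ with $\big\|\sum_k\dual^T\diag(S_i^k)\data_k+\boldsymbol{\alpha}_{ik}^T(2\diag(S_i^k)-\vec{I}_n)\data_k\big\|_2\le\beta$ (and its sign-flipped counterpart); use Slater feasibility at $\dual=\boldsymbol{\alpha}=0$ to introduce multipliers $\lambda_i,\lambda_i'\ge0$; introduce auxiliary variables $\vec{z}_i,\vec{z}_i'\in\ball_2$ for the $\ell_2$-norm terms; and apply Sion's minimax theorem \citep{sion_minimax} to move the $(\dual,\boldsymbol{\alpha})$-maximization inside. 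Maximizing over $\boldsymbol{\alpha}_{ik}\ge0$ produces the sign-feasibility constraints $(2\diag(S_i^k)-\vec{I}_n)\data_k\vec{z}_i\ge0$ (and the primed version), while the remaining maximization $\max_\dual\,\dual^T\big(\sum_{i,k}\diag(S_i^k)\data_k(\lambda_i'\vec{z}_i'-\lambda_i\vec{z}_i)\big)-\ell^*(\dual)$ returns $\ell$ evaluated at that argument by Fenchel--Moreau biconjugation ($\ell^{**}=\ell$ for $\ell$ proper closed convex). The change of variables $\cvx_i=\lambda_i\vec{z}_i$, $\cvx_i'=\lambda_i'\vec{z}_i'$, with $\lambda_i=\|\cvx_i\|_2$ and $\lambda_i'=\|\cvx_i'\|_2$ feasible and optimal, yields exactly \eqref{eq:twolayer_final_general}, whose value therefore equals $d_1^*$.

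For the reconstruction claim I would plug the prescribed weights into \eqref{eq:twolayer_cost_general}. The sign-feasibility constraints of \eqref{eq:twolayer_final_general} force $\relu{\data_k\cvx_i^{\prime*}}=\diag(S_i^k)\data_k\cvx_i^{\prime*}$, and since ReLU commutes with multiplication by a positive scalar, $\sum_k\relu{\data_k\firstw^*_{j_{1i}}}\weightscalar^*_{j_{1i}}=\sum_k\diag(S_i^k)\data_k\cvx_i^{\prime*}$ and $\sum_k\relu{\data_k\firstw^*_{j_{2i}}}\weightscalar^*_{j_{2i}}=-\sum_k\diag(S_i^k)\data_k\cvx_i^{*}$; hence the data-fitting term equals $\ell\big(\sum_{i,k}\diag(S_i^k)\data_k(\cvx_i^{\prime*}-\cvx_i^{*}),\labelvec\big)$ and $\tfrac{\beta}{2}\sum_{j}(\|\firstw_j^*\|_2^2+(\weightscalar_j^*)^2)=\beta\sum_{i=1}^{P_{conv}}(\|\cvx_i^{*}\|_2+\|\cvx_i^{\prime*}\|_2)$. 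Thus this feasible point of \eqref{eq:twolayer_cost_general} with $m^*=\sum_i\mathbbm{1}[\|\cvx_i^{*}\|_2\ne0]+\sum_i\mathbbm{1}[\|\cvx_i^{\prime*}\|_2\ne0]$ neurons attains the optimal value of \eqref{eq:twolayer_final_general}, so $p_1^*\le d_1^*$; combined with weak duality this forces $p_1^*=d_1^*$, equal to the value of \eqref{eq:twolayer_final_general}, and indices with vanishing $\cvx_i^{*}$ or $\cvx_i^{\prime*}$ simply contribute no neuron.

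The main obstacle is the step that in the squared-loss case is a routine closed-form quadratic maximization: here it must be carried out abstractly, which requires (i) the assumption that $\ell(\cdot,\labelvec)$ is proper closed convex so that $\ell^{**}=\ell$ (satisfied by all standard regression and classification losses), and (ii) checking the hypotheses of Sion's minimax theorem for the resulting convex--concave saddle function --- in particular that the multipliers $\lambda_i$ can be restricted to a compact set without loss of generality (which follows from $p_1^*<\infty$ and the fact that the $\beta\lambda_i$ penalty dominates for large $\lambda_i$), and that $\dual\mapsto-\ell^*(\dual)$ is concave and upper semicontinuous on its effective domain. Once these are in place every remaining manipulation is identical to Section~\ref{sec:proof_main}, and the bound $m^*\le n+1$ is inherited from the same Carath\'{e}odory-type argument on the $n$-dimensional fitted vector.
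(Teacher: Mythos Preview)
Your proposal is correct and follows essentially the same route as the paper: Fenchel duality in place of the quadratic computation, enumeration of convolutional hyperplane arrangements, Sion's minimax theorem, and the reconstruction via the prescribed $(\firstw_j^*,\weightscalar_j^*)$ showing $p_1^*\le d_1^*$. The paper's own proof is much terser---it only writes out the reconstruction inequality and appeals to the derivation of Section~\ref{sec:proof_main} carrying over, with the generic-loss dual stated separately as Theorem~\ref{theo:twolayer_generic_dual}---whereas you spell out the full chain and are more careful about the hypotheses needed (proper closed convex $\ell$ for biconjugation, compactness for Sion). One small inaccuracy: the strong-duality argument in Appendix~\ref{sec:strong_duality_twolayer} does use the explicit squared-loss form to verify finiteness and boundedness of the solution set, so ``uses only convexity of $\ell$'' overstates it slightly; but the adaptation you sketch (finiteness from $\ell\ge 0$ and strict feasibility at $\dual=\vec{0}$) is exactly what is needed, and the paper makes no further argument here either.
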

\begin{proof}[\textbf{Proof of Theorem \ref{theo:twolayer_general}}]
 We first note that there will be $m^*$ vectors $\{\cvx^{\prime^*}_i,\cvx^{*}_i\}$. Constructing $\{\firstw^*_j,\weightscalar^*_j\}_{j=1}^{m^*}$ as stated in the theorem, and plugging in the non-convex objective \eqref{eq:twolayer_cost_general}, we obtain the value
 \begin{align*}
         p^*_1 \le\ell\left( \sum_{j=1}^{m^*} \sum_{k=1}^K (\data_k \firstw_j^*)_+\weightscalar_{j}^*, \labelvec\right)  &+ \frac{\beta}{2} \sum_{i=1, \cvx^{\prime^*}_i\neq 0}^{P_{conv}} \left( \left\|\frac{\cvx^{\prime^*}_i}{\sqrt{\|\cvx^{\prime^*}_i\|_2}}\right\|_2^2 + \left\|\sqrt{\|\cvx^{\prime^*}_i\|_2} \right\|_2^2 \right) \\
    &+ \frac{\beta}{2} \sum_{i=1, \cvx^{*}_i\neq 0}^{P_{conv}} \left(\left\|\frac{\cvx^{*}_i}{\sqrt{\|\cvx^{*}_i\|_2}}\right\|_2^2 + \left\|\sqrt{\|\cvx^{*}_i\|_2} \right\|_2^2\right)
\end{align*}
which is identical to the objective value of the convex program \eqref{eq:twolayer_final_general}. Since the value of the convex program is equal to the value of it's dual $d_1^*$ in the dual, we conclude that $p_1^*=d_1^*$, which is equal to the value of the convex program \eqref{eq:twolayer_final_general} achieved by the prescribed parameters.

 We also show that our dual characterization holds for arbitrary convex loss functions
\begin{align}
    \min_{\substack{\{\firstw_j,\weightscalar_j\}_{j=1}^m\\ \firstw_j \in \ball_2,\forall j }}\ell\left( \sum_{j=1}^m \sum_{k=1}^K (\data_k \firstw_j)_+\weightscalar_{j}, \labelvec\right) + \beta \| \weight\|_1 ,\label{eq:twolayer_general_loss}
\end{align}
where $\ell(\cdot,\labelvec)$ is a convex loss function. 
\end{proof}
\begin{theo}\label{theo:twolayer_generic_dual}
The dual of \eqref{eq:twolayer_general_loss} is given by
\begin{align*}
    &\max_{\dual}  - \ell^*(\dual)\mbox{ s.t. } \left\vert \sum_{k=1}^K \dual^T(\data_k \firstw)_+\right\vert \le \beta,\; \forall \firstw \in \ball_2 \,,
\end{align*}
where $\ell^*$ is the Fenchel conjugate function defined as
\begin{align*}
\ell^*(\dual) = \max_{\vec{z}} \vec{z}^T \dual - \ell(\vec{z},\labelvec)\,.
\end{align*}
\end{theo}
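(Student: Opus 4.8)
The plan is to compute the Lagrangian dual of \eqref{eq:twolayer_general_loss} directly, in the same spirit as the squared-loss derivation of Section~\ref{sec:proof_main}, and to verify that it coincides with the stated semi-infinite program; the loss itself enters only through a single Fenchel-conjugate step. Throughout, write $p^*$ for the optimal value of \eqref{eq:twolayer_general_loss} and $d^*$ for that of the asserted dual.

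First I would lift the problem: introduce $\vec{r}\in\R^n$ and rewrite \eqref{eq:twolayer_general_loss} equivalently as the minimization of $\ell(\vec{r},\labelvec)+\beta\|\weight\|_1$ over $\vec{r}$ and $\{\firstw_j\in\ball_2,\weightscalar_j\}_{j=1}^m$ subject to the linear coupling $\vec{r}=\sum_{j=1}^m\sum_{k=1}^K\relu{\data_k\firstw_j}\weightscalar_j$. Attaching a multiplier $\dual\in\R^n$ to this equality gives the Lagrangian
\begin{align*}
L(\vec{r},\{\firstw_j,\weightscalar_j\},\dual)=\ell(\vec{r},\labelvec)+\beta\|\weight\|_1+\dual^T\Big(\sum_{j=1}^m\sum_{k=1}^K\relu{\data_k\firstw_j}\weightscalar_j-\vec{r}\Big),
\end{align*}
whose infimum over the primal variables is the dual function $g(\dual)$. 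Since $L$ separates into an $\vec{r}$-term $\ell(\vec{r},\labelvec)-\dual^T\vec{r}$ and a sum of per-neuron terms $\beta|\weightscalar_j|+\weightscalar_j\sum_{k}\dual^T\relu{\data_k\firstw_j}$, I would minimize each block in turn.

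The $\vec{r}$-block gives $\min_{\vec{r}}\big(\ell(\vec{r},\labelvec)-\dual^T\vec{r}\big)=-\ell^*(\dual)$ directly from the definition of the Fenchel conjugate. Writing $c(\firstw):=\sum_{k=1}^K\dual^T\relu{\data_k\firstw}$, the remaining block is $\sum_{j}\min_{\firstw_j\in\ball_2,\,\weightscalar_j}\big(\beta|\weightscalar_j|+\weightscalar_j\,c(\firstw_j)\big)$; for fixed $\firstw_j$ the scalar minimization over $\weightscalar_j\in\R$ equals $0$ when $|c(\firstw_j)|\le\beta$ and $-\infty$ otherwise, so minimizing further over $\firstw_j\in\ball_2$ yields $0$ exactly when $|c(\firstw)|\le\beta$ for every $\firstw\in\ball_2$, and $-\infty$ otherwise. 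Hence $g(\dual)=-\ell^*(\dual)$ on the set $\{\dual:\ |\sum_{k}\dual^T\relu{\data_k\firstw}|\le\beta\ \ \forall\,\firstw\in\ball_2\}$ and $g(\dual)=-\infty$ off it, so $d^*=\max_\dual g(\dual)$ is precisely the program in the statement; weak duality $p^*\ge d^*$ is then automatic.

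The point that requires care — and the main obstacle — is that the per-neuron block is minimized over the \emph{non-convex} constraint set $\firstw_j\in\ball_2$ together with unbounded $\weightscalar_j$, so one cannot simply invoke a minimax theorem to upgrade weak duality to the equality $p^*=d^*$. For that I would appeal to the machinery already developed for this architecture: reformulating the semi-infinite constraint through convolutional hyperplane arrangements exactly as in Section~\ref{sec:proof_main}, and combining it with the finite-width construction of Theorem~\ref{theo:twolayer_general}, which exhibits a network with $m\ge m^*$ ($m^*\le n+1$) attaining the optimum of the dual. Since the loss participates only through the already-resolved $\vec{r}$-block, these arguments carry over verbatim from the squared-loss case, closing the chain between \eqref{eq:twolayer_general_loss} and its dual.
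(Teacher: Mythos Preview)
Your proposal is correct and follows essentially the same approach as the paper: lift the problem by introducing an auxiliary variable tied to the network output via an equality constraint, form the Lagrangian with multiplier $\dual$, and separate into an $\vec{r}$-block that yields $-\ell^*(\dual)$ and per-neuron blocks that produce the semi-infinite constraint. In fact your derivation is more detailed than the paper's, which simply states the dual function and then says ``using classical Fenchel duality yields the claimed dual form'' without spelling out the $0/-\infty$ dichotomy for the neuron blocks; your final paragraph on strong duality goes beyond what the theorem actually claims (it only asserts the form of the dual, not that $p^*=d^*$), so that extra step is not needed here.
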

\begin{proof}[\textbf{Proof of Theorem \ref{theo:twolayer_generic_dual}}]
The proof follows from classical Fenchel duality \citep{boyd_convex}. We first describe \eqref{eq:twolayer_general_loss} in an equivalent form as follows
\begin{align*}
    \min_{\substack{\{\firstw_j,\weightscalar_j\}_{j=1}^m,\vec{z}\\ \firstw_j \in \ball_2,\forall j }} \ell(\vec{z},\labelvec) + \beta \| \weight \|_1 \text{ s.t. } \vec{z}=\sum_{j=1}^m \sum_{k=1}^K(\data_k \firstw_j)_+\weightscalar_j, .
\end{align*}
Then the dual function is
\begin{align*}
    g(\dual)= \min_{\substack{\{\firstw_j,\weightscalar_j\}_{j=1}^m,\vec{z}\\ \firstw_j \in \ball_2,\forall j }} \ell(\vec{z},\labelvec)- \dual^T \vec{z}+ \dual^T \sum_{j=1}^m \sum_{k=1}^K (\data_k\firstw_j)_+\weightscalar_j+ \beta \|\weight\|_1.
\end{align*}
Therefore, using the classical Fenchel duality \citep{boyd_convex} yields the claimed dual form.

\end{proof}

\subsection{Strong duality results}
\label{sec:strong_duality_twolayer}
\begin{prop}\label{prop:strong_duality_twolayer}
Given $m\geq m^* $, strong duality holds for \eqref{eq:twolayer_dual}, i.e., $p_1^*=d_1^*$.
\end{prop}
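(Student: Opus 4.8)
The plan is to sandwich $p_1^*$ between $d_1^*$ and the optimal value $p_{\mathrm{cvx}}^*$ of the finite convex program \eqref{eq:twolayer_final_theorem}. Weak duality $p_1^*\ge d_1^*$ is already contained in \eqref{eq:twolayer_dual} (it is the min--max interchange applied to the $\ell_1$-penalized problem \eqref{eq:twolayer_primal2}, itself equivalent to \eqref{eq:twolayer_primal1} by Lemma \ref{lemma:scaling_twolayer}), so the whole task is the reverse bound. I would show (a) $d_1^*=p_{\mathrm{cvx}}^*$ by verifying every step of the derivation of Section \ref{sec:proof_main}, and (b) $p_1^*\le p_{\mathrm{cvx}}^*$ whenever $m\ge m^*$ via the explicit network construction; together with weak duality this forces $d_1^*=p_{\mathrm{cvx}}^*\ge p_1^*\ge d_1^*$, hence equality throughout.

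For (a) the points to nail down along the chain \eqref{eq:twolayer_dual}$\,\to\,$\eqref{eq:twolayer_final} are the constraint qualifications. For each fixed tuple of hyperplane arrangements $(S^1_i,\dots,S^K_i)$, the inner maximization of $\dual^{\top}$ against $\{\diag(S^k_i)\data_k\firstw\}$ over $\firstw\in\ball_2$ subject to $(2\diag(S^k_i)-\vec{I}_n)\data_k\firstw\ge 0$ is a convex program whose only non-affine constraint is the Euclidean-ball constraint, strictly feasible at $\firstw=\vec{0}$; hence strong duality holds unconditionally (refined Slater), the maximization collapses to $\min_{\boldsymbol{\alpha}_{ik}\ge 0}\|\cdot\|_2$, and \eqref{eq:twolayer_dual} is exactly equivalent to \eqref{eq:twolayer_dual2}. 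The problem \eqref{eq:twolayer_dual2} is strictly feasible at $\dual=\boldsymbol{\alpha}_{ik}=\boldsymbol{\alpha}_{ik}^\prime=\vec{0}$ (the norms are $0<\beta$), so Slater again gives the Lagrangian form \eqref{eq:twolayer_dual3}; after the substitution $-\|v\|_2=\min_{\vec z\in\ball_2}\vec z^{\top}v$, the objective is concave in $(\dual,\boldsymbol{\alpha}_{ik},\boldsymbol{\alpha}_{ik}^\prime)$ and affine in $(\vec z_i,\vec z_i^\prime)$ over the compact convex set $\ball_2\times\ball_2$, so Sion's minimax theorem \citep{sion_minimax} justifies the interchange to \eqref{eq:twolayer_dual5}; finally, maximizing out $\dual$ (a concave quadratic) and $\boldsymbol{\alpha}_{ik},\boldsymbol{\alpha}_{ik}^\prime$ (linear, whose finiteness reinstates the feasibility cones) yields \eqref{eq:twolayer_dual6}, and the change of variables $\cvx_i=\lambda_i\vec z_i$, $\cvx_i^\prime=\lambda_i^\prime\vec z_i^\prime$ with $\lambda_i=\|\cvx_i\|_2$, $\lambda_i^\prime=\|\cvx_i^\prime\|_2$ optimal produces \eqref{eq:twolayer_final_theorem}. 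Reading the chain as equalities of optimal values gives $d_1^*=p_{\mathrm{cvx}}^*$.

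For (b) I would first argue that \eqref{eq:twolayer_final_theorem} has an optimal solution $\{\cvx_i^{*},\cvx_i^{\prime^*}\}$ with at most $n+1$ nonzero blocks: fixing the fitted vector $\hat\labelvec\in\R^n$ attained at the optimum and minimizing the group-$\ell_1$ penalty over the polyhedral feasible set consistent with $\hat\labelvec$ is a second-order cone program attaining its minimum at an extreme point, and such an extreme point has at most $n+1$ active atoms by a Carathéodory-type dimension count; set $m^*$ to that number. Then for any $m\ge m^*$ I build $\{\firstw_j^*,\weightscalar_j^*\}_{j=1}^m$ as prescribed in Theorem \ref{theo:twolayer_main} (the surplus filters set to zero), and evaluate \eqref{eq:twolayer_primal1}: positive homogeneity of ReLU together with the feasibility constraints $(2\diag(S_i^k)-\vec{I}_n)\data_k\cvx_i^{*}\ge 0$ and $(2\diag(S_i^k)-\vec{I}_n)\data_k\cvx_i^{\prime^*}\ge 0$ give $\relu{\data_k\firstw_{j_{1i}}^*}\weightscalar_{j_{1i}}^*=\diag(S_i^k)\data_k\cvx_i^{\prime^*}$ and $\relu{\data_k\firstw_{j_{2i}}^*}\weightscalar_{j_{2i}}^*=-\diag(S_i^k)\data_k\cvx_i^{*}$, so the network output matches the fitted vector of \eqref{eq:twolayer_final_theorem}; meanwhile each pair $(\firstw_j^*,\weightscalar_j^*)$ attains equality in the AM--GM bound of Lemma \ref{lemma:scaling_twolayer}, so the weight-decay term equals $\beta\sum_i(\|\cvx_i^{*}\|_2+\|\cvx_i^{\prime^*}\|_2)$. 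Hence this feasible point of \eqref{eq:twolayer_primal1} has objective value $p_{\mathrm{cvx}}^*$, so $p_1^*\le p_{\mathrm{cvx}}^*=d_1^*$, and with weak duality $p_1^*=d_1^*$.

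The main obstacle is step (b)'s sparsity claim --- that a finite-width optimum is reached already at $m\le n+1$, which requires the extreme-point/Carathéodory argument on the group-sparse fitting problem --- and, to a lesser extent, pinning down the Sion and refined-Slater hypotheses along the chain in (a); the remaining algebra is routine given Section \ref{sec:proof_main}.
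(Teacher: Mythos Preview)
Your proposal is correct and takes a genuinely different route from the paper's own proof of Proposition~\ref{prop:strong_duality_twolayer}. The paper does \emph{not} bootstrap strong duality from the finite convex program \eqref{eq:twolayer_final_theorem}. Instead, it takes the bidual of the semi-infinite problem \eqref{eq:twolayer_dual}, obtaining the infinite-width network problem over signed Radon measures $\mu$, namely $p_{1,\infty}=\min_\mu \tfrac12\|\int_{\firstw\in\ball_2}\sum_k(\data_k\firstw)_+\,d\mu(\firstw)-\labelvec\|_2^2+\beta\|\mu\|_{TV}$. Carath\'eodory applied to this measure representation (the output map lands in $\R^n$) gives a minimizer supported on at most $n+1$ Diracs, hence $p_{1,\infty}=p_1^*$ for $m\ge m^*\le n+1$; and strong duality $d_1^*=p_{1,\infty}$ is obtained as a black box from Theorem~2.2 of \citet{shapiro2009semi}, using that \eqref{eq:twolayer_dual} is convex with a finite optimal value and a nonempty bounded solution set.

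By contrast, you never touch the measure formulation or Shapiro's theorem: you read the chain \eqref{eq:twolayer_dual}$\to$\eqref{eq:twolayer_dual2}$\to$\eqref{eq:twolayer_dual3}$\to$\eqref{eq:twolayer_dual5}$\to$\eqref{eq:twolayer_dual6}$\to$\eqref{eq:twolayer_final} from Section~\ref{sec:proof_main} as an equality $d_1^*=p_{\mathrm{cvx}}^*$, then close the sandwich via the explicit network construction and weak duality. This is more self-contained and makes the role of Section~\ref{sec:proof_main} cleaner --- in effect you observe that strong duality is a \emph{corollary} of that derivation rather than an input to it. What the paper's approach buys is a cleaner $m^*\le n+1$ bound: Carath\'eodory on measures with $n$-dimensional output is standard (cf.\ \citet{rosset2007}), whereas your extreme-point argument on the constrained group-$\ell_1$ problem at fixed $\hat\labelvec$ is correct in spirit but is not a one-liner --- the feasible set is a polyhedron and the objective is a sum of Euclidean norms, so you would need to argue carefully (e.g., via conic Carath\'eodory on the atoms $\{\sum_k\diag(S_i^k)\data_k\cvx:\ (2\diag(S_i^k)-\vec I_n)\data_k\cvx\ge 0\}\subset\R^n$) that an optimal representation uses at most $n+1$ blocks. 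You rightly flag this as the main obstacle; everything else in your outline is sound.
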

We first review the basic properties of infinite size neural networks and introduce technical details to derive the dual of \eqref{eq:twolayer_dual}. We refer the reader to \citet{rosset2007,bach2017breaking} for further details. Let us first consider a measurable input space $\mathcal{X}$ with a set of continuous basis functions (i.e., neurons or filters in our context) $\psi_{\firstw}: \mathcal{X}\rightarrow \mathcal{R}$, which are parameterized by $\firstw \in \ball_2$. Next, we use real-valued Radon measures with the uniform norms \citep{Rudin}. Let us consider a signed Radon measure denoted as $\mu$. Now, we can use $\mu$ to formulate an infinite size neural network as $f(x)= \int_{\firstw \in \ball_2} \psi_{\firstw}(x) d\mu(\firstw)$, where $x \in \mathcal{X}$ is the input. The norm for $\mu$ is usually defined as its total variation norm, which is the supremum of $\int_{\firstw \in \ball_2} g(\firstw) d \mu(\firstw)$ over all continuous functions $g(\firstw)$ that satisfy $|g(\firstw)|\leq 1$. Now, we consider the case where the basis functions are ReLUs, i.e., $\psi_{\firstw}=\relu{\sample^T \firstw}$. Then, the output of a network with finitely many neurons, say $m$ neurons, can be written as
\begin{align*}
    f(\sample)= \sum_{j=1}^m \psi_{\firstw_j}\weightscalar_{j}
\end{align*}
which can be obtained by selecting $\mu$ as a weighted sum of Dirac delta functions, i.e., $\mu=\sum_{j=1}^m \weightscalar_j \delta(\firstw-\firstw_j)$. In this case, the total variation norm, denoted as $\|\mu\|_{TV}$, corresponds to the $\ell_1$ norm $\|\weight\|_1$.

Now, we ready to derive the dual of \eqref{eq:twolayer_dual}, which can be stated as follows (see Section 8.6 of \citet{semiinfinite_goberna} and Section 2 of \citet{shapiro2009semi} for further details)
\begin{align}
    \label{eq:twolayer_dual_inf}
  d_1^* \leq p_{1,\infty}=\min_{\mu} \frac{1}{2} \left\| \int_{\firstw \in \ball_2}\sum_{k=1}^K \relu{\data_k \firstw}d \mu(\firstw)-\labelvec \right\|_2^2+ \beta \|\mu\|_{TV}.
\end{align}
Although \eqref{eq:twolayer_dual_inf} involves an infinite dimensional integral form, by Caratheodory's theorem, we know that the integral can be represented as a finite summation, to be more precise, a summation of at most $n+1$ Dirac delta functions \citep{rosset2007}. If we denote the number of Dirac delta functions as $m^*$, where $m^*\leq n+ 1$, then we have
\begin{align}
    %\label{eq:twolayer_dual_finite} 
   p_{1,\infty}&=\min_{\substack{\{\firstw_j,\weightscalar_j\}_{j=1}^{m^*}\\ \firstw_j \in \ball_2,\forall j }} \frac{1}{2} \left\| \sum_{j=1}^{m^*}
   \sum_{k=1}^K \relu{\data_k \firstw_{j}}\weightscalar_{j}-\labelvec \right\|_2^2+ \beta \|\weight\|_{1} \nonumber \\ \nonumber
   &= p_1^*
\end{align}
provided that $m \geq m^*$. We now need to show that strong duality holds, i.e., $p^*_1=d_1^*$.

We first note that the semi-infinite problem \eqref{eq:twolayer_dual} is convex. Then, we prove that the optimal value is finite. Since $\beta>0$, we know that $\dual=\vec{0}$ is strictly feasible, and achieves $0$ objective value. Moreover, since $-\|\labelvec-\dual\|_2^2\le 0$, the optimal objective value $p_1^*$ is finite. Therefore, by Theorem 2.2 of \citet{shapiro2009semi}, strong duality holds, i.e., $p_{1,\infty}^*=d^*_{1}$ provided that the solution set of \eqref{eq:twolayer_dual} is nonempty and bounded. We also note that the solution set of \eqref{eq:twolayer_dual} is the Euclidean projection of $\labelvec$ onto a convex, closed and bounded set since $\relu{\data_k\firstw}$ can be expressed as the union of finitely many convex closed and bounded sets.
\qed

%%%%%%%%%%%%%%%%%%%%%%% subsection %%%%%%%%%%%%%%%%%%%%%%%%%%%%%%%%%%
\subsection{Proof of Theorem \ref{theo:twolayer_max_main}} \label{sec:strong_duality_twolayer_max}
The proof follows the proof of Proposition \ref{prop:strong_duality_twolayer}. The dual of \eqref{eq:twolayer_max_dual} is as follows
\begin{align*}
    %\label{eq:twolayer_max_dual_inf}
  d_1^* \leq p_{1,\infty}=\min_{\mu} \frac{1}{2} \left\| \int_{\firstw \in \ball_2}\text{maxpool}\left(\{\relu{\data_k \firstw }\}_{k=1}^K \right)d \mu(\firstw)-\labelvec \right\|_2^2+ \beta \|\mu\|_{TV},
\end{align*}
which has the following finite equivalent
\begin{align*}
    %\label{eq:twolayer_max_dual_finite}
   p_{1,\infty}&=\min_{\substack{\{\firstw_j,\weightscalar_j\}_{j=1}^{m^*}\\ \firstw_j \in \ball_2,\forall j }} \frac{1}{2} \left\| \sum_{j=1}^{m^*}
   \text{maxpool}\left(\{\relu{\data_k \firstw_j }\}_{k=1}^K \right)\weightscalar_{j}-\labelvec \right\|_2^2+ \beta \|\weight\|_{1} \nonumber \\
   &= p_1^*
\end{align*}
provided that $m \geq m^*$. We now need to show that strong duality holds, i.e., $p^*_1=d_1^*$.

Since $\text{maxpool}(\cdot)$ can be expressed as the union of finitely many convex, closed and bounded sets, the rest of the strong duality results directly follow from the proof of Proposition \ref{prop:strong_duality_twolayer}.

We now focus on the single-sided dual constraint
\begin{align*}
    %\label{eq:twolayer_max_dualcons1}
\max_{\firstw \in \ball_2}  \dual^T \text{maxpool}(\left\{\relu{\data_k \firstw}\}_{k=1}^K\right) \leq \beta,
\end{align*}
which can be written as
\begin{align*}%\label{eq:twolayer_max_dualcons2}
    &\max_{\substack{S^k \subseteq [n]\\ S^k\in \mathcal{S} }}  \max_{\firstw  \in \ball_2}  \sum_{k=1}^K \dual^T \diag(S^k)\data_k \firstw \text{ s.t. }(2\diag(S^k)-\vec{I}_n) \data_k   \firstw \geq 0, \, \forall k, \\ \nonumber
   & \diag(S^k)\data_k \firstw \geq \diag(S^k)\data_j \firstw, \forall j,k \in [K],\sum_{k=1}^K \diag(S^k)=\vec{I}_n.
\end{align*}
We again enumerate all hyperplane arrangements and index them in an arbitrary order, where we define the overall set as $\mathcal{S}_K:=\{(S^1_i,\ldots, S^K_i): S^k_i \in \mathcal{S}, \forall k,i;\, \sum_{k=1}^K \diag(S^k_i)=\vec{I}_n, \forall i\}$ and $P_{conv}=\left| \mathcal{S}_K\right|$. Then, following the same steps in \cref{eq:twolayer_dualcons2,eq:twolayer_dual2,eq:twolayer_dual3,eq:twolayer_dual5,eq:twolayer_dual6} gives the following convex problem
\begin{align}
    \label{eq:twolayer_max_final}
    & \min_{\weight_{i},\weight_{i}^\prime \in \mathbb{R}^h} \frac{1}{2}\left\| \sum_{i=1}^{P_{conv}} \sum_{k=1}^K\diag(S_i^k)\data_{k} \left( \weight_{i}^\prime-\weight_{i}\right)-\labelvec\right\|_2^2+ \beta \sum_{i=1}^{P_{conv}} \left(\|\weight_{i}\|_2+\|\weight_{i}^\prime\|_2 \right)  \\ \nonumber
&\text{s.t. }(2\diag(S_i^k)-\vec{I}_n) \data_{k} \weight_{i}  \geq 0,\,(2\diag(S_i^k)-\vec{I}_n) \data_{k} \weight_{i}^\prime \geq 0, \forall i,k,\\
&\diag(S^k_i)\data_k \weight_i \geq \diag(S^k_i)\data_j \weight_i,\,\diag(S^k_i)\data_k \weight_i^\prime \geq \diag(S^k_i)\data_j \weight_i^\prime, \forall i,j,k. \nonumber
\end{align}
 We now note that there will be $m^*$ pairs $\{\weight^{\prime^*}_i,\weight^{*}_i\}$. Then, we can construct a set of weights $\{\firstw^*_j,\weightscalar^*_j\}_{j=1}^{m^*}$ as defined in the theorem and evaluate the non-convex objective in \eqref{eq:twolayer_max_primal2} using these weights as follows
 \begin{align*}
         p^*_1 \le\frac{1}{2} \left \| \sum_{j=1}^{m^*} \text{maxpool}\left(\{\relu{\data_k \firstw_j^* }\}_{k=1}^K \right) \weightscalar_{j}^*- \labelvec \right\|_2^2 &+ \frac{\beta}{2} \sum_{i=1, \weight^{\prime^*}_i\neq 0}^{P_{conv}} \left( \left\|\frac{\weight^{\prime^*}_i}{\sqrt{\|\weight^{\prime^*}_i\|_2}}\right\|_2^2 + \left\|\sqrt{\|\weight^{\prime^*}_i\|_2} \right\|_2^2 \right) \\
    &+ \frac{\beta}{2} \sum_{i=1, \weight^{*}_i\neq 0}^{P_{conv}} \left(\left\|\frac{\weight^{*}_i}{\sqrt{\|\weight^{*}_i\|_2}}\right\|_2^2 + \left\|\sqrt{\|\weight^{*}_i\|_2} \right\|_2^2\right)
\end{align*}
which has the same objective value with \eqref{eq:twolayer_max_final}. Since strong duality holds for the convex program, we have $p_1^*=d_1^*$, which is equal to the value of the convex program \eqref{eq:twolayer_max_final} achieved by the prescribed parameters above.
\qed

\subsection{Proof of Theorem \ref{theo:circular_main_theorem}} \label{sec:circular_appendix}
By using a rescaling for each $\weight_{1j}$ and $\weightscalar_{2j}$, \eqref{eq:circular_primal1} can be equivalently stated as
\begin{align}
    \label{eq:circular_primal2}
   p^*_2=\min_{\substack{\{\{\firstw_{lj}\}_{l=1}^{L-2}, \weight_{1j},\weightscalar_{2j}\}_{j=1}^m\\\weight_{1j}\in \ball_2, \firstw_{lj}\in \mathcal{U}_L,\forall l,j}} \frac{1}{2}\left\|\sum_{j=1}^m \relu{\data \prod_{l=1}^{L-2}\firstwmat_{lj}  \weight_{1j}} \weightscalar_{2j} -\labelvec \right\|_2^2 + \beta \|\weight_{2}\|_1 .
\end{align}
Let us denote the eigenvalue decomposition of $\firstwmat_{lj}$ as $\firstwmat_{lj}=\vec{F} \diag_{lj} \vec{F}^H$, where $\vec{F} \in \mathbb{C}^{d \times d}$ is the DFT matrix and $\diag_{lj}\in \mathbb{C}^{d \times d}$ is a diagonal matrix. Then, we again take the dual with respect to $\weight_2$ and change the order of min-max as follows
\begin{align}
    \label{eq:circular_dual}
   p^*_2 \geq d^*_2=\max_{\dual} -\frac{1}{2} \| \dual-\labelvec\|_2^2+\frac{1}{2} \|\labelvec\|_2^2  \text{ s.t. }\max_{\substack{\diag_{lj} \in \mathcal{D}_L \\ \weight_{1j}\in \ball_2 }} \left| \dual^T \relu{\data \vec{F} \prod_{l=1}^{L-2}\diag_{lj} \vec{F}^H \weight_{1j}} \right| \leq \beta,\, \forall j,
\end{align}
where $\mathcal{D}_L:=\{(\diag_1,\ldots,\diag_{L-2}: \diag_l \in \mathbb{C}^{d \times d}, \forall l \in [L-2]; \left\|\prod_{l=1}^{L-2}\diag_l \right\|_F^2 \leq d^{L-2}\}$. Below we prove that strong duality holds for \eqref{eq:circular_dual}.

In order to obtain the dual of the semi-infinite problem in \eqref{eq:circular_dual}, we again take dual with respect to $\dual$ (see Appendix \ref{sec:strong_duality_twolayer} and \citet{semiinfinite_goberna,shapiro2009semi} for further details), which yields
\begin{align*}%\label{eq:multilayer_circular_bidual1}
 d_2^* \leq p_{2,\infty}=\min_{\mu}\frac{1}{2}\left\|\int_{ \boldsymbol{\theta}_L\in \Theta_L } \relu{\data\vec{F}\prod_{l=1}^{L-2}\diag_l
    \vec{F}^H\weight_{1}}d{\mu}(\boldsymbol{\theta}_L)-\vec{y}\right\|_2^2+\beta \|\mu\|_{TV}
\end{align*}
where $\Theta_L:=\{(\diag_1,\ldots,\diag_{L-2},\weight_1): \diag_l \in \mathcal{D}_L, \forall l \in [L-2]; \weight_{1}\in \ball_2\}$. % Then, we define $\mu^\prime=\sqrt{d} \mu$ to rewrite the problem as 
% \begin{align}\label{eq:multilayer_circular_bidual2}
%       p_{3,\infty}=\min_{\mu^\prime}\frac{1}{2}\left\| \int_{ \boldsymbol{\theta}_L \in \Theta_L } \relu{\data\vec{F}\prod_{l=1}^{L-2}\diag_l^\prime
%     \vec{F}^H\weight_{1}}d{\mu^\prime}(\boldsymbol{\theta}_L)-\vec{y}\right\|_2^2+\beta \frac{\|\mu^\prime\|_{TV}}{\sqrt{d}}
% \end{align}
Then, selecting ${\mu} = \sum_{j=1}^{m^*} \weightscalar_{2j} \ \delta(\boldsymbol{\theta}_L-\boldsymbol{\theta}_{Lj})$, where $m^* \leq n+1$, gives 
\begin{align*}%\label{eq:multilayer_circular_bidual4}
      p_{2,\infty}&=\min_{\substack{\{\{\diag_{lj}\}_{l=1}^{L-2},\weight_{1j},\weightscalar_{2j}\}_{j=1}^{m^*}\\ \diag_{lj} \in \mathcal{D}_L, \weight_{1j} \in \ball_2,\forall j,l }}\frac{1}{2}\left\| \sum_{j=1}^{m^*}\relu{\data\vec{F}\prod_{l=1}^{L-2} \diag_{lj} \vec{F}^{H}
    \weight_{1j}}\weightscalar_{2j}-\vec{y}\right\|_2^2+\beta \|\weight_{2}\|_1  \\
    &=p_2^*
\end{align*}
provided that $m \geq m^*$ holds. Then, the rest of the strong duality proof directly follows from Proof of Proposition \ref{prop:strong_duality_twolayer}.

We now focus on the single-sided dual constraint
\begin{align*}
    %\label{eq:multilayer_circular_dualcons1}
\max_{\substack{ \diag_{l} \in \mathcal{D}_L \\ \tilde{\weight}_{1} \in \ball_2}}  \dual^T \relu{\tilde{\data}\prod_{l=1}^{L-2}\diag_l \tilde{\weight}_1} \leq \beta,
\end{align*}
which can be written as
\begin{align}\label{eq:circular_dualcons1}
\begin{split}
    &\max_{\substack{S \subseteq [n]\\ S\in \mathcal{S} }}\max_{\substack{ \diag_{l} \in \mathcal{D}_L \\ \tilde{\weight}_{1} \in \ball_2}} \dual^T \diag(S_1)
\tilde{\data} \prod_{l=1}^{L-2}\diag_l \tilde{ \weight}_1 \text{ s.t. }(2\diag(S)-\vec{I}_n)\tilde{\data} \prod_{l=1}^{L-2}\diag_l \tilde{ \weight}_1 \geq 0.
\end{split}
\end{align}
 Since the inner maximization is convex (after a variable change as $\vec{q}=\prod_{l=1}^{L-2}\diag_l \tilde{\weight}_1$) and there exists a strictly feasible solution for a fixed $\diag(S)$ matrix, \eqref{eq:circular_dualcons1} can also be written as
\begin{align*}
     &\max_{\substack{S \subseteq [n]\\ S\in \mathcal{S} }} \min_{\boldsymbol{\alpha}\geq 0}\max_{\substack{ \diag_{l} \in \mathcal{D}_L \\ \tilde{\weight}_{1} \in \ball_2}}  \dual^T \diag(S_{i})
\tilde{\data} \vec{z}+\boldsymbol{\alpha}^T(2\diag(S_{i})-\vec{I}_n)\tilde{\data}\prod_{l=1}^{L-2}\diag_l \tilde{\weight}_1 \\&=    \max_{\substack{S \subseteq [n]\\ S\in \mathcal{S} }} \min_{\boldsymbol{\alpha}\geq 0} \|\dual^T \diag(S)
\tilde{\data} +\boldsymbol{\alpha}^T(2\diag(S)-\vec{I}_n)\tilde{\data}\|_{\infty} d^{\frac{L-2}{2}}.
\end{align*}
We now enumerate all hyperplane arrangements and index them in an arbitrary order, which are denoted as $\diag(S_i)$, where $i \in [P_{cconv}]$. Then, we have
\begin{align*}
    \eqref{eq:circular_dualcons1} &\iff  \forall i \in [P_{cconv}],\, \min_{\boldsymbol{\alpha}\geq 0} \|\dual^T \diag(S_{i})
\tilde{\data} +\boldsymbol{\alpha}^T(2\diag(S_{i})-\vec{I}_n)\tilde{\data}\|_{\infty}d^{\frac{L-2}{2}}\leq \beta \\
& \iff \forall i \in [P_{cconv}], \, \exists \boldsymbol{\alpha}_i  \geq 0 \text{ s.t. } \|\dual^T \diag(S_{i})
\tilde{\data} +\boldsymbol{\alpha}_i^T(2\diag(S_{i})-\vec{I}_n)\tilde{\data}\|_{\infty}d^{\frac{L-2}{2}}\leq \beta.
\end{align*}
We now use the same approach for the two-sided constraint in \eqref{eq:circular_dual} to represent \eqref{eq:circular_dual} as a finite dimensional convex problem as follows
\begin{align}
    \label{eq:circular_dual2}
    &\max_{\substack{\dual\\ \boldsymbol{\alpha}_i,\boldsymbol{\alpha}_i^\prime \geq 0}} -\frac{1}{2} \| \dual-\labelvec\|_2^2+\frac{1}{2} \|\labelvec\|_2^2   \\
\nonumber    &\text{ s.t. }\|\dual^T \diag(S_{i})
\tilde{\data} +\boldsymbol{\alpha}_i^T(2\diag(S_{i})-\vec{I}_n)\tilde{\data}\|_{\infty}d^{\frac{L-2}{2}}\leq \beta, \, \|-\dual^T \diag(S_{i})
\tilde{\data} +\boldsymbol{\alpha}_i^{\prime^T}(2\diag(S_{i})-\vec{I}_n)\tilde{\data}\|_{\infty}d^{\frac{L-2}{2}}\leq \beta, \,\forall i.
\end{align}
We note that the above problem is convex and strictly feasible for $\dual=\boldsymbol{\alpha}_i=\boldsymbol{\alpha}_i^\prime=0$. Therefore, \eqref{eq:circular_dual2} can be written as
\begin{align}
    \label{eq:circular_dual3}
    \min_{\lambda_i,\lambda_i^\prime \geq 0}\max_{\substack{\dual\\ \boldsymbol{\alpha}_i,\boldsymbol{\alpha}_i^\prime \geq 0}} -\frac{1}{2} &\| \dual-\labelvec\|_2^2+\frac{1}{2} \|\labelvec\|_2^2  + \sum_{i=1}^{P_{cconv}} \lambda_i\left( \beta-\|\dual^T \diag(S_{i})
\tilde{\data} +\boldsymbol{\alpha}_i^T(2\diag(S_{i})-\vec{I}_n)\tilde{\data}\|_{\infty}d^{\frac{L-2}{2}}\right) \nonumber \\&+ \sum_{i=1}^{P_{cconv}} \lambda_i^\prime \left( \beta-\|-\dual^T \diag(S_{i})
\tilde{\data} +\boldsymbol{\alpha}_i^{\prime^T}(2\diag(S_{i})-\vec{I}_n)\tilde{\data}\|_{\infty}d^{\frac{L-2}{2}}\right).
\end{align}
Next, we introduce new variables $\vec{z}_i, \vec{z}_i^\prime \in \mathbb{C}^d$ to represent \eqref{eq:circular_dual3} as
\begin{align}
    \label{eq:circular_dual4}
    \min_{\lambda_i,\lambda_i^\prime \geq 0}\max_{\substack{\dual\\ \boldsymbol{\alpha}_i,\boldsymbol{\alpha}_i^\prime \geq 0}}  &\min_{\substack{\vec{z}_i\in \ball_1\\ \vec{z}_i^\prime\in \ball_1}}  -\frac{1}{2}\|\dual-\labelvec\|_2^2+\frac{1}{2} \|\labelvec\|_2^2  + \sum_{i=1}^{P_{cconv}} \lambda_i\left( \beta+d^{\frac{L-2}{2}}\left(\dual^T \diag(S_{i})
\tilde{\data} +\boldsymbol{\alpha}_i^T(2\diag(S_{i})-\vec{I}_n)\tilde{\data}\right)\vec{z}_i\right) \nonumber \\&+ \sum_{i=1}^{P_{cconv}} \lambda_i^\prime \left( \beta+d^{\frac{L-2}{2}}\left(-\dual^T \diag(S_{i})
\tilde{\data} +\boldsymbol{\alpha}_i^{\prime^T}(2\diag(S_{i})-\vec{I}_n)\tilde{\data}\right)\vec{z}_i^\prime\right).
\end{align}
We note that the objective is concave in $\dual,\boldsymbol{\alpha}_i,\boldsymbol{\alpha}_i^\prime$ and convex in $\vec{z}_i, \vec{z}_i^\prime$. Moreover the set $\ball_1$ is convex and compact. We recall Sion's minimax theorem \citep{sion_minimax} for the inner max-min problem and express the strong dual of the problem \eqref{eq:circular_dual4} as
\begin{align}
    \label{eq:circular_dual5}
    \min_{\lambda_i,\lambda_i^\prime \geq 0}\min_{\substack{\vec{z}_i\in \ball_1\\\vec{z}_i^\prime\in \ball_1}}&\max_{\substack{\dual\\ \boldsymbol{\alpha}_i,\boldsymbol{\alpha}_i^\prime \geq 0}}   -\frac{1}{2}\| \dual-\labelvec\|_2^2+\frac{1}{2} \|\labelvec\|_2^2  + \sum_{i=1}^{P_{cconv}} \lambda_i\left( \beta+d^{\frac{L-2}{2}}\left(\dual^T \diag(S_{i})
\tilde{\data} +\boldsymbol{\alpha}_i^T(2\diag(S_{i})-\vec{I}_n)\tilde{\data}\right)\vec{z}_i\right) \nonumber \\&+ \sum_{i=1}^{P_{cconv}} \lambda_i^\prime \left( \beta+d^{\frac{L-2}{2}}\left(-\dual^T \diag(S_{i})
\tilde{\data} +\boldsymbol{\alpha}_i^{\prime^T}(2\diag(S_{i})-\vec{I}_n)\tilde{\data}\right)\vec{z}_i^\prime\right).
\end{align}
Now, we can compute the maximum with respect to $\dual,\boldsymbol{\alpha}_i,\boldsymbol{\alpha}_i^\prime$ analytically to obtain the following problem
\begin{align}
    \label{eq:circular_dual6}
    &\min_{\lambda_i,\lambda_i^\prime \geq 0}\min_{\substack{\vec{z}_i\in \ball_1\\\vec{z}_i^\prime\in \ball_1}}  \frac{1}{2}\left\| d^{\frac{L-2}{2}}\sum_{i=1}^{P_{cconv}}  \diag(S_{i})
\tilde{\data}\left(\lambda_i^\prime\vec{z}_i^\prime-\lambda_i\vec{z}_i\right)-\labelvec\right\|_2^2+ \beta\sum_{i=1}^{P_{cconv}} \left(\lambda_i+\lambda_i^\prime \right)  \\ \nonumber
&\text{s.t. }(2\diag(S_{i})-\vec{I}_n)\tilde{\data}\vec{z}_i \geq 0,\,(2\diag(S_{i})-\vec{I}_n)\tilde{\data}\vec{z}_i^\prime \geq 0, \forall i.
\end{align}
Now we apply a change of variables and define $\cvx_i = d^{\frac{L-2}{2}}\lambda_i \vec{z}_i$ and $\cvx_i^\prime=d^{\frac{L-2}{2}}\lambda_i^\prime \vec{z}_i^\prime$. Thus, we obtain
\begin{align}
    \label{eq:circular_final_proof}
    &\min_{\cvx_{i},\cvx_{i}^\prime} \frac{1}{2}\left\| \sum_{i=1}^{P_{cconv}}  \diag(S_{i})
\tilde{\data}\left(\cvx_i^\prime-\cvx_i\right)-\labelvec \right\|_2^2+ \frac{\beta}{d^{\frac{L-2}{2}}}\sum_{i=1}^{P_{cconv}} \left(\|\cvx_{i}\|_1+\|\cvx_{i}^\prime\|_1 \right)  \\ \nonumber
&\text{s.t. }(2\diag(S_{i})-\vec{I}_n)\tilde{\data}\cvx_i \geq 0,\,(2\diag(S_{i})-\vec{I}_n)\tilde{\data}\cvx_i^\prime \geq 0, \forall i,
\end{align}
where we eliminate the variables $\lambda_i$, $\lambda_i^\prime$, since $\lambda_i=\|\cvx_i\|_1/d^{\frac{L-2}{2}}$ and $\lambda_i^\prime=\|\cvx_i^\prime\|_1/d^{\frac{L-2}{2}}$ are feasible and optimal.

\textbf{Optimal weight construction for \eqref{eq:circular_primal1}}:\\
Given the optimal weights for the convex program in \eqref{eq:circular_final_proof}, i.e., denoted as $\{\cvx_i^*,\cvx_i^{\prime^*}\}_{i=1}^{P_{cconv}}$, we use the following relation
\begin{align*}
    \tilde{\data}\cvx_i^*&= \tilde{\data} \mathrm{diag} \left(d^{\frac{L-2}{2}}\sqrt{\frac{|\cvx_i^*|}{\|\cvx_i^*\|_1}}\right)\mathrm{diag}\left(\sqrt{\frac{|\cvx_i^*|}{d^{\frac{L-2}{2}}}}\right) e^{j \boldsymbol{\phi}_i} \sqrt{\frac{\|\cvx_i^*\|_1}{d^{\frac{L-2}{2}}}}\\
    &=\tilde{\data}\left(\prod_{l=1}^{L-2} \mathrm{diag} \left(\left(\frac{|\cvx_i^*|}{\|\cvx_i^*\|_1}\right)^{\frac{1}{2(L-2)}}\right)\right) \mathrm{diag}\left(\sqrt{\frac{|\cvx_i^*|}{d^{\frac{L-2}{2}}}}\right) e^{j \boldsymbol{\phi}_i}\sqrt{\frac{\|\cvx_i^*\|_1}{d^{\frac{L-2}{2}}}},
\end{align*}
where $\boldsymbol{\phi}_i$ is defined such that $\cvx_i^*=\mathrm{diag}\left(|\cvx_i^*|\right)e^{j \boldsymbol{\phi}_i}$. Thus, we can directly set the parameters as follows
\begin{align*}
    \diag_{li}^*=\mathrm{diag} \left(d^{\frac{1}{2}}\sqrt{\frac{|\cvx_i^*|}{\|\cvx_i^*\|_1}}\right), \; \tilde{\weight}_{1i}^*=\mathrm{diag}\left(\sqrt{\frac{|\cvx_i^*|}{d^{\frac{L-2}{2}}}}\right) e^{j \boldsymbol{\phi}_i} ,\; \weightscalar_{2i}^*=\sqrt{\frac{\|\cvx_i^*\|_1}{d^{\frac{L-2}{2}}}},
\end{align*}
which can be equivalently written as
\begin{align*}
    \firstwmat_{li}^*=\vec{F}\mathrm{diag} \left(d^{\frac{1}{2}}\sqrt{\frac{|\cvx_i^*|}{\|\cvx_i^*\|_1}}\right) \vec{F}^H, \; \weight_{1i}^*=\sqrt{\frac{|\cvx_i^*|}{d^{\frac{L-2}{2}}}} ,\;
    \weightscalar_{2i}^*=\sqrt{\frac{\|\cvx_i^*\|_1}{d^{\frac{L-2}{2}}}}
\end{align*}
to exactly match with the problem formulation in \eqref{eq:circular_primal1}. We first note that $\|\prod_{l=1}^{L-2} \firstwmat_{li}^*\|_F^2=\| \prod_{l=1}^{L-2}\diag_{li}^*\|_F^2=d^{L-2}, \forall i,l$, therefore, this set of parameters is feasible for \eqref{eq:circular_primal1}. Now, we prove the optimality by showing that these parameters have the same regularization cost with the convex program in \eqref{eq:circular_final_proof} as follows
\begin{align*}
    \frac{\beta}{2} \sum_{i=1}^{P_{ccov}} \left( \|\weight_{1i}^*\|_2^2+\weightscalar_{2i}^{*^2}\right)=\frac{\beta}{d^{\frac{L-2}{2}}}\sum_{i=1}^{P_{cconv}} \|\cvx_{i}^*\|_1.
\end{align*}
The same steps can also be applied to $\cvx_i^{\prime^*}$. Then, the rest of the proof directly follows from Theorem \ref{theo:twolayer_main}. Therefore, we prove that a set of optimal layer weights for \eqref{eq:circular_primal1}, denoted as $\{ \{\firstwmat_{lj}^*\}_{l=1}^{L-2},\weight_{1j}^*, \weightscalar_{2j}^*\}_{j=1}^{m^*}$, can be obtained from the optimal solution to \eqref{eq:circular_final_proof}, denoted as $\{\cvx_i^*,\cvx_i^{\prime^*}\}_{i=1}^{P_{cconv}}$.
\qed

%%%%%%%%%%%%%%%%%%%%%%%%%%%% new subsection %%%%%%%%%%%%%%%%%%%%%%%%%%%%%%%%%%%%%%%
\subsection{Proof of Theorem \ref{theo:threelayer_main_theorem}} \label{sec:threelayer_appendix}
We now provide the proof for the three-layer CNN architecture with two ReLU layers, which has the following primal optimization problem
\begin{align*}
     p^*_3=\min_{\substack{\firstw_j,\weight_{1j},\weightscalar_{2j}\\ \firstw_j \in\ball_2}} \frac{1}{2} \left\| \sum_{j=1}^m \relu{\sum_{k=1}^K \relu{\data_k \firstw_j}\weightscalar_{1jk}} \weightscalar_{2j} - \labelvec \right\|_2^2+ \frac{\beta}{2} \sum_{j=1}^m \left(  \|\weight_{1j}\|_2^2+\weightscalar_{2j}^2 \right)
\end{align*}% 
Then, the dual is
\begin{align}
    \label{eq:threelayerv2_dual}
  p^*_3\geq  d^*_3= \max_{\dual} -\frac{1}{2} \| \dual-\labelvec\|_2^2+\frac{1}{2} \|\labelvec\|_2^2  \text{ s.t. } \max_{ \firstw,\weight_1  \in \ball_2} \left\vert \dual^T \relu{\sum_{k=1}^K\relu{\data_k \firstw}\weightscalar_{1k}}\right\vert\leq \beta,
\end{align}
Dual of \eqref{eq:threelayerv2_dual}
\begin{align}
   d_3^* \leq p_{3,\infty} = \label{eq:threelayerv2_dual_inf}
  \min_{\mu} \frac{1}{2} \left\| \int_{\firstw,\weight_1 \in \ball_2}\relu{\sum_{k=1}^K \relu{\data_k \firstw}\weightscalar_{1k}}d \mu(\firstw,\weight_1)-\labelvec \right\|_2^2+ \beta \|\mu\|_{TV},
\end{align}
where strong duality holds, i.e., $d_3^* = p_{3,\infty} $, by Proof of Proposition \ref{prop:strong_duality_twolayer}. Then, the finite equivalent is as follows
\begin{align}
   p_{3,\infty}=  p_{3}= \min_{\substack{\firstw_{j}, \weight_{1j} \in \ball_2\\ \weight_{2}}} \frac{1}{2} \left\| \sum_{j=1}^{m^*} \relu{
  \sum_{k=1}^K \relu{\data_k \firstw_{j}}\weightscalar_{1jk}}\weightscalar_{2j}-\labelvec \right\|_2^2+ \beta \|\weight_{2}\|_{1} \nonumber 
\end{align}
We now focus on a single-sided dual constraint
\begin{align}
    \label{eq:threelayerv2_dualcons1}
\max_{ \firstw,\weight_1  \in \ball_2}  \dual^T \relu{\sum_{k=1}^K\relu{\data_k \firstw}\weightscalar_{1k}}\leq \beta
\end{align}
which can be written as
{\small
\begin{align}\label{eq:threelayerv2_dualcons2}
    &\max_{\substack{S_2,S_1^k \subseteq [n]\\ S_2,S_1^k\in \mathcal{S} }}  \max_{\firstw, \weight_1  \in \ball_2}   \dual^T \diag(S_2) \sum_{k=1}^K\diag(S_1^k)\data_k \firstw \weightscalar_{1k} \text{ s.t. } (2\diag(S_2)-\vec{I}_n) \sum_{k=1}^K \diag(S_1^k)\data_k   \firstw\weightscalar_{1k} \geq 0 , \,(2\diag(S_1^k)-\vec{I}_n) \data_k   \firstw\weightscalar_{1k} \geq 0  \nonumber
    \\&=
   \max_{\mathcal{I}_k \in \{\pm 1\}}\max_{\substack{S_2,S_1^k \subseteq [n]\\ S_2,S_1^k\in \mathcal{S} }}  \max_{\weight_{1}\in \ball_2 } \max_{\|\vec{q}_k\|_2 \leq |\weightscalar_{1k}|}   \dual^T \diag(S_2)\sum_{k=1}^K\mathcal{I}_k\diag(S_1^k)\data_k \vec{q}_k \\ \nonumber &\text{ s.t. } (2\diag(S_2)-\vec{I}_n) \sum_{k=1}^K \mathcal{I}_k\diag(S_1^k)\data_k  \vec{q}_k  \geq 0 , \,(2\diag(S_1^k)-\vec{I}_n) \data_k   \vec{q}_k \geq 0 ,
\end{align}
}where we introduce the notation $\mathcal{I}_{k}\in\{\pm1\}$ to enumerate all possible sign patterns for $w_{1k}$. Since the inner maximization is convex and there exists a strictly feasible solution for fixed $\diag(S_1^k),\diag(S_2)$, and $\mathcal{I}_k$ \eqref{eq:threelayerv2_dualcons2} can also be written as
 {\small
\begin{align*}
     &\max_{\mathcal{I}_k \in \{\pm 1\}}\max_{\substack{S_2,S_1^k \subseteq [n]\\ S_2,S_1^k\in \mathcal{S} }}\min_{\substack{\boldsymbol{\alpha}_k,\boldsymbol{\gamma}\geq 0}} \max_{\weight_{1}\in \ball_2 } \max_{\|\vec{q}_k\|_2 \leq \weightscalar_{1k}}  \sum_{k=1}^K  \dual^T \diag(S_1^{k})\data_k \vec{q}_k+ \boldsymbol{\alpha}_k^T(2\diag(S_1^k)-\vec{I}_n) \data_k   \vec{q}_k+\mathcal{I}_k\boldsymbol{\gamma}^T(2\diag(S_2)-\vec{I}_n) \diag(S_1^k)\data_k  \vec{q}_k \\
     &=  \max_{\mathcal{I}_k \in \{\pm 1\}} \max_{\substack{S_2,S_1^k \subseteq [n]\\ S_2,S_1^k\in \mathcal{S} }}\min_{\substack{\boldsymbol{\alpha}_k,\boldsymbol{\gamma}\geq 0}} \max_{\weight_{1}\in \ball_2 }\sum_{k=1}^K\left\| \dual^T \diag(S_1^{k})\data_k \vec{q}_k+ \boldsymbol{\alpha}_k^T(2\diag(S_1^k)-\vec{I}_n) \data_k   \vec{q}_k+\mathcal{I}_k\boldsymbol{\gamma}^T(2\diag(S_2)-\vec{I}_n) \diag(S_1^k)\data_k\right\|_2 |\weightscalar_{1k}|\\
      &=   \max_{\mathcal{I}_k \in \{\pm 1\}} \max_{\substack{S_2,S_1^k \subseteq [n]\\ S_2,S_1^k\in \mathcal{S} }}\min_{\substack{\boldsymbol{\alpha}_k,\boldsymbol{\gamma}\geq 0}} \left(\sum_{k=1}^K\left\| \dual^T \diag(S_1^{k})\data_k \vec{q}_k+ \boldsymbol{\alpha}_k^T(2\diag(S_1^k)-\vec{I}_n) \data_k   \vec{q}_k+\mathcal{I}_k\boldsymbol{\gamma}^T(2\diag(S_2)-\vec{I}_n) \diag(S_1^k)\data_k\right\|_2^2 \right)^{\frac{1}{2}} ,
\end{align*}
Then, we have
\begin{align*}
    &\eqref{eq:threelayerv2_dualcons1} \iff   \\ &\forall i ,j, \min_{\substack{\boldsymbol{\alpha}_k,\boldsymbol{\gamma}\geq 0}} \left(\sum_{k=1}^K\left\| \dual^T \diag(S_{1i}^{k})\data_k \vec{q}_k+ \boldsymbol{\alpha}_k^T(2\diag(S_{1i}^k)-\vec{I}_n) \data_k   \vec{q}_k+\mathcal{I}_{ijk}\boldsymbol{\gamma}^T(2\diag(S_{2j})-\vec{I}_n) \diag(S_{1i}^k)\data_k\right\|_2^2 \right)^{\frac{1}{2}} \leq \beta \\
&\iff \\ &\forall i,j, \, \exists \boldsymbol{\alpha}_{ijk}, \boldsymbol{\gamma}_{ij}  \geq 0 \text{ s.t. }  \left(\sum_{k=1}^K\left\| \dual^T \diag(S_{1i}^{k})\data_k \vec{q}_k+ \boldsymbol{\alpha}_{ijk}^T(2\diag(S_{1i}^k)-\vec{I}_n) \data_k   \vec{q}_k+\mathcal{I}_{ijk}\boldsymbol{\gamma}_{ij}^T(2\diag(S_{2j})-\vec{I}_n) \diag(S_{1i}^k)\data_k\right\|_2^2 \right)^{\frac{1}{2}} \leq \beta.
\end{align*}
We now use the same approach for the two-sided constraint as follows
\begin{align}
    \label{eq:threelayerv2_dual2}
    &\max_{\substack{\dual\\ \boldsymbol{\alpha}_{ijk},\boldsymbol{\alpha}_{ijk}^\prime \geq 0\\ \boldsymbol{\gamma}_{ij},\boldsymbol{\gamma}_{ij}^\prime \geq 0}} -\frac{1}{2} \| \dual-\labelvec\|_2^2+\frac{1}{2} \|\labelvec\|_2^2   \\ \nonumber
    &\text{ s.t. } \left(\sum_{k=1}^K\left\| \dual^T \diag(S_{1i}^{k})\data_k \vec{q}_k+ \boldsymbol{\alpha}_{ijk}^T(2\diag(S_{1i}^k)-\vec{I}_n) \data_k   \vec{q}_k+\mathcal{I}_{ijk}\boldsymbol{\gamma}_{ij}^T(2\diag(S_{2j})-\vec{I}_n) \diag(S_{1i}^k)\data_k\right\|_2^2 \right)^{\frac{1}{2}}\leq \beta, \\ \nonumber&  \left(\sum_{k=1}^K\left\| -\dual^T \diag(S_{1i}^{k})\data_k \vec{q}_k+ \boldsymbol{\alpha}_{ijk}^T(2\diag(S_{1i}^k)-\vec{I}_n) \data_k   \vec{q}_k+\mathcal{I}_{ijk}\boldsymbol{\gamma}_{ij}^T(2\diag(S_{2j})-\vec{I}_n) \diag(S_{1i}^k)\data_k\right\|_2^2 \right)^{\frac{1}{2}} \leq \beta, \,\forall i,j.
\end{align}}
We note that the above problem is convex and strictly feasible for $\dual=\boldsymbol{\alpha}_{ijk}=\boldsymbol{\alpha}_{ijk}^\prime=\boldsymbol{\gamma}_{ij}=\boldsymbol{\gamma}_{ij}^\prime=0$. Therefore, Slater's conditions and consequently strong duality holds \citep{boyd_convex}, and \eqref{eq:threelayerv2_dual2} can be written as
\begin{align}
    \label{eq:threelayerv2_dual3}
    &\min_{\lambda_{ij},\lambda_{ij}^\prime \geq 0}\max_{\substack{\dual\\ \boldsymbol{\alpha}_{ijk},\boldsymbol{\alpha}_{ijk}^\prime \geq 0\\ \boldsymbol{\gamma}_{ij},\boldsymbol{\gamma}_{ij}^\prime \geq 0}} -\frac{1}{2} \| \dual-\labelvec\|_2^2+\frac{1}{2} \|\labelvec\|_2^2  \\\nonumber &+ \sum_{i=1}^{P_1}\sum_{j=1}^{P_2} \lambda_{ij}\left( \beta-  \left(\sum_{k=1}^K\left\| \dual^T \diag(S_{1i}^{k})\data_k \vec{q}_k+ \boldsymbol{\alpha}_{ijk}^T(2\diag(S_{1i}^k)-\vec{I}_n) \data_k   \vec{q}_k+\mathcal{I}_{ijk}\boldsymbol{\gamma}_{ij}^T(2\diag(S_{2j})-\vec{I}_n) \diag(S_{1i}^k)\data_k\right\|_2^2 \right)^{\frac{1}{2}} \right) \nonumber \\ \nonumber &+ \sum_{i=1}^{P_1}\sum_{j=1}^{P_2} \lambda_{ij}^\prime \left( \beta-  \left(\sum_{k=1}^K\left\| -\dual^T \diag(S_{1i}^{k})\data_k \vec{q}_k+ \boldsymbol{\alpha}_{ijk}^T(2\diag(S_{1i}^k)-\vec{I}_n) \data_k   \vec{q}_k+\mathcal{I}_{ijk}\boldsymbol{\gamma}_{ij}^T(2\diag(S_{2j})-\vec{I}_n) \diag(S_{1i}^k)\data_k\right\|_2^2 \right)^{\frac{1}{2}} \right).
\end{align}
Next, we introduce new variables $\vec{z}_{ijk}, \vec{z}_{ijk}^\prime \in \mathbb{R}^h$ to represent \eqref{eq:threelayerv2_dual3} as
\begin{align}
    \label{eq:threelayerv2_dual4}
    &\min_{\lambda_{ij},\lambda_{ij}^\prime \geq 0}\max_{\substack{\dual\\ \boldsymbol{\alpha}_{ijk},\boldsymbol{\alpha}_{ijk}^\prime \geq 0\\ \boldsymbol{\gamma}_{ij},\boldsymbol{\gamma}_{ij}^\prime \geq 0}}   \min_{\substack{\vec{z}_{ijk}: \|\vec{z}_{ijk}\|_2 \leq q_{ijk} \\ \vec{z}_{ijk}^\prime: \|\vec{z}_{ijk}^\prime  \|_2 \leq q_{ijk}^\prime \\\vec{q}_{ij},\vec{q}_{ij}^\prime \in \ball_2  }}  -\frac{1}{2}\|\dual-\labelvec\|_2^2+\frac{1}{2} \|\labelvec\|_2^2  \\\nonumber &+ \sum_{i=1}^{P_1}\sum_{j=1}^{P_2}\lambda_{ij}\left( \beta+ \sum_{k=1}^K\left( \dual^T \diag(S_{1i}^{k})\data_k \vec{q}_k+ \boldsymbol{\alpha}_{ijk}^T(2\diag(S_{1i}^k)-\vec{I}_n) \data_k   \vec{q}_k+\mathcal{I}_{ijk}\boldsymbol{\gamma}_{ij}^T(2\diag(S_{2j})-\vec{I}_n) \diag(S_{1i}^k)\data_k  \right)\vec{z}_{ijk} \right)  \nonumber\\  \nonumber &+\sum_{i=1}^{P_1}\sum_{j=1}^{P_2}\lambda_{ij}^\prime \left( \beta+\sum_{k=1}^K\left( -\dual^T \diag(S_{1i}^{k})\data_k \vec{q}_k+ \boldsymbol{\alpha}_{ijk}^T(2\diag(S_{1i}^k)-\vec{I}_n) \data_k   \vec{q}_k+\mathcal{I}_{ijk}\boldsymbol{\gamma}_{ij}^T(2\diag(S_{2j})-\vec{I}_n) \diag(S_{1i}^k)\data_k \right) \vec{z}_{ijk}^\prime \right).
\end{align}
Then, the strong dual of the problem \eqref{eq:threelayerv2_dual4} as
\begin{align}
    \label{eq:threelayerv2_dual5}
    &\min_{\lambda_{ij},\lambda_{ij}^\prime \geq 0}\min_{\substack{\vec{z}_{ijk}: \|\vec{z}_{ijk}\|_2 \leq q_{ijk} \\ \vec{z}_{ijk}^\prime: \|\vec{z}_{ijk}^\prime  \|_2 \leq q_{ijk}^\prime \\\vec{q}_{ij},\vec{q}_{ij}^\prime \in \ball_2  }} \max_{\substack{\dual\\ \boldsymbol{\alpha}_{ijk},\boldsymbol{\alpha}_{ijk}^\prime \geq 0\\ \boldsymbol{\gamma}_{ij},\boldsymbol{\gamma}_{ij}^\prime \geq 0}}   -\frac{1}{2}\|\dual-\labelvec\|_2^2+\frac{1}{2} \|\labelvec\|_2^2  \\\nonumber &+ \sum_{i=1}^{P_1}\sum_{j=1}^{P_2}\lambda_{ij}\left( \beta+ \sum_{k=1}^K\left( \dual^T \diag(S_{1i}^{k})\data_k \vec{q}_k+ \boldsymbol{\alpha}_{ijk}^T(2\diag(S_{1i}^k)-\vec{I}_n) \data_k   \vec{q}_k+\mathcal{I}_{ijk}\boldsymbol{\gamma}_{ij}^T(2\diag(S_{2j})-\vec{I}_n) \diag(S_{1i}^k)\data_k  \right)\vec{z}_{ijk} \right)  \nonumber\\  \nonumber &+\sum_{i=1}^{P_1}\sum_{j=1}^{P_2}\lambda_{ij}^\prime \left( \beta+\sum_{k=1}^K\left( -\dual^T \diag(S_{1i}^{k})\data_k \vec{q}_k+ \boldsymbol{\alpha}_{ijk}^T(2\diag(S_{1i}^k)-\vec{I}_n) \data_k   \vec{q}_k+\mathcal{I}_{ijk}\boldsymbol{\gamma}_{ij}^T(2\diag(S_{2j})-\vec{I}_n) \diag(S_{1i}^k)\data_k \right) \vec{z}_{ijk}^\prime \right).
\end{align}
Now, we can compute the maximum with respect to $\dual,\boldsymbol{\alpha}_{ijk},\boldsymbol{\alpha}_{ijk}^\prime,\boldsymbol{\gamma}_{ij},\boldsymbol{\gamma}_{ij}^\prime$ analytically to obtain the following problem
\begin{align}
    \label{eq:threelayerv2_dual6}
    &\min_{\lambda_{ij},\lambda_{ij}^\prime \geq 0}\min_{\substack{\vec{z}_{ijk}: \|\vec{z}_{ijk}\|_2 \leq q_{ijk} \\ \vec{z}_{ijk}^\prime: \|\vec{z}_{ijk}^\prime  \|_2 \leq q_{ijk}^\prime \\\vec{q}_{ij},\vec{q}_{ij}^\prime \in \ball_2  }}  \frac{1}{2} \left\|\sum_{j=1}^{P_2} \diag_{S_{2j}}\sum_{i=1}^{P_1}\sum_{k=1}^K \mathcal{I}_{ijk}\diag(S_{1i}^{k})\data_{k} \left( \lambda_{ij}^\prime\vec{z}_{ijk}^\prime-\lambda_{ij}\vec{z}_{ijk} \right)-\labelvec\right\|_2^2+ \beta\sum_{i=1}^{P_1}\sum_{j=1}^{P_2}\left(\lambda_{ij}+\lambda_{ij}^\prime \right) \\ \nonumber
&\text{s.t. }(2\diag_(S_{2j})-\vec{I}_n)\sum_{i=1}^{P_1} \sum_{k=1}^K \mathcal{I}_{ijk}\diag(S_{1i}^{k})\data_{k}\vec{z}_{ijk} \geq 0 ,\,(2\diag(S_{1i}^{k})-\vec{I}_n) \data_{k} \vec{z}_{ijk}  \geq 0, \forall i,j,k \\ \nonumber&(2\diag(S_{2j})-\vec{I}_n)\sum_{i=1}^{P_1} \sum_{k=1}^K \mathcal{I}_{ijk}\diag(S_{1i}^{k})\data_{k}\vec{z}_{ijk}^\prime \geq 0 ,\,(2\diag(S_{1i}^{k})-\vec{I}_n) \data_{k} \vec{z}_{ijk}^\prime  \geq 0, \forall i,j,k.
\end{align}
Now we apply a change of variables and define $\cvx_{ijk} = \lambda_{ij} \vec{z}_{ijk}$ and $\cvx_{ijk}^\prime=\lambda_{ij}^\prime \vec{z}_{ijk}^\prime$. Thus, we obtain
\begin{align*}
    %\label{eq:threelayerv2_final}
    & \min_{\cvx_{ijk},\cvx_{ijk}^\prime} \frac{1}{2}\left\| \sum_{j=1}^{P_2} \diag_{S_{2j}}\sum_{i=1}^{P_1}\sum_{k=1}^K \mathcal{I}_{ijk}\diag(S_{1i}^{k})\data_{k}  \left( \cvx_{ijk}^\prime-\cvx_{ijk}\right)-\labelvec\right\|_2^2+ \beta \sum_{i=1}^{P_1}\sum_{j=1}^{P_2} \left(\|\cvxmat_{ij}\|_F+\|\cvxmat_{ij}\|_F \right)  \\\nonumber
&\text{s.t. }(2\diag(S_{2j})-\vec{I}_n)\sum_{i=1}^{P_1} \sum_{k=1}^K \mathcal{I}_{ijk}\diag(S_{1i}^{k})\data_{k}\cvx_{ijk} \geq 0 ,\,(2\diag(S_{1i}^{k})-\vec{I}_n) \data_{k} \cvx_{ijk}  \geq 0, \forall i,j,k \\ \nonumber&(2\diag(S_{2j})-\vec{I}_n)\sum_{i=1}^{P_1} \sum_{k=1}^K \mathcal{I}_{ijk}\diag(S_{1i}^{k})\data_{k}\cvx_{ijk}^\prime \geq 0 ,\,(2\diag(S_{1i}^{k})-\vec{I}_n) \data_{k} \cvx_{ijk}^\prime  \geq 0, \forall i,j,k.
\end{align*}
\qed

\end{document}